\documentclass{article}

\usepackage[preprint]{neurips_2026}

\usepackage[utf8]{inputenc}
\usepackage[T1]{fontenc}
\usepackage{hyperref}
\usepackage{url}
\usepackage{booktabs}
\usepackage{amsmath,amssymb,amsfonts,amsthm}
\usepackage{bm}
\usepackage{nicefrac}
\usepackage{microtype}
\usepackage{xcolor}
\usepackage{enumitem}

\usepackage{xcolor}
\usepackage{amsmath,amssymb,amsthm,mathtools}
\usepackage{algorithm}
\usepackage{algpseudocode}
\usepackage{caption}
\usepackage{enumitem}
\usepackage{titletoc}

\usepackage{graphicx}
\usepackage{subcaption}
\newcommand{\bA}{\mathbf{A}}
\newcommand{\be}{\mathbf{e}}
\newcommand{\ba}{\mathbf{a}}

\newcommand{\bH}{\mathbf{H}}
\newcommand{\bR}{\mathbf{R}}
\newcommand{\bQ}{\mathbf{Q}}
\newcommand{\bSigma}{\mathbf{\Sigma}}
\newcommand{\bOmega}{\mathbf{\Omega}}
\newcommand{\bTheta}{\mathbf{\Theta}}
\newcommand{\bDelta}{\mathbf{\Delta}}
\newcommand{\balpha}{\bm \alpha}
\newcommand{\bbeta}{\bm \beta}
\newcommand{\bd}{\mathbf{d}}
\newcommand{\bq}{\mathbf{q}}
\newcommand{\bu}{\mathbf{u}}
\newcommand{\bI}{\mathbf{I}}
\newcommand{\bM}{\mathbf{M}}

\newcommand{\bV}{\mathbf{V}}
\newcommand{\bW}{\mathbf{W}}
\newcommand{\bN}{\mathbf{N}}
\newcommand{\bK}{\mathbf{K}}
\newcommand{\supp}{\mathrm{supp}}
\newcommand{\row}{\mathrm{row}}

\newcommand{\rank}{\mathrm{rank}}
\newcommand{\bB}{\mathbf{B}}
\newcommand{\bG}{\mathbf{G}}
\newcommand{\bD}{\mathbf{D}}
\newcommand{\bS}{\mathbf{S}}
\newcommand{\bP}{\mathbf{P}}
\newcommand{\bbR}{\mathbb{R}}
\newcommand{\bbE}{\mathbb{E}}
\newcommand{\cT}{\mathcal{T}}
\newcommand{\cG}{\mathcal{G}}
\newcommand{\cJ}{\mathcal{J}}
\newcommand{\cS}{\mathcal{S}}
\newcommand{\cP}{\mathcal{P}}
\newcommand{\cR}{\mathcal{R}}
\newcommand{\cI}{\mathcal{I}}
\newcommand{\cV}{\mathcal{V}}
\newcommand{\cA}{\mathcal{A}}
\newcommand{\cC}{\mathcal{C}}
\newcommand{\cE}{\mathcal{E}}
\newcommand{\cD}{\mathcal{D}}
\newcommand{\cB}{\mathcal{B}}
\newcommand{\rblk}{\mathrm{blk}}

\newtheorem{assumption}{Assumption}
\newtheorem{theorem}{Theorem}
\newtheorem{lemma}{Lemma}
\newtheorem{corollary}{Corollary}
\newtheorem{definition}{Definition}
\newtheorem{remark}{Remark}
\newtheorem{proposition}{Proposition}

\title{Jigsaw-CRL: Recovering Global Latent Causal Order from Fragmented Multi-Client Interventions}

\author{%
  Haijie Xu, Chen Zhang \\
  Tsinghua University, Department of Industrial Engineering \\
  \texttt{xu-hj22@mails.tsinghua.edu.cn}
}

\author{%
 Haijie Xu\\
 Department of Industrial Engineering\\
 Tsinghua University\\
 Beijing 100084, China  \\
 \texttt{xu-hj22@mails.tsinghua.edu.cn} \\
 \And 
 Chen Zhang\thanks{Corresponding author}\\
 Department of Industrial Engineering\\
 Tsinghua University\\
 Beijing 100084, China  \\
\texttt{zhangchen01@tsinghua.edu.cn} \\
}

\newcommand{\chen}[1]{{\color{red}{\bf{Chen says:}} \emph{#1}}}

\begin{document}

\maketitle

\begin{abstract}
Causal representation learning (CRL) aims to recover latent causal variables and their structural relations from high-dimensional observations. Existing CRL methods typically assume that all environments are defined over the same latent variables, or at least share a common latent representation space. We study a fragmented multi-client setting, where multiple clients interact with the same global latent causal system but each client only accesses and intervenes on a subset of the latent variables. In this regime, marginalizing unused latent variables induces bidirected edges, so a single client no longer admits a node-wise latent causal graph, and the global latent causal order must be recovered by assembling client-specific structural fragments. We propose \textbf{Jigsaw-CRL}, a framework for recovering global latent causal order from such fragmented interventions. Under soft interventions, differences between precision matrices across environments exhibit a low-rank structure governed by latent ancestor relations. This enables recovery, for each client, of a block partition, the corresponding block-level ancestral order, and latent subspaces, and then assembly of these fragments into the global node-level latent causal order. We establish identifiability guarantees, develop practical algorithms, and validate the framework on synthetic data. Our codes are available on \url{https://anonymous.4open.science/r/code-for-Jigsaw-CRL-7B26}
\end{abstract}

\section{Introduction}
\label{sec:intro}

Causal representation learning (CRL) aims to recover high-level latent causal variables and their structural relations from high-dimensional observations 
\citep{scholkopf2021toward,jin2024learning,buchholz2023learning,varici2024general}. 
Most existing CRL frameworks assume that all observations are defined over the same latent variables, or at least share a common latent representation space \citep{zhang2024causal,buchholz2023learning}. 
Under this shared-space assumption, the natural target is typically a node-level latent causal structure, or a standard equivalence class thereof. We call this single-client setting.
In many multi-client systems, however, no single client has access to the full latent causal system. 
Instead, multiple clients interact with the same global latent causal system, while each client accesses and intervenes on only a subset of the latent variables. 
Each client therefore provides only a fragmented structural view of the global system, as illustrated in Figure~\ref{fig:intro1}. We call
this multi-client setting. 
This raises a basic question: \textbf{can we recover the global latent causal order by assembling fragmented causal information from multiple clients?}

This question is not a straightforward multi-client extension of existing CRL, because fragmented access changes the local causal object itself. 
From the perspective of a client, the latent variables outside its accessed subset are marginalized out, which can induce directed shortcuts and bidirected dependencies among the remaining latent variables. 
Thus, the client-specific latent structure is no longer an ordinary DAG, and node-wise latent causal recovery is generally impossible from a single fragmented client. 
This differs from standard CRL methods \citep{scholkopf2021toward,squires2023linear,buchholz2023learning,varici2024general,zhang2024causal}, which assume a shared latent variable set across environments. 
It also differs from recent multi-domain or multi-view CRL works \citep{sturma2023unpaired,yao2024multi}: \citet{sturma2023unpaired} assume causally separated shared and client-specific components, as illustrated in Figure~\ref{fig:intro3}, while \citet{yao2024multi} allow subset-dependent views but focus on shared latent information rather than ancestral relations or global causal structure. 
This leads to a first question: \textbf{when missing latent variables induce bidirected dependencies through marginalization, what causal structure remains identifiable to a fragmented client?}

We show that the natural client-level identifiable object is not a node-wise causal graph, but a block-structured mixed object: a block partition induced by directed ancestry and bidirected dependencies, together with a block-level ancestral order. 
Nodes within a block are structurally entangled for a single client, whereas ancestral relations between blocks remain identifiable. 
To recover this object, we propose \textbf{Jigsaw-CRL}, which exploits the fact that single-node soft interventions induce low-rank precision perturbations whose row spaces follow block-level ancestor relations. 
Jigsaw-CRL first extracts from each client a block-level causal fragment, including locally inseparable latent groups, their block-level order, and the associated latent subspaces. 
It then assembles complementary fragments across clients: a Hasse edge hidden inside a non-singleton block for one client may appear as a singleton-block parent relation for another, enabling recovery of the global node-level latent causal order. 
Thus, Jigsaw-CRL yields a two-level identifiability picture: \emph{each client reveals a block-level causal fragment, while multiple complementary fragments recover the global node-level causal order}. 

\textbf{Our Contributions.} \textbf{1) Fragmented multi-client CRL.} We introduce a setting where multiple clients partially access and intervene on the same global latent causal system, and show that marginalizing unused latent variables changes each client's local causal object from a DAG to a block-structured mixed graph; \textbf{2) Client-level block identifiability.} We study client-level identifiability in the presence of bidirected dependencies induced by latent marginalization, a regime not covered by prior CRL methods. We prove that soft interventions induce low-rank precision perturbations that identify each client's block partition, block-level ancestral order, and latent subspaces up to intrinsic block-level ambiguity in Theorem~\ref{thm:client-main} and Corollary~\ref{cor:zk}; \textbf{3) Local-to-global assembly.} We show that complementary client fragments can recover the global latent ancestral relation in Theorem~\ref{thm:global}: under a cross-client coverage condition, each direct step in the global causal order is resolved by at least one client;  \textbf{4) Algorithms and experiments.} We develop recovery algorithms based on projected low-rank precision perturbations and validate Jigsaw-CRL on synthetic fragmented multi-client systems.

\textbf{Organization.} Due to space constraints, we defer related work to Appendix~\ref{app:literature}. 
Section~\ref{sec:setup} introduces the fragmented multi-client CRL setting and the induced block structure. 
Sections~\ref{sec:client} and~\ref{sec:global} establish client-level identifiability and global assembly, respectively. 
Section~\ref{sec:experiments} reports experimental results, and Section~\ref{sec:conslusion} concludes.

\begin{figure}[t]
    \centering
    \begin{subfigure}{0.58\linewidth}
        \centering
        \includegraphics[width=\linewidth]{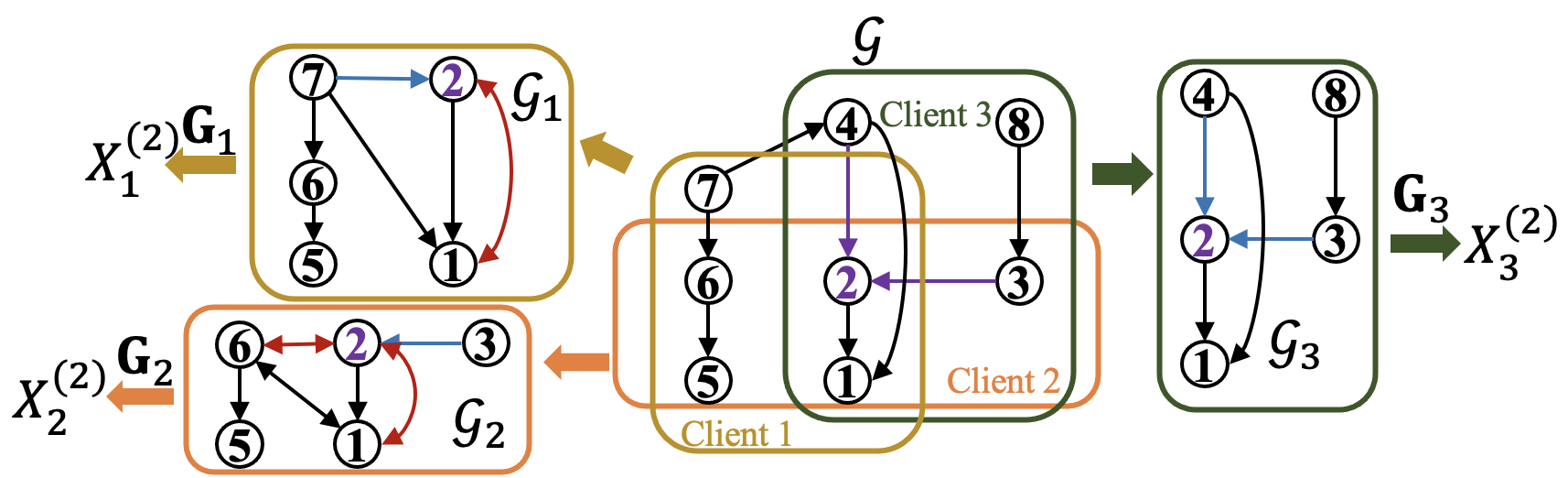}
        \caption{Multi-client setting (our setting)}
        \label{fig:intro1}
    \end{subfigure}
    \begin{subfigure}{0.39\linewidth}
        \centering
        \includegraphics[width=\linewidth]{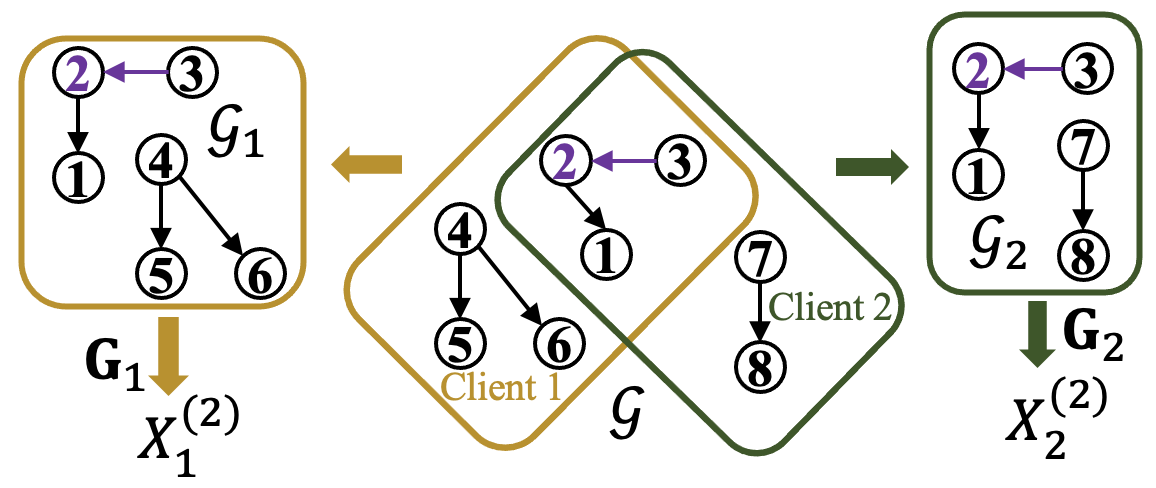}
        \caption{Multi-domain setting}
        \label{fig:intro3}
    \end{subfigure}

    \caption{
   \textbf{Fragmented client views of latent causal systems under different multi-client settings.}
All nodes in the figure represent latent variables. For client $k$, $X_k^{(j)}$ denotes the observed data after intervention $j$. \textcolor{purple}{Purple} highlights the intervened node $2$ together with the edges affected by the intervention.
(a) \textbf{Multi-client setting.} Different clients use different subsets of latent variables from the same global causal system. 
    Marginalizing the missing latent variables induces client-specific mixed graphs, where bidirected edges appear in addition to directed relations. 
    In a client-specific induced graph, the intervention on node $2$ may manifest as changes in directed edges (\textcolor{blue}{blue}) and bidirected edges (\textcolor{red}{red}), corresponding to $\bd_k^{(j)}$ and $\bu_k^{(j)}$ in Assumption \ref{ass:non-degeneracy}. 
(b) \textbf{Multi-domain setting.} Adopted in \cite{sturma2023unpaired}, latent variables are partitioned into a shared global component (e.g., $\{1,2,3\}$) and client-specific local components (e.g., $\{4,5,6\}$ and $\{7,8\}$). No edges exist across different local components, nor between global and local components. Consequently, client-specific induced graphs do not generate bidirected edges or extra directed edges after marginalization. This can be viewed as a special case of our more general setting.}
    
    \label{fig:intro}
\end{figure}
\section{Problem Setup and Client-Specific Block Structure}
\label{sec:setup}
This section formalizes the fragmented multi-client CRL setting and identifies the causal object that is locally visible to each client. We first define the global latent causal system and the client-specific observation model. We then show that marginalizing the latent variables unused by a client induces a mixed graph rather than a DAG. This motivates a client-specific block structure, which will be the basic unit of local identifiability in the sequel.

\textbf{Notation.} In the rest of the paper, we use non-bold lowercase letters to denote scalars (e.g., $p$), bold lowercase letters to denote vectors (e.g., $\bd$), non-bold uppercase letters to denote random variables (e.g., $Z$), and bold uppercase letters to denote matrices (e.g., $\bA$). For an index set $S$, $[\bA]_{S,:}$ and $[\bA]_{:,S}$ denote the submatrices of $\bA$ consisting of the rows and columns indexed by $S$, respectively, and $[\bd]_S$ denotes the subvector of $\bd$ consisting of the elements indexed by $S$. We use $\mathrm{row}(\{\bA_1,\dots,\bA_p\})$ to denote the row space jointly spanned by the rows of the matrices $\bA_1,\dots,\bA_p$, and use $\mathrm{rank}(\bA)$ to denote the rank of the matrix $\bA$. We use $\langle \ba_i| i = 1,\dots,p \rangle$ to denote the space spanned by $\{\ba_i \}_{i=1}^p$. For a positive integer $p$, let $[p] \triangleq \{1,\dots,p\}$.

\subsection{Problem formulation: fragmented multi-client CRL}

\textbf{Global latent SEM.}
Let $Z=(Z_{(1)},\dots,Z_{(p)})$ denote the global latent variables, generated by the linear structural equation model
\begin{equation}
\label{eq:gloabl latent}
    Z = \bA Z + E.
\end{equation}
Since $Z$ is latent, we may assume without loss of generality that $Z_j \to Z_i$ implies $j>i$. Under this ordering, $\bA\in\bbR^{p\times p}$ is strictly upper triangular. The noise vector $E$ has mean zero and diagonal covariance matrix $\bSigma\triangleq Cov(E)$.
The matrix $\bA$ defines a global latent DAG $\cG=(\cV,\cE)$ with $\cV=[p]$ and $\cE=\{j\to i:\ A_{ij}\neq 0\}$. It induces the global ancestral relation $\prec$ on $\cV$: we write $i\prec j$ if there exists a directed path from $j$ to $i$ in $\cG$, and write $i\preceq j$ if either $i\prec j$ or $i=j$.

\textbf{Fragmented clients.}
We consider $K$ clients. For client $k\in[K]$, the index set $O_k\subset[p]$ specifies the latent variables directly used by client $k$, and $M_k=[p]\setminus O_k$ denotes the latent variables not directly used by this client. Let
$Z_k\triangleq (Z_{(i)})_{i\in O_k}\in\bbR^{|O_k|},
p_k\triangleq |O_k|$. The client-specific observation is
\begin{equation}
\label{eq:X = GZ}
X_k = \bG_k Z_k,
\end{equation}
where $\bG_k\in\bbR^{n_k\times p_k}$ is full column rank with $n_k>p_k$. We write $\bH_k=\bG_k^\dagger$ for its Moore--Penrose pseudoinverse. Thus all clients share the same global latent system, but each client directly uses only a subset of the latent variables.

\textbf{Soft interventions.}
For each client $k$, we observe one observational regime indexed by $0$ and multiple soft interventional regimes indexed by $j\in\cJ_k\subset[p]$. We assume that client $k$ only intervenes on nodes in $O_k$, and that each intervention targets a single node. Let $t_j$ denote the unknown target node of intervention $j$. We model the corresponding intervention on the global latent system by
\begin{equation*}
\begin{aligned}
    \Tilde{\bA}_k^{(j)} = \bA + \be_{t_j} {\Delta\ba_k^{(j)}}^T, \qquad 
    \Tilde{\bSigma}_k^{(j)} = \bSigma + \Delta\sigma_k^{(j)} \be_{t_j}\be_{t_j}^T,
\end{aligned}
\end{equation*}
where $\be_{t_j}$ is the vector whose $t_j$-th entry is $1$ and all others are $0$, ${\Delta\ba_k^{(j)}}\in\bbR^p$ is supported on $pa(t_j)$ and represents the intervention-induced perturbation of the parent effects of node $t_j$, and $\Delta\sigma_k^{(j)}\in\bbR$ perturbs its exogenous noise variance. We write $X_k^{(j)}$ for the observation of client $k$ under intervention $j$.
Throughout the paper, we work with the second-order information
$\bTheta_k^{(j)}\triangleq \bbE(X_k^{(j)}{X_k^{(j)}}^\top)^\dagger$ for $j\in\cJ_k\cup\{0\}$, where $X_k^{(0)}\triangleq X_k$ denotes the observational regime.

\textbf{Recovery goal.}
Given the collection of precision matrices
$\{\bTheta_k^{(j)}\}_{k=1,\dots,K,\;j\in\cJ_k\cup\{0\}}$, the fragmented multi-client CRL problem has two recovery levels. At the client level, we first characterize and recover the structural information that remains identifiable from each client's fragmented access to the global latent system. At the global level, we assemble these client-specific fragments across $k=1,\ldots,K$ to recover the global latent ancestral relation $\prec$.

\subsection{Client-specific induced graph and block structure}

Since client $k$ directly uses only $Z_k=Z_{O_k}$, the variables in $M_k=[p]\setminus O_k$ are marginalized out from this client's perspective. 
This marginalization has two effects. First, directed paths that pass through variables in $M_k$ can appear as directed shortcuts among variables in $O_k$. 
Second, shared dependence through the marginalized variables can induce correlated effective noises, represented as bidirected edges. After the normalization discussed in Appendix \ref{app:normalize_sigma}, we work with $\bSigma=\bI$.
Thus the client-level latent object is generally a mixed graph rather than a DAG.
\begin{proposition}
\label{prop:margin Z}
For $Z$ generated by Eq.~\eqref{eq:gloabl latent}, let $Z_k=(Z_{(i)})_{i\in O_k}$. Then
\begin{equation}
Z_k = \bA_k Z_k + E_k,
\end{equation}
where
\begin{align}
\label{eq:Ak}
    \bA_k
    &=
    [\bA]_{O_k,O_k}
    +
    [\bA]_{O_k,M_k}
    (\bI-[\bA]_{M_k,M_k})^{-1}
    [\bA]_{M_k,O_k}, \\
    E_k
    &=
    E_{O_k}
    +
    [\bA]_{O_k,M_k}
    (\bI-[\bA]_{M_k,M_k})^{-1}
    E_{M_k}.
\end{align}
Moreover, $E_k$ has mean zero and covariance
\begin{equation}
\label{eq:Sigmak}
    \bSigma_k
    =
    [\bSigma]_{O_k,O_k}
    +
    [\bA]_{O_k,M_k}
    (\bI-[\bA]_{M_k,M_k})^{-1}
    [\bSigma]_{M_k,M_k}
    (\bI-[\bA]_{M_k,M_k})^{-T}
    [\bA]_{O_k,M_k}^{T}.
\end{equation}
\end{proposition}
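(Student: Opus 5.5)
The plan is a block elimination (Schur-complement style) of the structural equations, splitting coordinates according to the partition $[p]=O_k\cup M_k$. After permuting indices so that $O_k$ comes first, Eq.~\eqref{eq:gloabl latent} becomes the coupled system
\begin{align*}
Z_{O_k} &= [\bA]_{O_k,O_k} Z_{O_k} + [\bA]_{O_k,M_k} Z_{M_k} + E_{O_k},\\
Z_{M_k} &= [\bA]_{M_k,O_k} Z_{O_k} + [\bA]_{M_k,M_k} Z_{M_k} + E_{M_k}.
\end{align*}
Reindexing rows and columns by the same permutation does not change the submatrices $[\bA]_{S,T}$. It therefore suffices to solve the second equation for $Z_{M_k}$ and substitute the result into the first.

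\textbf{Step 1 (invertibility).} I will show that $\bI-[\bA]_{M_k,M_k}$ is invertible. The matrix $\bA$ is strictly upper triangular in the original ordering. Any principal submatrix taken with indices in increasing order is again strictly upper triangular, so $[\bA]_{M_k,M_k}$ is nilpotent and $\bI-[\bA]_{M_k,M_k}$ has unit determinant. This is the only step needing an actual structural argument, namely acyclicity of $\cG$, and it is the main place where care is needed.

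\textbf{Step 2 (elimination).} Solving the second equation gives
\[
Z_{M_k}=(\bI-[\bA]_{M_k,M_k})^{-1}\big([\bA]_{M_k,O_k}Z_{O_k}+E_{M_k}\big).
\]
Substituting into the first equation and grouping the terms linear in $Z_{O_k}$ yields $Z_k=\bA_kZ_k+E_k$, with $\bA_k$ and $E_k$ exactly as in Eq.~\eqref{eq:Ak}.

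\textbf{Step 3 (moments).} The vector $E_k$ is a fixed linear map of $E$, namely $E_k=\bB E$ with $\bB=\big[\,\bI\ \ \ [\bA]_{O_k,M_k}(\bI-[\bA]_{M_k,M_k})^{-1}\,\big]$ in the $(O_k,M_k)$ ordering. It follows that $\bbE E_k=\bB\,\bbE E=0$ and $\mathrm{Cov}(E_k)=\bB\bSigma\bB^\top$. Because $\bSigma$ is diagonal, the off-diagonal blocks $[\bSigma]_{O_k,M_k}$ vanish. The product therefore reduces to the two terms in Eq.~\eqref{eq:Sigmak}, using $((\bI-[\bA]_{M_k,M_k})^{-1})^\top=(\bI-[\bA]_{M_k,M_k})^{-T}$.

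No earlier results are needed beyond the model definition. The rest is routine block algebra, so the obstacle noted in Step 1 is mild.
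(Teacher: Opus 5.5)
Your proposal is correct and follows the paper's proof essentially step for step: the same $(O_k,M_k)$ block split, invertibility of $\bI-[\bA]_{M_k,M_k}$ from strict upper triangularity of the principal submatrix, elimination of $Z_{M_k}$, and the covariance computation using $\mathrm{Cov}(E_{O_k},E_{M_k})=\mathbf 0$ because $\bSigma$ is diagonal. The appendix version of the proposition also proves that $\prec_k$ is the restriction of $\prec$ to $O_k$, but that claim is not part of the statement you were asked to prove.
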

Proposition~\ref{prop:margin Z} shows that after marginalizing out $M_k$, client $k$ no longer inherits a latent DAG on $O_k$. Instead, it induces a mixed graph $\cG_k=(\cV_k,\cE_k,\cD_k)$ with $\cV_k=O_k$, directed edges
$\cE_k=\{j\to i:\ [\bA_k]_{\rho_k(i),\rho_k(j)}\neq 0\}$, and bidirected edges
$\cD_k=\{j\leftrightarrow i:i\neq j,\ [\bSigma_k]_{\rho_k(i),\rho_k(j)}\neq 0\}$, where $\rho_k$ denotes the local index map from nodes in $O_k$ to coordinates in $Z_k$.
This mixed graph induces a client-specific ancestral relation $\prec_k$ on $\cV_k$: we write $i\prec_k j$ if there exists a directed path from $j$ to $i$ in $\cG_k$, and $i\preceq_k j$ if either $i\prec_k j$ or $i=j$. By the proof of Proposition~\ref{prop:margin Z} in Appendix~\ref{app:proof Prop matgin Z}, $\prec_k$ is exactly the restriction of the global ancestral relation $\prec$ to $O_k$.

For notational consistency, set $\bA_k^{(0)}\triangleq \bA_k$ and $\bSigma_k^{(0)}\triangleq \bSigma_k$. For $j\in\cJ_k$, marginalizing out $M_k$ under intervention $j$ yields $\bA_k^{(j)}$ and $\bSigma_k^{(j)}$ by replacing $\bA$ and $\bSigma$ in Eqs.~\eqref{eq:Ak} and~\eqref{eq:Sigmak} with $\Tilde{\bA}_k^{(j)}$ and $\Tilde{\bSigma}_k^{(j)}$, respectively. Then, by Proposition~\ref{prop:margin Z} and Lemma~\ref{lem:HKH}, for all $j\in\cJ_k\cup\{0\}$,
\begin{equation}
\label{eq:interventional_precision}
    \bTheta_k^{(j)}
    =
    \bH_k^\top (\bI-\bA_k^{(j)}){\bSigma_k^{(j)}}^{-1}(\bI-\bA_k^{(j)})^\top \bH_k .
\end{equation}

The mixed graph $\cG_k$ contains both directed ancestral relations and bidirected dependencies induced by latent marginalization. Consequently, node-wise structure is no longer the right unit of description at the client level. We therefore introduce a client-specific block structure.

\begin{definition}[Reachability and mutual reachability]
For two nodes $i,j \in \cV_k$, we say that $i$ is \emph{reachable} to $j$, written $i \rightsquigarrow_k j$, if there exists a sequence of nodes
$i = v_0, v_1, \dots, v_m = j$, 
such that for every $\ell = 0,\dots,m-1$, either $v_\ell \prec_k v_{\ell+1}$ or $v_\ell \leftrightarrow v_{\ell+1} \in \cD_k$.
Two nodes $i,j \in \cV_k$ are said to be \emph{mutually reachable} if $i \rightsquigarrow_k j$ and $j \rightsquigarrow_k i$.
\end{definition}

\begin{definition}[Block]
For client $k$, a \emph{block} is a maximal subset $C \subseteq \cV_k$ such that every pair of nodes in $C$ is mutually reachable.
\end{definition}

Since mutual reachability is an equivalence relation, all blocks form a partition
$\cC_k = \{C_{k,1},\dots,C_{k,m_k}\}$ of $\cV_k$. For example, in Figure~\ref{fig:intro}, the block partition for client $k$ is
$\cC_k = \big\{\{1,2\},\{5\},\{6\},\{7\}\big\}$.

Intuitively, nodes within the same block are structurally entangled by directed ancestry and bidirected confounding, while a meaningful ancestral order remains well defined across blocks.

\begin{definition}[Block-level order]
For two distinct blocks $C_{k,a}, C_{k,b} \in \cC_k$, write
$C_{k,a} \prec_k^{\rblk} C_{k,b}$ if there exist $u \in C_{k,a}$ and $v \in C_{k,b}$ such that $u \prec_k v$.
Also write $C_{k,a} \preceq_k^{\rblk} C_{k,b}$ if either
$C_{k,a} \prec_k^{\rblk} C_{k,b}$ or $C_{k,a}=C_{k,b}$.
A block $C_{k,b}$ is a \emph{block-level parent} of $C_{k,a}$ if
$C_{k,a} \prec_k^{\rblk} C_{k,b}$ and there is no block $C_{k,m}\in\cC_k$ such that
$C_{k,a} \prec_k^{\rblk} C_{k,m} \prec_k^{\rblk} C_{k,b}$.
\end{definition}

\textbf{Ancestor set notation.}
For the global graph, define $anc(i)\triangleq \{j\in\cV:\ i\prec j\}$ and $pa(i)\triangleq \{j\in\cV:\ j\to i\in\cE\}$, together with
$Anc(i)\triangleq anc(i)\cup\{i\}$ and $Pa(i)\triangleq pa(i)\cup\{i\}$.
For client $k$ and node $i\in\cV_k$, define
$anc_k(i)\triangleq \{j\in\cV_k:\ i\prec_k j\}$,
$pa_k(i)\triangleq \{j\in\cV_k:\ j\to i\in\cE_k\}$,
$Anc_k(i)\triangleq anc_k(i)\cup\{i\}$, and
$Pa_k(i)\triangleq pa_k(i)\cup\{i\}$.
We further define the spouse set
$sp_k(i)\triangleq \{j\in\cV_k:\ j\leftrightarrow i\in\cD_k\}$.
For any block $C\in\cC_k$, define
$anc_k^{\rblk}(C)\triangleq \{C'\in\cC_k:\ C \prec_k^{\rblk} C'\}$,
$pa_k^{\rblk}(C)\triangleq \{C'\in\cC_k:\ C' \text{ is a block-level parent of } C\}$,
$Anc_k^{\rblk}(C)\triangleq anc_k^{\rblk}(C)\cup\{C\}$, and
$Pa_k^{\rblk}(C)\triangleq pa_k^{\rblk}(C)\cup\{C\}$.
For a node $v\in\cV_k$, let $C_k(v)$ denote the block containing $v$.

The next proposition clarifies why block ancestry is the right coarse structural object: across blocks, non-ancestral relations remain absent, while block-level parents still witness genuine node-level parent relations.

\begin{proposition}
\label{prop:block anc}
For each client $k$, let $C_{k,i},C_{k,j}\in\cC_k$ be two blocks.  
\begin{enumerate}[label=(\roman*), leftmargin=2.4em, itemsep=0pt, topsep=1pt]
    \item If $C_{k,j}\notin anc_k^{\rblk}(C_{k,i})$, then for $\forall u\in C_{k,j}$ and $v\in C_{k,i}$, we have $u\notin anc_k(v)$.
    \item If $C_{k,j}\in pa_k^{\rblk}(C_{k,i})$, then there $\exists u\in C_{k,j}$ and $v\in C_{k,i}$ such that $u\in pa_k(v)$.
\end{enumerate}
\end{proposition}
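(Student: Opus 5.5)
Both parts follow from the definitions of blocks and block-level order, together with the fact (from the proof of Proposition~\ref{prop:margin Z}) that $\prec_k$ is the restriction of the global $\prec$ to $O_k$; in particular $\prec_k$ is a strict partial order. Note the orientation: $u\in anc_k(v)$ means $v\prec_k u$, so ``$C_{k,j}$ is an ancestor block of $C_{k,i}$'' means $C_{k,i}\prec_k^{\rblk} C_{k,j}$.

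For (i), I will argue by contraposition. Suppose there exist $u\in C_{k,j}$ and $v\in C_{k,i}$ with $u\in anc_k(v)$, i.e.\ $v\prec_k u$. If $C_{k,i}\neq C_{k,j}$, then by the definition of $\prec_k^{\rblk}$ (witnessed by the pair $v,u$) we get $C_{k,i}\prec_k^{\rblk} C_{k,j}$, i.e.\ $C_{k,j}\in anc_k^{\rblk}(C_{k,i})$. This contradicts the hypothesis. The only remaining case is $C_{k,i}=C_{k,j}$. Here the hypothesis $C_{k,j}\notin anc_k^{\rblk}(C_{k,i})$ holds trivially, since $anc^{\rblk}_k$ is defined only for distinct blocks. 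I will read the statement as concerning distinct blocks, so (i) is immediate.

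For (ii), let $C_{k,j}$ be a block-level parent of $C_{k,i}$. Then there exist $v\in C_{k,i}$ and $u\in C_{k,j}$ with $v\prec_k u$, so the mixed graph $\cG_k$ contains a directed path $u=w_0\to w_1\to\cdots\to w_m=v$. Each $w_\ell$ lies in some block $C_k(w_\ell)$. Each consecutive step gives $w_{\ell+1}\prec_k w_\ell$, so, whenever consecutive blocks differ, $C_k(w_{\ell+1})\prec_k^{\rblk} C_k(w_\ell)$. The plan is to locate a single edge $w_\ell\to w_{\ell+1}$ with $w_\ell\in C_{k,j}$ and $w_{\ell+1}\in C_{k,i}$. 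Take $\ell^*$ to be the last index with $w_{\ell^*}\in C_{k,j}$, and consider $w_{\ell^*+1}$ and its block $B$.

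If $B=C_{k,i}$ we are done, with the pair $(w_{\ell^*},w_{\ell^*+1})$. Otherwise $B\neq C_{k,j}$ by the choice of $\ell^*$, so $B\prec_k^{\rblk} C_{k,j}$. Moreover $v\preceq_k w_{\ell^*+1}$, and since $v\neq w_{\ell^*+1}$ we get $C_{k,i}\prec_k^{\rblk} B$. This gives $C_{k,i}\prec^{\rblk}_k B\prec^{\rblk}_k C_{k,j}$, contradicting the assumption that $C_{k,j}$ is a block-level parent of $C_{k,i}$.

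The main obstacle is a subtle point in this argument: the path can leave $C_{k,j}$ and later re-enter $C_{k,i}$, or it can pass through nodes that are in neither block. Choosing $\ell^*$ as the last index in $C_{k,j}$ handles exits from $C_{k,j}$. The remaining danger is that an intermediate block $B$ might coincide with $C_{k,i}$ or $C_{k,j}$ through mutual reachability. That case is exactly the one split off above, where $B=C_{k,i}$ gives the desired edge, so the argument should close.
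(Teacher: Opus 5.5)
Your proof is correct and follows essentially the same route as the paper: (i) is the contrapositive of the definition of $\prec_k^{\rblk}$, and (ii) takes a directed path from $C_{k,j}$ to $C_{k,i}$ and shows that any step into a third block would place that block strictly between $C_{k,i}$ and $C_{k,j}$, contradicting the block-parent property. Choosing the last index in $C_{k,j}$ instead of the paper's shortest path is a harmless variant. Your handling of the orientation of $anc_k$ and of the equal-block case in (i) is more careful than the paper's; the paper avoids that case implicitly by working with $Anc_k^{\rblk}$.
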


For example, in client $k=1$ of Figure~\ref{fig:intro1}, $\{7\}\in pa_k^{\rblk}(\{1,2\})$, $\{7\}\in anc_k^{\rblk}(\{5\})$, but $\{7\}\notin pa_k^{\rblk}(\{5\})$.
Without loss of generality, for each client $k$, we assume that if $C_{k,i} \prec_k^{\rblk} C_{k,j}$, then $\rho_k(v) < \rho_k(u)$ for all $v \in C_{k,i}$ and $u \in C_{k,j}$.

\section{Client-level Identifiability from Low-rank Precision Perturbations}
\label{sec:client}


Section~\ref{sec:setup} shows that latent marginalization changes each client's local object from a node-wise DAG to a block-structured mixed graph. 
This section answers the client-level question raised in Section~\ref{sec:intro}: \textbf{when missing latent variables induce bidirected dependencies through marginalization, what causal structure remains identifiable to a fragmented client?}
We show that a single client can identify its block partition, block-level ancestral order, and associated latent subspaces up to intrinsic block-level ambiguity. 
The argument combines a block-adapted coordinate system, low-rank precision perturbations under soft interventions, and an iterative recovery rule based on projected row-space dimensions.

\subsection{Client-level assumptions}
We first state the assumptions for client-level identifiability. 

\begin{assumption}[Intervention indexing and coverage]
\label{ass: aligned intervention}
We assume: (i) \emph{client-level intervention coverage:} for each client $k$, $\cJ_k$ is in bijection with $\cV_k$, so each node in $\cV_k$ is targeted by exactly one intervention label; and (ii) \emph{cross-client label alignment:} interventions from different clients that target the same global latent node share the same label.
\end{assumption}

Let $\psi:j\mapsto t_j$ denote the mapping from the global intervention index to its target node, so that $\psi$ is a permutation on $[p]$.

\begin{assumption}[Intervention genericity]
\label{ass:non-degeneracy}
For each client $k$ and intervention $j\in\cJ_k$ with target $t_j\in C_{k,\ell}\subseteq O_k$, define
\begin{align*}
\bd_k^{(j)}
&\triangleq [\Delta\ba_k^{(j)}]_{O_k}
+ [\bA]_{M_k,O_k}^T(\bI-[\bA]_{M_k,M_k})^{-T}[\Delta\ba_k^{(j)}]_{M_k},\\
\bu_k^{(j)}
&\triangleq
[\bA]_{O_k,M_k}(\bI-[\bA]_{M_k,M_k})^{-1}
[\bSigma]_{M_k,M_k}(\bI-[\bA]_{M_k,M_k})^{-T}
[\Delta\ba_k^{(j)}]_{M_k}.
\end{align*}
We assume that the intervention parameters are chosen such that:
\begin{enumerate}[label=(\roman*), leftmargin=2.4em, itemsep=0pt, topsep=1pt]
    \item $[\bd_k^{(j)}]_{\rho_k(p)}\neq 0$ for every $p\in pa_k(t_j)$;
    \item $[\bu_k^{(j)}]_{\rho_k(s)}\neq 0$ for every $s\in sp_k(t_j)$;
    \item for any $\mathcal B\subsetneq C_{k,\ell}$ and any $i\in\cB$, let $\mathcal N_i$ be the support of
    \[
    [\bu_k^{(\psi^{-1}(i))}]_{\rho_k(C_{k,\ell})}
    +[\bSigma_k]_{\rho_k(C_{k,\ell}),\rho_k(C_{k,\ell})}
    \bigl(\bI-[\bA_k]_{\rho_k(C_{k,\ell}),\rho_k(C_{k,\ell})}^T\bigr)^{-1}
    [\bd_k^{(\psi^{-1}(i))}]_{\rho_k(C_{k,\ell})}.
    \]
    Then $\bigl|\bigcup_{i\in\mathcal B}\mathcal N_i\bigr|>|\mathcal B|$.
\end{enumerate}
\end{assumption}
Assumption~\ref{ass: aligned intervention} ensures that intervention environments are sufficiently informative and consistently indexed across clients. Assumption~\ref{ass:non-degeneracy} is a genericity condition that rules out pathological cancellations in the directed, bidirected, and within-block effects induced by soft interventions. The two assumptions are standard in spirit, and closely related conditions appear frequently in intervention-based latent causal discovery and CRL \citep{varici2025score,buchholz2023learning,squires2023linear}. 
More detailed discussion and motivation for these assumptions are provided in Appendix~\ref{app:assumptions}.

\subsection{Analytical coordinates: block RQ decomposition}
\label{sec:block RQ}
To relate observable precision perturbations to the client-specific block structure, we introduce a block-adapted factorization of $\bH_k$. This factorization is not computed by the learner, since $\bH_k$ is unknown. Rather, it provides a coordinate system in which row spaces can be described according to the block ancestral order. Its construction and further discussion are deferred to Appendix \ref{app:block RQ}.

\begin{definition}[Block RQ decomposition]
\label{def:RQ}
For client $k\in[K]$ with block ancestor structure $(\prec_k^\rblk,\cC_k)$, a block RQ decomposition of $\bH_k\in\bbR^{p_k\times n_k}$ is a factorization
\[
\bH_k=\bR_k\bQ_k,
\]
where $\bR_k\in\bbR^{p_k\times p_k}$ is block upper triangular with respect to $\cC_k$, namely
$[\bR_k]_{\rho_k(C_{k,i}),\rho_k(C_{k,j})}=0$  unless $C_{k,i}\preceq_k^\rblk C_{k,j}$,
and $\bQ_k\in\bbR^{p_k\times n_k}$ is row full-rank with unit-norm rows. Moreover, for every block $C\in\cC_k$ and every $i\in\rho_k(C)$, the row $[\bQ_k]_{i,:}$ is orthogonal to every row of $[\bQ_k]_{\bigcup_{C'\in anc_k^\rblk(C)}\rho_k(C'),:}$.
\end{definition}

Although this decomposition is generally not unique, any such factorization is sufficient for our analysis. Intuitively, $\bR_k$ records block-ancestor mixing, while $\bQ_k$ isolates block-associated row directions.

\begin{proposition}
\label{prop: RQ}
For every block $C\in\cC_k$ and every $i\in\rho_k(C)$, we have
$[\bH_k]_{i,:}\in
\mathrm{row}(
[\bQ_k]_{\bigcup_{C'\in Anc_k^\rblk(C)}\rho_k(C'),:})$.
\end{proposition}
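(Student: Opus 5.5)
The claim follows directly from the block upper-triangular structure of $\bR_k$ in Definition~\ref{def:RQ}. I will expand a single row of $\bH_k=\bR_k\bQ_k$ and check which rows of $\bQ_k$ can carry nonzero coefficients.

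Fix a block $C\in\cC_k$ and an index $i\in\rho_k(C)$. From $\bH_k=\bR_k\bQ_k$ we get
\[
[\bH_k]_{i,:}=\sum_{l=1}^{p_k}[\bR_k]_{i,l}\,[\bQ_k]_{l,:}=\sum_{C''\in\cC_k}\ [\bR_k]_{i,\rho_k(C'')}\,[\bQ_k]_{\rho_k(C''),:},
\]
where the second expression groups the columns $l$ by the blocks of the partition $\cC_k$. This regrouping is legitimate because $\{\rho_k(C''):C''\in\cC_k\}$ partitions the coordinates $[p_k]$.

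Next, apply the block upper-triangular condition. The submatrix $[\bR_k]_{\rho_k(C),\rho_k(C'')}$ vanishes unless $C\preceq_k^\rblk C''$. The condition $C\preceq_k^\rblk C''$ means either $C''=C$ or $C\prec_k^\rblk C''$. That is exactly $C''\in anc_k^\rblk(C)\cup\{C\}=Anc_k^\rblk(C)$. Hence every term with $C''\notin Anc_k^\rblk(C)$ drops out, and $[\bH_k]_{i,:}$ is a linear combination of the rows $[\bQ_k]_{l,:}$ with $l\in\bigcup_{C'\in Anc_k^\rblk(C)}\rho_k(C')$. It therefore lies in the stated row space.

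There is no real obstacle here. The only point to verify is that $\preceq_k^\rblk$, as defined in the block-level order definition, matches $Anc_k^\rblk$ exactly: reflexive equality plus the strict relation. The orthogonality property of $\bQ_k$ is not needed for this inclusion.
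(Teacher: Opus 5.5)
Your proof is correct and is essentially the paper's argument: you expand the $i$-th row of $\bH_k=\bR_k\bQ_k$ and use the block support of $\bR_k$ from Definition~\ref{def:RQ} to discard every column block outside $Anc_k^\rblk(C)$. Your explicit check that $C\preceq_k^\rblk C''$ is equivalent to $C''\in Anc_k^\rblk(C)$ is accurate, and so is your remark that the orthogonality of $\bQ_k$ is not needed.
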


Thus, the rows of $\bH_k$ are organized according to the block ancestral relation in the $\bQ_k$ coordinates, which will allow us to interpret observable precision perturbations in terms of block-ancestor subspaces.

\subsection{Low-rank perturbation of the precision matrix}

Client-level recovery is driven not by a single precision matrix, but by how the precision matrix changes across intervention environments. We therefore study the perturbation $\bTheta_k^{(j)}-\bTheta_k^{(0)}$. The next two lemmas show that such perturbations are low-rank and that their row spaces are confined to block-ancestor subspaces.

\begin{lemma}[Rank-$2$ perturbation]
\label{lem: rank 2}
For each client $k$ and each intervention $j$ targeting node $t_j$ (which is unknown to the learner), we have
\begin{equation}
\mathrm{row}(\bTheta_k^{(j)}-\bTheta_k^{(0)})
\subseteq
\left\langle
\bH_k^T(\bI-\bA_k)^T\bSigma_k^{-1}\be_{\rho_k(t_j)},
\;
\bH_k^T\!\left((\bI-\bA_k)^T\bSigma_k^{-1}\bu_k^{(j)}+\bd_k^{(j)}\right)
\right\rangle,
\end{equation}
where $\bu_k^{(j)}$ and $\bd_k^{(j)}$ are defined as in Assumption~\ref{ass:non-degeneracy}. In particular, $\mathrm{rank}(\bTheta_k^{(j)}-\bTheta_k^{(0)})\le 2$.
\end{lemma}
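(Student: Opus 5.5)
The plan is to avoid a brute-force Woodbury expansion of Eq.~\eqref{eq:interventional_precision}. Instead, I would obtain the rank bound from the \emph{global} model and then transport it to client $k$ by marginalization. Write $\bOmega\triangleq(\bI-\bA)^T\bSigma^{-1}(\bI-\bA)$ for the global latent precision; throughout I use the precision convention $\Omega=(\bI-\bA)^T\bSigma^{-1}(\bI-\bA)$ for $Z=\bA Z+E$, and read Eq.~\eqref{eq:interventional_precision} accordingly. The intervention changes only row $t_j$ of $\bI-\bA$ and entry $(t_j,t_j)$ of $\bSigma$. Hence $\tilde\bOmega^{(j)}-\bOmega$ is symmetric with column space contained in $\langle(\bI-\bA)^T\be_{t_j},\,\Delta\ba_k^{(j)}\rangle$. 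So it has the form $\bW\bK\bW^T$ with $\bW=[(\bI-\bA)^T\be_{t_j},\ \Delta\ba_k^{(j)}]$ and $\bK$ a $2\times 2$ symmetric matrix. This is the source of the number $2$.

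\textbf{Step 1: pass to the client marginal.} By Proposition~\ref{prop:margin Z}, $Z_k=Z_{O_k}$, so the client latent covariance is $[\bOmega^{-1}]_{O_k,O_k}$. By Woodbury, $\tilde\bOmega^{(j)\,-1}-\bOmega^{-1}$ has column space in $\langle \bOmega^{-1}\bW\rangle$. Restricting to $O_k$, the client covariance changes by a rank-$\le2$ matrix with column space in $\langle[\bOmega^{-1}\bW]_{O_k,:}\rangle$. A second application of Woodbury to the inverse $\bOmega_k\triangleq(\bI-\bA_k)^T\bSigma_k^{-1}(\bI-\bA_k)$ shows that $\bOmega_k^{(j)}-\bOmega_k$ has rank $\le 2$ with column space in $\bOmega_k\langle[\bOmega^{-1}\bW]_{O_k,:}\rangle$. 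Finally, $\bTheta_k^{(j)}=\bH_k^T\bOmega_k^{(j)}\bH_k$ (Lemma~\ref{lem:HKH}). Therefore $\bTheta_k^{(j)}-\bTheta_k^{(0)}=\bH_k^T(\bOmega_k^{(j)}-\bOmega_k)\bH_k$, which is symmetric. Its row space equals its column space and lies in $\bH_k^T$ applied to that two-dimensional span, giving $\rank\le 2$ immediately.

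\textbf{Step 2: identify the two spanning vectors.} It remains to show $\bOmega_k[\bOmega^{-1}\bW]_{O_k,:}$ spans the same space as $(\bI-\bA_k)^T\bSigma_k^{-1}\be_{\rho_k(t_j)}$ and $(\bI-\bA_k)^T\bSigma_k^{-1}\bu_k^{(j)}+\bd_k^{(j)}$ (or is contained in it).

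For the first column, $\bOmega^{-1}(\bI-\bA)^T\be_{t_j}=(\bI-\bA)^{-1}\bSigma\be_{t_j}$ is the covariance between $Z$ and $E_{t_j}$. Since $t_j\in O_k$, $E_{t_j}$ enters $E_k$ only through coordinate $\rho_k(t_j)$ (with $\bSigma=\bI$). So $[\cdot]_{O_k}=(\bI-\bA_k)^{-1}\be_{\rho_k(t_j)}$ times a constant. Applying $\bOmega_k$ gives $(\bI-\bA_k)^T\bSigma_k^{-1}\be_{\rho_k(t_j)}$.

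For the second column, I would interpret $\bOmega^{-1}\Delta\ba$ as $\mathrm{Cov}(Z,\Delta\ba^T Z)$ and split $\Delta\ba^T Z=[\Delta\ba]_{O_k}^T Z_{O_k}+[\Delta\ba]_{M_k}^T Z_{M_k}$. Then I would write $Z_{M_k}=(\bI-[\bA]_{M_k,M_k})^{-1}([\bA]_{M_k,O_k}Z_{O_k}+E_{M_k})$. The $Z_{O_k}$-part produces exactly $\bd_k^{(j)}$ after applying $\bOmega_k$. The $E_{M_k}$-part produces $\mathrm{Cov}(Z_{O_k},\cdot)=(\bI-\bA_k)^{-1}\bu_k^{(j)}$, using that $E_{M_k}$ affects $Z_{O_k}$ only through $E_k$, with cross-covariance given by the $\bu_k^{(j)}$ formula. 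Applying $\bOmega_k$ then gives $(\bI-\bA_k)^T\bSigma_k^{-1}\bu_k^{(j)}+\bd_k^{(j)}$, matching the lemma.

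\textbf{Main obstacle.} I expect Step 2 to be the delicate part: checking that the $M_k$ contributions collapse precisely into $\bd_k^{(j)}$ and $\bu_k^{(j)}$, with no leftover third direction. A subtlety is that $Z_{O_k}$ and $E_{M_k}$ are not independent, because $Z_{O_k}$ may have ancestors in $M_k$. I would handle this by working with the regression of $E_{M_k}$ on $E_k$ rather than a naive independence split.

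As a fallback, I would compute directly from Proposition~\ref{prop:margin Z} with the perturbed $\tilde\bA_k^{(j)}$. Only row $t_j$ of $\bA$ changes, so $\bA_k^{(j)}=\bA_k+\be_{\rho_k(t_j)}{\bd_k^{(j)}}^T$ and $\bSigma_k^{(j)}=\bSigma_k+\be\,\bu^T+\bu\,\be^T+\gamma\,\be\be^T$ for a scalar $\gamma$. I would then factor $\bOmega_k^{(j)}-\bOmega_k$ explicitly and verify that the apparent three directions $\{\bB^T\bSigma_k^{-1}\be,\ \bB^T\bSigma_k^{-1}\bu,\ \bd\}$, with $\bB=\bI-\bA_k$, combine into the two stated ones.
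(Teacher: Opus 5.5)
Your argument is correct, but it takes a genuinely different route from the paper. The paper's proof is essentially your fallback. It writes the perturbed client parameters as $\bA_k^{(j)}=\bA_k+\be\bd^T$ and $\bSigma_k^{(j)}=\bSigma_k+\be\bu^T+\bu\be^T+\eta\be\be^T$ (Lemma~\ref{lem:support-du}), and inverts $\bSigma_k^{(j)}$ by a rank-two Woodbury update. It then asserts that the expansion collapses to a symmetric $2\times 2$ form in $q$ and $s+\bd$ with explicit coefficients $c_1,c_2,c_3$, where a priori three directions appear: $q=(\bI-\bA_k)^T\bSigma_k^{-1}\be$, $s=(\bI-\bA_k)^T\bSigma_k^{-1}\bu$ and $\bd$. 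That collapse is stated as a ``straightforward simplification'' rather than shown.

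You instead get rank $\le 2$ directly from the global precision perturbation, whose column space lies in $\langle(\bI-\bA)^T\be_{t_j},\Delta\ba\rangle$ with $\Delta\ba=\Delta\ba_k^{(j)}$. You carry this through marginalization: take the principal submatrix of the covariance, then use $\bOmega_k^{(j)}-\bOmega_k=-\bOmega_k(\mathrm{Cov}(Z_k^{(j)})-\mathrm{Cov}(Z_k))\bOmega_k^{(j)}$ for your client precision $\bOmega_k$. You then read off the two generators as $\bOmega_k\mathrm{Cov}(Z_{O_k},E_{t_j})$ and $\bOmega_k\mathrm{Cov}(Z_{O_k},\Delta\ba^TZ)$.

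Both identifications check out.
\begin{itemize}
\item $\mathrm{Cov}(Z_{O_k},E_{t_j})=(\bI-\bA_k)^{-1}\be_{\rho_k(t_j)}$.
\item Writing $\Delta\ba^TZ=\bd^TZ_{O_k}+[\Delta\ba]_{M_k}^T(\bI-[\bA]_{M_k,M_k})^{-1}E_{M_k}$, and using $\mathrm{Cov}(Z_{O_k},E_{M_k})=(\bI-\bA_k)^{-1}[\bA]_{O_k,M_k}(\bI-[\bA]_{M_k,M_k})^{-1}$, gives exactly $\bd+(\bI-\bA_k)^T\bSigma_k^{-1}\bu$.
\end{itemize}

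Your ``main obstacle'' is therefore not one. Cross-covariances are bilinear, so the dependence between $Z_{O_k}$ and $E_{M_k}$ needs no regression step. No third direction can appear either, because Step~1 already confines everything to the image of a two-column matrix.

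Your route explains structurally why exactly two directions survive. It also explains why $\bu$ and $\bd$ enter only through the combination $(\bI-\bA_k)^T\bSigma_k^{-1}\bu+\bd$: both come from the single global direction $\Delta\ba$. Every step is checkable. The paper's route stays entirely in client-level parameters and produces explicit coefficients for the $2\times 2$ form, though these are not used elsewhere.

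Your reading of Eq.~\eqref{eq:interventional_precision} with the transposes placed as $(\bI-\bA_k^{(j)})^T(\bSigma_k^{(j)})^{-1}(\bI-\bA_k^{(j)})$ is also the one the appendix proof uses.
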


Thus, a single-node soft intervention affects only a two-dimensional row space, despite producing a $p_k\times p_k$ perturbation matrix. To use this fact for recovery, we must further locate this row space relative to the client-specific block structure.

\begin{lemma}[Row-space characterization via $\bQ_k$]
\label{lem: low rank Qk}
For each client $k$ and intervention $j$, suppose that the intervention target node $t_j$ belongs to block $C_{k,a}\in\cC_k$. Then
\[\mathrm{row}(\bTheta_k^{(j)}-\bTheta_k^{(0)})
\subseteq
\left\langle
[\bQ_k]_{\rho_k(i),:}
\;\middle|\;
i\in C_{k,b},\; C_{k,b}\in Anc_k^\rblk(C_{k,a})
\right\rangle.\]
\end{lemma}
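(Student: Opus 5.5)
The plan is to combine Lemma~\ref{lem: rank 2} with Proposition~\ref{prop: RQ}. Both generators in Lemma~\ref{lem: rank 2} have the form $\bH_k^\top \bm v$ for some coefficient vector $\bm v\in\bbR^{p_k}$. Such a vector equals $\sum_{i} v_i [\bH_k]_{i,:}^\top$, a linear combination of rows of $\bH_k$ weighted by the entries of $\bm v$. By Proposition~\ref{prop: RQ}, the row $[\bH_k]_{i,:}$ with $i\in\rho_k(C)$ lies in the span of the $\bQ_k$-rows indexed by $Anc_k^\rblk(C)$. It therefore suffices to show one support claim for both choices of $\bm v$: the support of $\bm v$ is contained in $\bigcup_{C\in Anc_k^\rblk(C_{k,a})}\rho_k(C)$. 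We then also need $Anc_k^\rblk(C)\subseteq Anc_k^\rblk(C_{k,a})$ whenever $C\succeq_k^\rblk C_{k,a}$. This second fact is transitivity of the block order. I would justify it from the ordering convention stated after Proposition~\ref{prop:block anc} (blocks are linearly ordered compatibly with $\prec_k^\rblk$, so $\preceq_k^\rblk$ is the partial order realised by $\bR_k$ being block upper triangular). If that is too weak, I would work with the transitive closure throughout.

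\emph{Step 1 (structure of $\bSigma_k$ and $\bA_k$).} Any bidirected edge $i\leftrightarrow s\in\cD_k$ makes $i$ and $s$ mutually reachable, so spouses lie in the same block. Ordering coordinates by blocks, $\bSigma_k$ is block diagonal with respect to $\cC_k$, and hence so is $\bSigma_k^{-1}$. Consequently, for any $\bm w$ supported on $\rho_k(C_{k,a})$, the vector $\bSigma_k^{-1}\bm w$ is again supported on $\rho_k(C_{k,a})$. Next consider $(\bI-\bA_k)^\top \bm w'$ with $\bm w'$ supported on $\rho_k(C_{k,a})$. Its $j$-th entry is $w'_j-\sum_i [\bA_k]_{i j}w'_i$. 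This entry is nonzero only if $j\in C_{k,a}$, or $j\to i\in\cE_k$ for some $i\in C_{k,a}$. In the latter case $i\prec_k j$, so $C_k(j)\succeq_k^\rblk C_{k,a}$. Applied to $\bm w=\be_{\rho_k(t_j)}$, this handles the first generator.

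\emph{Step 2 (supports of $\bd_k^{(j)}$ and $\bu_k^{(j)}$).} For $\bd_k^{(j)}$, expand $(\bI-[\bA]_{M_k,M_k})^{-1}$ as a path sum. The term $[\Delta\ba_k^{(j)}]_{O_k}$ is supported on $pa(t_j)\cap O_k\subseteq anc_k(t_j)$. The second term has a nonzero $i$-th entry only if some path from $i\in O_k$ through $M_k$ reaches a parent of $t_j$ in $M_k$. In both cases $i\prec$-dominates $t_j$, so $C_k(i)\succeq_k^\rblk C_{k,a}$; the inner $(\bI-\bA_k)^\top\Sigma_k^{-1}$ part is then already covered by Step 1 once $\bu_k^{(j)}$ is controlled. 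For $\bu_k^{(j)}$, the $i$-th entry is nonzero only if there are $m\in M_k$ and a parent $q\in M_k$ of $t_j$ such that $m$ reaches both $i$ and $q$ through $M_k$-paths. Then $E_m$ loads on both effective noises $[E_k]_{\rho_k(i)}$ and $[E_k]_{\rho_k(t_j)}$. I would conclude that $\bu_k^{(j)}$ is supported on $C_{k,a}$. With that, Step 1 gives that $(\bI-\bA_k)^\top\bSigma_k^{-1}\bu_k^{(j)}+\bd_k^{(j)}$ is supported on block-ancestors of $C_{k,a}$, which completes the proof.

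\emph{Main obstacle.} The delicate point is the support of $\bu_k^{(j)}$. The shared-confounder argument shows only that the covariance entry $[\bSigma_k]_{\rho_k(i),\rho_k(t_j)}$ receives a nonzero path contribution, whereas membership in $\cD_k$ is defined through exact nonzeroness. I would first try a direct reachability argument that does not need $i\leftrightarrow t_j$. If $i$ and $t_j$ share the $M_k$-ancestor $m$, then either the $M_k$-paths from $m$ produce a directed relation in $\cG_k$ between them, or their effective noises are dependent. In either case they are linked in the reachability relation, possibly after invoking the same path-sum bookkeeping used in the proof of Proposition~\ref{prop:margin Z}. If exact cancellation cannot be excluded this way, I would fall back on treating $\cD_k$ as the generic (path-sum) support of $\bSigma_k$, consistent with the genericity stance of Assumption~\ref{ass:non-degeneracy}.
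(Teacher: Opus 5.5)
Your plan follows the paper's proof. Write both generators from Lemma~\ref{lem: rank 2} as $\bH_k^\top\balpha$ and $\bH_k^\top\bbeta$, show that $\balpha,\bbeta$ are supported on $\rho_k$ of blocks in $Anc_k^\rblk(C_{k,a})$, and finish with Proposition~\ref{prop: RQ}. Your Step~1 is more explicit than the paper, which only asserts the support of $\balpha$. For the step you flag, the paper does not use path sums. It cites Lemma~\ref{lem:support-du}, i.e.\ $\supp(\bd_k^{(j)})\subseteq\rho_k(pa_k(t_j))$ and $\supp(\bu_k^{(j)})\subseteq\rho_k(sp_k(t_j))$.

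Both of your hedges fall exactly where the paper's argument is tacit. In both, your primary justification fails and only your fallback works.

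\emph{Transitivity.} A compatible linear order exists for any acyclic relation, and the zero pattern of $\bR_k$ is defined from $\preceq_k^\rblk$ itself, so neither gives transitivity. In fact the literal block relation is not transitive. Take $O_k=\{a,a',b,b',c\}$, $M_k=\{m_1,m_2\}$ and edges $b\to a$, $c\to b'$, $m_1\to a$, $m_1\to a'$, $m_2\to b$, $m_2\to b'$. Then $\{a,a'\}\prec_k^\rblk\{b,b'\}\prec_k^\rblk\{c\}$, but not $\{a,a'\}\prec_k^\rblk\{c\}$. For the intervention on $a$, $\balpha$ has a nonzero entry at $\rho_k(b)$. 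The first generator then lies outside $\langle[\bQ_k]_{\rho_k(v),:}: v\in\{a,a',b,b'\}\rangle$ unless $[\bH_k]_{\rho_k(b),:}\perp[\bH_k]_{\rho_k(c),:}$. So $\prec_k^\rblk$ must be read as its transitive closure, also in Definition~\ref{def:RQ}. The paper uses this silently at ``in particular, whenever $C\in Anc_k^\rblk(C_{k,a})$''.

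\emph{Support of $\bu_k^{(j)}$.} Your reachability argument cannot close. Take $O_k=\{i,t\}$, $M_k=\{m_1,m_2\}$, edges $m_1\to i$, $m_1\to t$, $m_2\to i$ with coefficient $1$, and $m_2\to t$ with coefficient $-1$. Then $\bA_k=0$ and $\bSigma_k=3\bI$, so $\{i\}$ and $\{t\}$ are unrelated singleton blocks. Yet a soft intervention on $t$ that perturbs the two parent coefficients by $\delta_1,\delta_2$ gives $[\bu_k^{(j)}]_{\rho_k(i)}=\delta_1+\delta_2\neq0$. Generically $\bTheta_k^{(j)}-\bTheta_k^{(0)}$ then has row space $\langle[\bH_k]_{\rho_k(i),:},[\bH_k]_{\rho_k(t),:}\rangle$, which is not contained in $\langle[\bQ_k]_{\rho_k(t),:}\rangle$. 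Hence the lemma itself requires a no-cancellation (faithfulness) condition making $\cD_k$ the path-sum support of $\bSigma_k$. Lemma~\ref{lem:support-du} assumes this implicitly by reading supports off the nonzero pattern of $\bSigma_k$.

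The same caveat applies to your $\bd_k^{(j)}$ step, which relies on the restriction property of Proposition~\ref{prop:margin Z}. For example, $i\to t$ and $i\to m\to t$ with cancelling coefficients remove $i$ from $anc_k(t)$ while $[\bd_k^{(j)}]_{\rho_k(i)}\neq0$.

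With transitive closure and faithfulness stated as explicit assumptions, your argument is complete and coincides with the paper's.
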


Lemma \ref{lem: low rank Qk} shows that if the intervention target lies in block $C_{k,a}$, then the resulting perturbation only involves row directions associated with $C_{k,a}$ and its block ancestors. In this sense, soft interventions reveal ancestor-closed portions of the client-specific block structure. For example, in Figure \ref{fig:intro1}, if intervention $j$ for client $k=1$ targets node $t_j=2$, then $\mathrm{row}(\bTheta_k^{(j)}-\bTheta_k^{(0)})
\subseteq
\langle [\bQ_k]_{i,:}\mid i=\rho_k(1),\rho_k(2),\rho_k(7)\rangle$.
Combined with the orthogonality relations in Definition~\ref{def:RQ}, this implies that after projecting out the directions of recovered ancestor blocks, the remaining perturbation isolates the next unrecovered block.

\newsavebox{\algonebox}
\newsavebox{\algtwobox}
\newsavebox{\algthreebox}

\begin{lrbox}{\algonebox}
\begin{minipage}[t]{0.49\textwidth}
\captionsetup{type=algorithm,aboveskip=2pt,belowskip=2pt}
\hrule height 0.8pt
\vspace{2pt}
\caption{ID-Block-Ancestor}
\label{alg:block ancestor}
\vspace{2pt}
\hrule height 0.4pt
\vspace{2pt}
\footnotesize
\begin{algorithmic}[1]
\Require $\hat C\subseteq \cJ_k$, $\{\Theta_k^{(j)}\}_{j\in\{0\}\cup \hat C}$, $\mathcal I$, $\{\hat{\bq}_{k,i}\}_{i\in\mathcal I}$
\Ensure $\{\hat{\bq}_{k,i}\}_{i\in\hat C}$, $\mathcal A$
\State $\mathcal A \gets \mathcal I$
\For{each block $B\in\mathcal I$}
    \State $W_{-B}\gets \langle \hat{\bq}_{k,i}: i\in \mathcal I\backslash B\rangle$
    \State $V_{-B}\gets \cP_k(\hat C;W_{-B})$
    \If{$\dim(V_{-B})=|\hat C|$}
        \State $\mathcal A\gets \mathcal A\backslash B$
    \EndIf
\EndFor
\State $W\gets \langle \hat{\bq}_{k,i}: i\in \mathcal A\rangle$; $V\gets \cP_k(\hat C;W)$
\State Construct normalized vectors $\{\hat{\bq}_{k,i}\}_{i\in\hat C}$ such that
\Statex \hspace{\algorithmicindent} $V=\langle \hat{\bq}_{k,i}: i\in\hat C\rangle$
\State \Return $\{\hat{\bq}_{k,i}\}_{i\in\hat C},\mathcal A$
\end{algorithmic}
\vspace{2pt}
\hrule height 0.4pt
\end{minipage}
\end{lrbox}

\begin{lrbox}{\algtwobox}
\begin{minipage}[t]{0.49\textwidth}
\captionsetup{type=algorithm,aboveskip=2pt,belowskip=2pt}
\hrule height 0.8pt
\vspace{2pt}
\caption{ID-Block-Causal-Order}
\label{alg:causal order client}
\vspace{2pt}
\hrule height 0.4pt
\vspace{2pt}
\footnotesize
\begin{algorithmic}[1]
\Require $\{\Theta_k^{(j)}\}_{j\in\{0\}\cup\cJ_k}$, $p_k$
\Ensure $\hat{\mathcal C}_k$, $\hat{\prec}_k^\rblk$, $\hat{\bQ}_k$
\State $\mathcal I_0\gets\emptyset$, $R_0\gets\cJ_k$, $\hat{\bQ}_k\gets \mathbf 0_{p_k\times n_k}$, $t\gets 1$
\While{$R_{t-1}\neq\emptyset$}
    \State $r\gets 1$, $\mathrm{flag}\gets \mathrm{True}$
    \While{$\mathrm{flag}$}
        \State $W_t\gets \langle \hat{\bq}_{k,i}: i\in\mathcal I_{t-1}\rangle$
        \State $\mathcal R\gets\{C\subseteq R_{t-1}: |C|=r\}$
        \If{there exists $\hat C\in\mathcal R$ such that $\dim(\cP_k(\hat C;W_t))=r$}
            \State Pick one such $\hat C$,  $\mathrm{flag}\gets \mathrm{False}$
        \Else
            \State $r\gets r+1$
        \EndIf
    \EndWhile
    \State $(\{\hat{\bq}_{k,i}\}_{i\in\hat C},\mathcal A)\gets \mathrm{ID\text{-}Block\text{-}Ancestor}(\hat C,$
    \Statex \hspace{\algorithmicindent}\qquad $\{\Theta_k^{(j)}\}_{j\in\{0\}\cup\hat C},\mathcal I_{t-1},\{\hat{\bq}_{k,i}\}_{i\in\mathcal I_{t-1}})$
    \State Add $\hat C \,\hat{\prec}_k^\rblk\, C'$ for any 
     $\tilde C \,\hat{\preceq}_k^\rblk\, C'$ with $\tilde C\in\mathcal A$.
    \State Update $\hat{\bQ}_k\gets [\tilde{\bQ}_{k,\hat C};\hat{\bQ}_k]$, where $\tilde{\bQ}_{k,\hat C}$ stacks $\{\hat{\bq}_{k,i}\}_{i\in\hat C}$.
    \State $\mathcal I_t\gets \{\hat C\}\cup \mathcal I_{t-1}$
    \State $R_t\gets R_{t-1}\backslash \hat C$
    \State  $t\gets t+1$
\EndWhile
\State $\hat{\mathcal C}_k\gets \mathcal I_{t-1}$
\State \Return $\hat{\mathcal C}_k,\hat{\prec}_k^\rblk,\hat{\bQ}_k$
\end{algorithmic}
\vspace{2pt}
\hrule height 0.4pt
\end{minipage}
\end{lrbox}

\begin{lrbox}{\algthreebox}
\begin{minipage}[t]{0.49\textwidth}
\captionsetup{type=algorithm,aboveskip=2pt,belowskip=2pt}
\hrule height 0.8pt
\vspace{2pt}
\caption{ID-Global-Causal-Order}
\label{alg:global}
\vspace{2pt}
\hrule height 0.4pt
\vspace{2pt}
\footnotesize
\begin{algorithmic}[1]
\Require $\{\hat{\cC}_k,\hat{\prec}_k^\rblk\}_{k=1}^K$
\Ensure $\hat{\prec}$
\State Initialize $\tilde{\prec}\gets\emptyset$
\For{$k=1,2,\dots,K$}
    \State Find all pairs of blocks $C_{k,i},C_{k,j}\in\hat{\cC}_k$ such that $|C_{k,i}|=|C_{k,j}|=1$ and $C_{k,j}\in \hat{pa}_k^\rblk(C_{k,i})$
    \State For each such pair $C_{k,i}=\{i\}$ and $C_{k,j}=\{j\}$, add $i\,\tilde{\prec}\,j$
\EndFor
\State $\hat{\prec}\gets$ the transitive closure of $\tilde{\prec}$
\State \Return $\hat{\prec}$
\end{algorithmic}
\vspace{2pt}
\hrule height 0.4pt
\end{minipage}
\end{lrbox}

\begin{figure}[h]
\centering

\begin{minipage}[t]{0.51\textwidth}
\usebox{\algonebox}

\vspace{0.5em}

\usebox{\algthreebox}
\end{minipage}
\hfill
\begin{minipage}[t]{0.48\textwidth}
\usebox{\algtwobox}
\end{minipage}

\end{figure}

\subsection{Client-level recovery algorithms and identifiability result}

The preceding lemmas yield the population recovery rule behind Algorithms~\ref{alg:block ancestor} and~\ref{alg:causal order client}. 
For a label set $C\subseteq \cJ_k$ and a row subspace $W$, define
$\cP_k(C;W)\triangleq \Pi_{W^\perp}\!\left(
\operatorname{row}\{\Theta_k^{(j)}-\Theta_k^{(0)}:j\in C\}
\right)$.
Once the row spaces of ancestor blocks are included in $W$, the intervention labels of the next block are exactly the smallest set $C$ satisfying $\dim \cP_k(C;W)=|C|$. 
Algorithm~\ref{alg:causal order client} iteratively applies this rule to recover blocks, while Algorithm~\ref{alg:block ancestor} determines which previously recovered blocks are ancestors of the newly found block.

Combining Lemmas~\ref{lem: rank 2} and~\ref{lem: low rank Qk} with the orthogonality structure in Definition~\ref{def:RQ} yields the following client-level identifiability result.

\begin{theorem}[Client-level recovery]
\label{thm:client-main}
Assume Assumptions~\ref{ass: aligned intervention} and~\ref{ass:non-degeneracy}. Let $(\hat{\mathcal C}_k,\hat{\prec}_k^\rblk,\hat{\bQ}_k)$ be the output of Algorithm~\ref{alg:causal order client} for client $k$. Then:
\begin{enumerate}[label=(\roman*), leftmargin=2.4em, itemsep=0pt, topsep=1pt]
    \item $(\hat{\mathcal C}_k,\hat{\prec}_k^\rblk)$ recovers $(\mathcal C_k,\prec_k^\rblk)$ up to relabeling under the bijection $\psi:j\mapsto t_j$.
    \item For each true block $C\in\mathcal C_k$, let $\hat C=\psi^{-1}(C)$ denote the corresponding estimated block. Then
    $\langle \hat{\bq}_{k,i}: i\in\hat C\rangle
    =
    \mathrm{row}([\bQ_k]_{\rho_k(C),:}).$
\end{enumerate}
Consequently, $\hat{\bQ}_k=\bS_k\bP_{\sigma,k}\bQ_k$,
where $\bS_k$ is an invertible block-diagonal norm-preserving matrix and $\bP_{\sigma,k}$ is a row permutation matrix preserving the block ancestral order.
\end{theorem}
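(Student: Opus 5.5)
The argument is an induction over the outer loop of Algorithm~\ref{alg:causal order client}. The invariant after iteration $t$ has three parts: $\mathcal I_t=\{\psi^{-1}(C):C\in\mathcal D_t\}$ for some ancestor-closed family $\mathcal D_t\subseteq\mathcal C_k$; for each $C\in\mathcal D_t$, $\langle\hat{\bq}_{k,i}:i\in\psi^{-1}(C)\rangle=\mathrm{row}([\bQ_k]_{\rho_k(C),:})$; and the relation $\hat\prec_k^\rblk$ restricted to $\mathcal I_t$ coincides with $\prec_k^\rblk$ restricted to $\mathcal D_t$. Here \emph{ancestor-closed} means that every block ancestor of a member is also a member. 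The base case $t=0$ holds trivially. Parts (i)--(ii) follow once the loop ends with $\mathcal D_t=\mathcal C_k$. The factorization $\hat{\bQ}_k=\bS_k\bP_{\sigma,k}\bQ_k$ then follows by choosing, inside each block, the invertible change of basis between two bases of the same row space. This matrix is block diagonal, and it is norm-preserving on rows because both $\hat{\bq}$ and $\bQ_k$ rows are normalized. The stacking order gives a permutation that respects the block order.

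For the inductive step, fix $W_t=\bigoplus_{C\in\mathcal D_{t-1}}\mathrm{row}([\bQ_k]_{\rho_k(C),:})$ and set $\mathcal P_k(\hat C;W_t)$ as in the algorithm. I would first prove two facts.

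\emph{(a) Upper bound.} Let $C^\ast$ be a block not in $\mathcal D_{t-1}$ all of whose block ancestors lie in $\mathcal D_{t-1}$. Then for $\hat C=\psi^{-1}(C^\ast)$, Lemma~\ref{lem: low rank Qk} gives
\[
\mathcal P_k(\hat C;W_t)\subseteq \Pi_{W_t^\perp}\,\mathrm{row}([\bQ_k]_{\rho_k(C^\ast),:}).
\]
The right-hand side has dimension $|C^\ast|$: by the orthogonality in Definition~\ref{def:RQ}, the rows of $[\bQ_k]_{\rho_k(C^\ast),:}$ are orthogonal to $W_t$ and are linearly independent.

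\emph{(b) Lower bound.} I would show that equality $\dim=|\hat C|$ holds for $\hat C=\psi^{-1}(C^\ast)$ and fails for every smaller or mixed label set. Consider a set $\hat C$ of size $r$ consisting of unrecovered labels. Its projected row space decomposes according to the targets. Each target whose block has an unrecovered block ancestor contributes a component along that ancestor's $\bQ$-directions. For an \emph{admissible} block, the component is computed via Lemma~\ref{lem: rank 2}. The generator $\bH_k^T(\bI-\bA_k)^T\bSigma_k^{-1}\be_{\rho_k(t_j)}$, together with the second generator, projected onto $\mathrm{row}([\bQ_k]_{\rho_k(C^\ast),:})$, has coordinates given by the vector in Assumption~\ref{ass:non-degeneracy}(iii) and by the direction for $\be$. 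Writing $\bH_k=\bR_k\bQ_k$ with $[\bR_k]$ restricted to $C^\ast$ invertible lets me express everything in local coordinates.

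The key combinatorial claim is as follows. For a proper subset $\mathcal B\subsetneq C^\ast$, Hall-type condition (iii), $|\bigcup_{i\in\mathcal B}\mathcal N_i|>|\mathcal B|$, forces the projected span to have dimension strictly greater than $|\mathcal B|$. On the other hand, the full set $C^\ast$ spans exactly $|C^\ast|$ dimensions, since it is at least $|C^\ast|$ by the same genericity and at most $|C^\ast|$ by (a). For sets mixing labels whose targets lie in non-admissible blocks, I would use (i)--(ii) to show that ancestor directions outside $W_t$ appear. These add dimension beyond the count, or else they reduce to a union of admissible blocks, which are found no earlier than any single one; minimality of $r$ handles this.

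This step is the main obstacle. It requires turning the support conditions into rank statements, generically via a Hall/matching argument on the support pattern, and ruling out $\dim=r$ for unions of two admissible blocks of smaller total size. The latter is impossible, however, because each admissible block alone already has size at most $r$, and minimal $r$ picks a single block.

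Finally, for the ancestor identification performed by Algorithm~\ref{alg:block ancestor}, I would show the following. Removing a recovered block $B$ from $W$ keeps $\dim=|\hat C|$ iff the perturbations of $\hat C$ have no component along $\mathrm{row}([\bQ_k]_{\rho_k(B),:})$ modulo the others. By Lemma~\ref{lem: low rank Qk} this holds for non-ancestors. For block parents of $C^\ast$, Proposition~\ref{prop:block anc}(ii) together with Assumption~\ref{ass:non-degeneracy}(i) gives a nonzero $\bd$-component along $B$, so the dimension increases. Hence $\mathcal A$ contains all block parents, plus possibly some further ancestors. The transitive update in Algorithm~\ref{alg:causal order client} therefore recovers $anc_k^\rblk(C^\ast)$ exactly, because every ancestor is reached from some parent, and every block in $\mathcal A$ is a genuine ancestor. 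The output space $V$ equals $\Pi_{W^\perp}\mathrm{row}([\bQ_k]_{\rho_k(C^\ast),:})=\mathrm{row}([\bQ_k]_{\rho_k(C^\ast),:})$ by orthogonality. Adding $C^\ast$ to $\mathcal D_{t-1}$ preserves ancestor-closure, which completes the induction.
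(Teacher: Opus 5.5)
Your architecture is the paper's. You use induction over the outer loop with an ancestor-closed recovered family, Lemma~\ref{lem: low rank Qk} for the upper bound, Assumption~\ref{ass:non-degeneracy}(iii) against partial blocks, (i) against unrecovered parents, and minimality of $r$. In Algorithm~\ref{alg:block ancestor} you delete non-ancestors and retain parents. Two steps, however, do not go through as written.

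\textbf{Final projection.} You have only $pa_k^\rblk(C^\ast)\subseteq\mathcal A\subseteq anc_k^\rblk(C^\ast)$, yet you conclude $V=\Pi_{W^\perp}\mathrm{row}([\bQ_k]_{\rho_k(C^\ast),:})$ with $W$ spanned by the blocks in $\mathcal A$. The perturbation row space $U$ generally has components along every block of $Anc_k^\rblk(C^\ast)$, and those along an ancestor missing from $\mathcal A$ survive the projection. Since Algorithm~\ref{alg:block ancestor} tests each $B$ against all of $\mathcal I\setminus B$, a non-parent ancestor can indeed be deleted.

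Example: take the chain $3\to2\to1$ with $O_k=[3]$ (singleton blocks, all intervention parameters nonzero). Choose $\bH_k$ with $[\bH_k]_{2,:}\perp[\bH_k]_{3,:}$ but $\langle[\bH_k]_{1,:},[\bH_k]_{3,:}\rangle\neq0$.
\begin{itemize}
\item The intervention on node $1$ gives $U=\mathrm{span}([\bH_k]_{1,:},[\bH_k]_{2,:})$.
\item For $B=\{3\}$, $W_{-B}=\mathrm{span}([\bH_k]_{2,:})$ and the projected dimension is $1=|\hat C|$, so $\{3\}$ is deleted.
\item Then $V=\mathrm{span}(\Pi_{W_{-B}^\perp}[\bH_k]_{1,:})$ is not orthogonal to $[\bH_k]_{3,:}$, so $V\neq\mathrm{row}([\bQ_k]_{1,:})$, even though the recovered order is correct.
\end{itemize}
The paper's proof (its Step~3) makes the same unjustified claim that $W$ spans all true ancestor blocks. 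One repair is to project out the recovered spaces of the full transitive ancestor set produced by the update in Algorithm~\ref{alg:causal order client}. Then $U\subseteq\mathrm{row}([\bQ_k]_{\rho_k(C^\ast),:})\oplus W_{\mathrm{anc}}$, an orthogonal sum, and $\dim\cP_k(\hat C;W_t)=|C^\ast|$ forces the projection of $U$ to be all of $\mathrm{row}([\bQ_k]_{\rho_k(C^\ast),:})$. Alternatively, add a genericity condition on $\bH_k$.

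\textbf{Lower bounds.} What you flag as the main obstacle needs no Hall/matching argument; the missing idea is a change of basis. After projecting out $W_t$, only the coordinates $S$ of unrecovered ancestor blocks survive, and their projected $\bH_k$-rows remain independent. Multiply coefficient vectors by the invertible $[\bSigma_k]_{S,S}(\bI-[\bA_k]_{S,S}^T)^{-1}$. This sends the first generators of Lemma~\ref{lem: rank 2} to $[\be_{\rho_k(t_j)}]_S$ and the second to $[\bu_k^{(j)}]_S+[\bSigma_k]_{S,S}(\bI-[\bA_k]_{S,S}^T)^{-1}[\bd_k^{(j)}]_S$. So $\dim>r$ as soon as one of the latter is nonzero at an unselected coordinate. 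Restricted to a block, this is exactly the vector defining $\mathcal N_i$, and (iii) supplies such a coordinate by pigeonhole.

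For an unrecovered parent block $B$ (which is also what you need to keep parents in $\mathcal A$), Proposition~\ref{prop:block anc}(ii) plus (i) does not suffice. In raw coordinates, the first generators already have entries on $\rho_k(B)$ through edges $b\to m$ from $B$ into the child block. In the transformed vector, $[\bd_k^{(j)}]_{\rho_k(B)}$ is combined with within-block entries of $\bd_k^{(j)}$ carried onto $\rho_k(B)$ by $(\bI-[\bA_k]_{S,S}^T)^{-1}$, and these can cancel it. The paper handles this by induction along the topological order inside the child block. This effectively uses the child block's first node with a parent in $B$, for which that contribution vanishes.

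Two smaller points:
\begin{itemize}
\item Lemma~\ref{lem: rank 2} gives only containment. Your lower bounds therefore also need that the coefficients in its proof satisfy $c_1c_3-c_2^2\neq0$; then the row space equals the span of the two generators when these are independent.
\item In (a), $\mathrm{row}([\bQ_k]_{\rho_k(C^\ast),:})$ is orthogonal only to its ancestors' rows, not to all of $W_t$. The count $|C^\ast|$ should come from full row rank of $\bQ_k$.
\end{itemize}
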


Theorem~\ref{thm:client-main} shows that a single fragmented client cannot generally recover a node-wise latent DAG, but can exactly recover the client-specific block structure, block-level ancestral relation, and row subspaces of $\bQ_k$. 
The relabeling by $\psi:j\mapsto t_j$ only changes the names of recovered latent targets. 
The additional ambiguity, captured by $\bS_k$ and $\bP_{\sigma,k}$, is intrinsic to the non-uniqueness of the block RQ decomposition: it only changes the within-block basis and order-preserving block labels, while preserving the identifiable block-associated row subspaces and causal structure.


Applying the recovered rows of $\hat{\bQ}_k$ in Theorem \ref{thm:client-main} to $X_k$ therefore yields the corresponding client-specific latent variables, up to the intrinsic ambiguity induced by block-level ancestors, as follows.

\begin{corollary}[Recovery of $Z_k$ up to block-level ancestral ambiguity]
\label{cor:zk}
Assume Assumptions~\ref{ass: aligned intervention} and~\ref{ass:non-degeneracy} hold. Let $\hat{\bQ}_k$ be the output of Algorithm~\ref{alg:causal order client}, and define
\[
\hat Z_k=\hat{\bQ}_kX_k.
\]
For each true block $C\in\cC_k$, let $\hat C=\psi^{-1}(C)$ denote the corresponding estimated block. Then, for every $i\in\hat C$, the estimated latent variable $\hat Z_{k,(i)}$ lies in the span of the true latent variables indexed by the block ancestors of $C$, namely
\[
\hat Z_{k,(i)}\in \mathrm{span}\{Z_{(j)}:j\in C', C' \in Anc_k^\rblk(C)\}.
\]
Thus, up to the relabeling induced by $\psi$, each estimated latent variable is identified up to block-level ancestor mixing.
\end{corollary}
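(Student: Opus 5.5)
Since $X_k=\bG_k Z_k$ with $\bG_k$ of full column rank, we have $\bH_k X_k=\bG_k^\dagger\bG_k Z_k=Z_k$. The plan is to express each recovered row $\hat{\bq}_{k,i}$ in terms of the rows of $\bH_k$, and then to use the block upper-triangular structure of $\bR_k$ from Definition~\ref{def:RQ} to read off which coordinates of $Z_k$ can appear in $\hat Z_{k,(i)}$. Concretely, Theorem~\ref{thm:client-main}(ii) gives, for $\hat C=\psi^{-1}(C)$,
\[
\hat{\bq}_{k,i}\in\mathrm{row}\bigl([\bQ_k]_{\rho_k(C),:}\bigr).
\]
So it suffices to show that every row of $[\bQ_k]_{\rho_k(C),:}$, applied to $X_k$, lies in $\mathrm{span}\{Z_{(j)}:j\in C',\,C'\in Anc_k^\rblk(C)\}$.

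For this step, invert the block RQ factorization. Because $\bH_k$ has full row rank $p_k$ and $\bQ_k$ has full row rank, $\bR_k$ is invertible. An invertible block upper-triangular matrix with respect to the partial order $\preceq_k^\rblk$ has an inverse with the same block-triangular pattern. We can verify this either through the chosen topological ordering of blocks, where ancestors receive larger $\rho_k$ indices and a block-triangular inverse remains block triangular, or through a Neumann-series argument combined with transitivity of $\prec_k^\rblk$. Hence $\bQ_k=\bR_k^{-1}\bH_k$ with $[\bR_k^{-1}]_{\rho_k(C),\rho_k(C'')}=0$ unless $C\preceq_k^\rblk C''$.

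It follows that for $r\in\rho_k(C)$,
\[
[\bQ_k]_{r,:}X_k=\sum_{C''\in Anc_k^\rblk(C)}[\bR_k^{-1}]_{r,\rho_k(C'')}\,[\bH_k]_{\rho_k(C''),:}X_k=\sum_{C''\in Anc_k^\rblk(C)}[\bR_k^{-1}]_{r,\rho_k(C'')}\,[Z_k]_{\rho_k(C'')}.
\]
This is a linear combination of $Z_{(j)}$ with $j\in C''\in Anc_k^\rblk(C)$. Taking linear combinations over $r\in\rho_k(C)$ gives the claim for $\hat Z_{k,(i)}=\hat{\bq}_{k,i}^\top X_k$. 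The relabeling by $\psi$ enters only through the identification $\hat C=\psi^{-1}(C)$, which Theorem~\ref{thm:client-main}(i) supplies.

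The main obstacle is the invertibility of $\bR_k$ together with the claim that its inverse preserves the block-ancestral sparsity pattern with respect to a partial order, not a total one. We plan to handle this by ordering the blocks along a linear extension of $\prec_k^\rblk$, consistent with the stated convention on $\rho_k$, so that $\bR_k$ is genuinely block upper triangular. We then argue inductively via block back-substitution: the $(C,C'')$ block of the inverse is a sum of products along chains $C\preceq_k^\rblk\cdots\preceq_k^\rblk C''$ of nonzero blocks, and transitivity shows it vanishes unless $C\preceq_k^\rblk C''$. Invertibility of each diagonal block follows from $\det\bR_k=\prod_C\det[\bR_k]_{\rho_k(C),\rho_k(C)}\neq 0$.
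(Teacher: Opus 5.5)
Your proposal is correct and follows essentially the same route as the paper. You write $\bQ_kX_k=\bR_k^{-1}\bH_kX_k=\bR_k^{-1}Z_k$, show that $\bR_k^{-1}$ inherits the block-ancestral support of $\bR_k$ (the paper's auxiliary lemma uses the same block-diagonal-plus-nilpotent Neumann series and transitivity argument you sketch), and transfer the conclusion to $\hat{\bq}_{k,i}$ via Theorem~\ref{thm:client-main}. Using part (ii) directly instead of $\hat{\bQ}_k=\bS_k\bP_{\sigma,k}\bQ_k$, and deriving invertibility of $\bR_k$ from $\mathrm{rank}(\bH_k)=p_k$ (which the paper only asserts), are fine; one caveat, shared with the paper, is that the support step needs $\prec_k^\rblk$ to be transitive, so the block order must be read as the transitive closure of the pairwise-witness relation in its definition, which need not be transitive when an intermediate block is not a singleton.
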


Corollary~\ref{cor:zk} gives the latent-level interpretation of the client-specific recovery result. For a single fragmented client, exact node-wise latent recovery is generally impossible because marginalizing unobserved latent variables induces bidirected dependencies and merges nodes into block-structured objects. Nevertheless, the recovered intervention-induced subspaces identify the latent variables at the finest level permitted by the model, namely up to block-level ancestor mixing.

This completes the client-level part of the identifiability picture: from a single client, one can recover its block structure, the associated block-level ancestral relation, and the corresponding latent subspaces up to block-level ancestral ambiguity. In the next section, we show how such client-specific fragments can be assembled to recover the global latent causal order.

\section{Global Ancestral Recovery Across Clients}
\label{sec:global}

Section~\ref{sec:client} shows that each client recovers a correct block-structured fragment of the global latent system. We now return to the question raised in Section~\ref{sec:intro}: \textbf{can we recover the global latent causal order by assembling fragmented causal information from multiple clients?} This is not possible from a single client in general: marginalization may merge several global latent variables into one client-specific block. 
The key observation is that different clients may induce different blockings. Thus, two latent variables that are entangled in one client's view may be separated and ordered in another client's recovered block structure.

We first formalize the cross-client visibility needed for global assembly. 
\begin{assumption}[Cross-client coverage for global assembly]
\label{ass: global cover edge}
If $j\to i$ is a Hasse edge of the global latent DAG $\cG$, namely $j\prec i$ and there exists no node $m$ such that $j\prec m\prec i$, there exists a client $k\in[K]$ such that (i) $i,j \in O_k$; (ii) $\{i\},\{j\}\in \mathcal C_k$; (iii) $\{j\} \in pa_k^\rblk(\{i\})$.
\end{assumption}

Under Assumption~\ref{ass: global cover edge}, every global Hasse edge is witnessed by at least one client as a parent relation between singleton blocks. 
Thus Algorithm~\ref{alg:global} assembles the global order by collecting all recovered singleton-block parent relations $C_{k,j}\in \hat{pa}_k^\rblk(C_{k,i})$ with $C_{k,i}=\{i\}$ and $C_{k,j}=\{j\}$, and then taking their transitive closure. 
Theorem~\ref{thm:global} shows that this procedure recovers the global latent ancestral relation, up to the intervention-target relabeling $\psi:j\mapsto t_j$.

\begin{theorem}[Correctness of global ancestral recovery]
\label{thm:global}
Assume Assumptions~\ref{ass: aligned intervention}--\ref{ass: global cover edge} hold. Let $\{(\hat{\cC}_k,\hat{\prec}_k^\rblk)\}_{k=1}^K$ be the outputs of Algorithm~\ref{alg:causal order client} and the input to Algorithm~\ref{alg:global}. Let $\hat{\prec}$ be the output of Algorithm~\ref{alg:global}. Then $(\cV,\hat{\prec})$ recovers $(\cV,\prec)$ up to relabeling under the bijection $\psi:j\mapsto t_j$.
\end{theorem}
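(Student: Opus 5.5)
Since every pair added to $\tilde{\prec}$ is read off recovered client structures, I would argue by a double inclusion: soundness (every added pair is a true ancestral relation) and completeness (every true relation is produced by the transitive closure). Throughout, I identify estimated labels with true nodes through $\psi$. By Theorem~\ref{thm:client-main}(i), for each client $k$ the output $(\hat{\cC}_k,\hat{\prec}_k^\rblk)$ equals $(\cC_k,\prec_k^\rblk)$ after relabeling by $\psi$. By Assumption~\ref{ass: aligned intervention}(ii) this relabeling is the same across clients, so singleton blocks $\{j\}$ from different clients that carry the same label refer to the same global node. Hence the set of pairs collected by Algorithm~\ref{alg:global} is exactly
\[
\tilde{\prec}=\bigl\{(i,j): \exists k,\ \{i\},\{j\}\in\cC_k,\ \{j\}\in pa_k^\rblk(\{i\})\bigr\},
\]
after relabeling. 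The rest of the proof concerns this population-level set.

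\textbf{Soundness.} Suppose $\{j\}\in pa_k^\rblk(\{i\})$. Then $\{i\}\prec_k^\rblk\{j\}$, which by the definition of block order means $i\prec_k j$. Proposition~\ref{prop:block anc}(ii) gives the same conclusion directly, since it yields $j\in pa_k(i)$. The paper notes, via the proof of Proposition~\ref{prop:margin Z}, that $\prec_k$ is the restriction of $\prec$ to $O_k$, so $i\prec j$ globally. Therefore $\tilde{\prec}\subseteq\prec$. Because $\prec$ is transitive (it is path reachability in a DAG), its transitive closure also satisfies $\hat{\prec}\subseteq\prec$.

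\textbf{Completeness.} Take $i\prec j$, so there is a directed path from $j$ to $i$ in $\cG$. I would show that $i\prec j$ lies in the transitive closure of the covering relation of $\prec$. Because $\cG$ is finite and acyclic, $\prec$ is a strict partial order. Any chain $i\prec m\prec j$ can be refined by inserting intermediate elements, and this must terminate because chains are bounded in length by $p$. The result is a chain in which each consecutive step is a Hasse (covering) pair with no node strictly between.

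Assumption~\ref{ass: global cover edge} is stated for Hasse edges "$j\to i$", and I read it as applying to covering pairs of $\prec$. I will make explicit that I use it in that sense, to avoid confusion with the arrow notation. For each covering pair it supplies a client $k$ with $\{a\},\{b\}\in\cC_k$ and $\{b\}\in pa_k^\rblk(\{a\})$, so the pair lies in $\tilde{\prec}$. Chaining these pairs shows $i\mathrel{\hat{\prec}}j$.

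The main obstacle I expect is bookkeeping rather than mathematics. First, I must justify carefully that the pair recorded by Algorithm~\ref{alg:global} has the correct orientation, namely that $i\,\tilde{\prec}\,j$ corresponds to $i\prec j$ ("$j$ is an ancestor of $i$"). Second, I must confirm that $\hat{pa}_k^\rblk$ computed from $\hat{\prec}_k^\rblk$ coincides with the true block-parent relation. For the latter, I would note that block parents are defined purely from the order $\prec_k^\rblk$, so exact recovery of that order in Theorem~\ref{thm:client-main} transfers to parents. Finally, the consistent relabeling across clients must be tied to Assumption~\ref{ass: aligned intervention}(ii).
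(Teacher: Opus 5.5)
Your proposal is correct and follows essentially the same route as the paper's proof. It uses a double inclusion. Soundness comes from Theorem~\ref{thm:client-main}, the fact that $\prec_k$ is the restriction of $\prec$ to $O_k$, and transitivity of $\prec$. Completeness decomposes $i\prec j$ into covering (Hasse) pairs, invokes Assumption~\ref{ass: global cover edge} for each pair, and takes the transitive closure.

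Your differences only tighten the paper's bookkeeping. You read $i\prec_k j$ directly off the singleton block order instead of going through Proposition~\ref{prop:block anc}. You also refine to a maximal chain explicitly, where the paper simply asserts that a directed path consists of cover edges.
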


\begin{remark}
Theorems~\ref{thm:client-main} and~\ref{thm:global}, together with Corollary~\ref{cor:zk}, yield a two-level identifiability picture: a single client recovers only a block-structured fragment and the corresponding latent variables up to block-level ancestor mixing, whereas multiple such fragments can be assembled to recover the global node-level ancestral relation. This local-to-global gap is intrinsic to the fragmented multi-client setting.
\end{remark}

\section{Finite-sample Method and Experiments}
\label{sec:experiments}

The preceding sections establish population-level identifiability from exact precision matrices. 
We now evaluate a finite-sample implementation of the proposed recovery procedure on synthetic fragmented multi-client systems, and include the population case $n=\infty$ to verify the theoretical recovery guarantees.

\textbf{Finite-sample implementation.} In finite samples, each population precision matrix $\bTheta_k^{(j)}$ is replaced by an estimator $\hat{\bTheta}_k^{(j)}$. Since exact row-space dimensions are unstable under estimation noise, we use a thresholded singular-value criterion: for a matrix $M$, its effective row-space dimension is taken as the smallest $r$ such that
$\frac{\sum_{i=1}^{r}\sigma_i(M)}{\sum_i\sigma_i(M)}>1-\tau$,
where $\sigma_i(M)$ is the $i$-th largest singular value and $\tau$ is a tuning parameter. This gives finite-sample versions of Algorithms~\ref{alg:block ancestor} and~\ref{alg:causal order client}; details are deferred to Appendix~\ref{app: expriments}.


\textbf{Setup.} We consider three different global causal network structures $\bA$ together with different client-specific missing patterns $\{O_k\}_{k=1}^K$, representing different network scales and sparsity levels. We consider two noise distributions, Gaussian and Laplace. For each graph setting and noise setting, the full specifications are deferred to Appendix \ref{app: expriments}. We report results for sample sizes $n\in\{10^3,\dots,10^6\}$ together with the population case $n=\infty$.

\textbf{Evaluation metrics.} We use three metrics. For ease of exposition, we first align the row ordering of $\hat{\bQ}_k$ with that of $\bQ_k$ according to the permutation induced by $\bP_{\sigma,k}$; this does not affect the evaluation itself. The first metric, $E_1$, is the F1-score between the true global partial order $\prec$ and the estimate $\hat{\prec}$. The second metric, $E_2$, measures the average block-level row-subspace error between the true and estimated rows of $\bQ_k$. The third metric, $E_3$, measures support leakage in the estimated latent variables $\hat{Z}_k=\hat{\bQ}_kX_k$, namely the fraction of nonzero coefficients appearing outside the block-level ancestral support allowed by Corollary \ref{cor:zk}. All three metrics take values in $[0,1]$, where larger $E_1$ indicates better performance, while smaller $E_2$ and $E_3$ indicate more accurate recovery. Complete formulas are given in Appendix \ref{app: expriments}.

\textbf{Results.} For the six scenarios induced by three graph settings and two noise settings, Figure \ref{fig:setA_gau} and Figures \ref{fig:setA_lap}-\ref{fig:setC_lap} in Appendix \ref{app: expriments} plot $E_1,E_2,E_3$ against the sample size $n$. Across all scenarios, performance improves as $n$ increases: the estimated global ancestor relation becomes more accurate, the client-specific subspace error decreases, and the support leakage becomes smaller. In the population case $n=\infty$, where the estimated precision matrices are replaced by the corresponding population precision matrices, we obtain $E_1=1$ and $E_2=E_3=0$. This exactly matches Theorems \ref{thm:client-main} and \ref{thm:global} together with Corollary \ref{cor:zk}, and confirms that the finite-sample implementation converges to the population recovery guaranteed by the theory. The same qualitative behavior is observed across all three graph settings and under both Gaussian and Laplace noise, indicating that the method is robust to variations in graph scale, sparsity, and noise distribution.

\begin{figure}[h]
    \centering
    \begin{subfigure}[h]{0.32\linewidth}
        \centering
        \includegraphics[width=\linewidth]{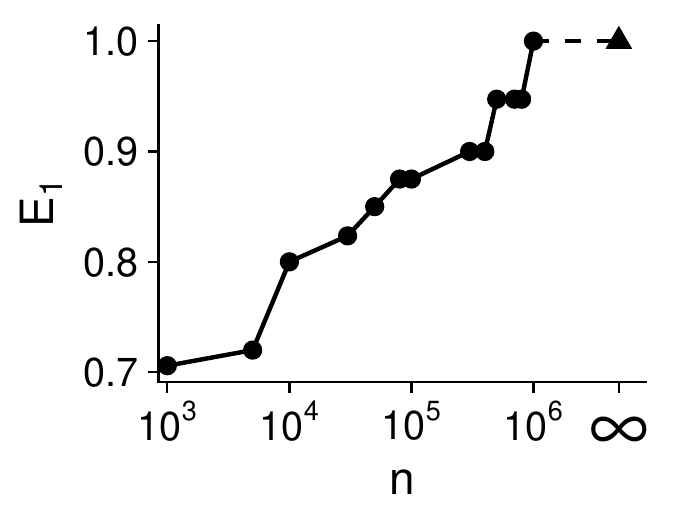}
        \caption{Global ancestral F1.}
        \label{fig:e1}
    \end{subfigure}
    \hfill
    \begin{subfigure}[h]{0.32\linewidth}
        \centering
        \includegraphics[width=\linewidth]{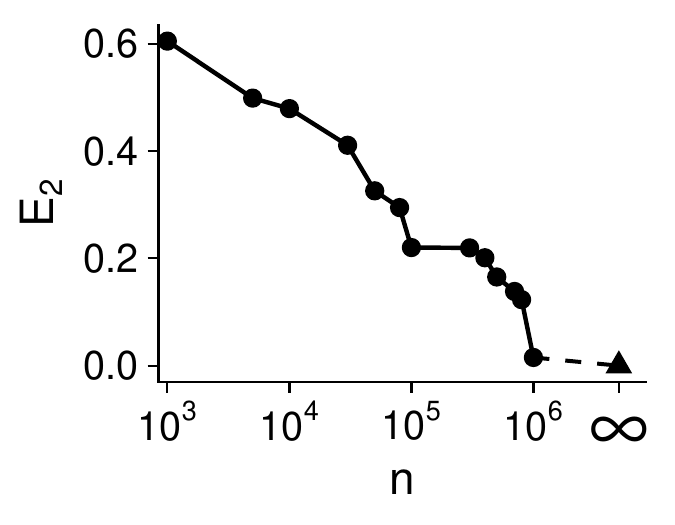}
        \caption{Subspace error of $\{\hat{\bQ}_k\}_{k=1}^K$.}
        \label{fig:e2}
    \end{subfigure}
    \hfill
    \begin{subfigure}[h]{0.32\linewidth}
        \centering
        \includegraphics[width=\linewidth]{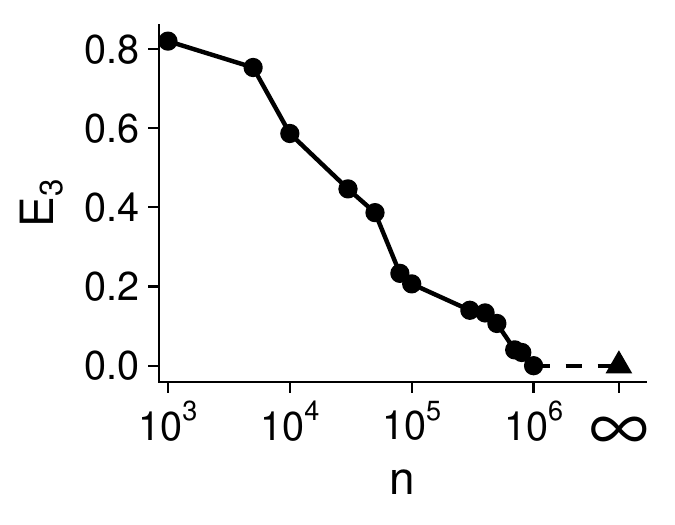}
        \caption{Support error of $\{\hat{Z}_k\}_{k=1}^K$.}
        \label{fig:e3}
    \end{subfigure}
    \caption{Finite-sample performance under one graph setting A and Gaussian noise. As $n$ increases, global ancestral recovery improves while subspace and support errors decrease. The rightmost marker at $\infty$ corresponds to the population-level precision-matrix input.}
    \label{fig:setA_gau}
\end{figure}

\section{Conclusion}
\label{sec:conslusion}
We study causal representation learning in a fragmented multi-client setting, where clients share a global latent causal system but each accesses and intervenes on only a subset of latent variables. We show that latent marginalization changes the client-level identifiable object from a node-wise latent DAG to a block structure, a block-level ancestral relation, and latent subspaces up to block-level ancestor mixing. We further show that these client-specific fragments can be assembled across clients to recover the global node-level latent causal order. Future directions include finite-sample guarantees, extensions to nonlinear models and more general observation mappings, and stronger identifiability under hard interventions, potentially including exact recovery of $\bA$ and $\bH$.

\bibliographystyle{plainnat}
\bibliography{ref.bib}

\appendix
\renewcommand{\thefigure}{A.\arabic{figure}}
\renewcommand{\thetheorem}{A.\arabic{theorem}}
\renewcommand{\thelemma}{A.\arabic{lemma}}
\renewcommand{\thecorollary}{A.\arabic{corollary}}
\renewcommand{\theproposition}{A.\arabic{proposition}}
\setcounter{figure}{0}
\setcounter{lemma}{0}
\setcounter{theorem}{0}
\setcounter{proposition}{0}
\setcounter{corollary}{0}

\clearpage
\startcontents[appendices]
\section*{Appendix Contents}
\printcontents[appendices]{}{1}{\setcounter{tocdepth}{1}}

\section{Related Work}
\label{app:literature}

\textbf{Intervention-based CRL in a shared latent space.}
A central line of CRL studies identifiability from interventions or multiple environments under a shared latent space \citep{scholkopf2021toward,squires2023linear,buchholz2023learning,varici2024general,varici2025score,zhang2024causal,liu2026identifying,zhang2023identifiability,liu2024identifiable,ahuja2023interventional,ng2025causal,bing2024identifying,von2023nonparametric}. These works differ in intervention regimes, mixing assumptions, and identifiability targets, but they typically assume that all environments are defined over the same latent variables. Consequently, the target remains a node-level latent causal structure, or an ambiguity class thereof, in a single latent space. In contrast, we consider fragmented clients that access and intervene on only subsets of a common global latent causal system. Marginalizing the unused latent variables induces client-specific bidirected dependencies, making the local identifiable object block-structured rather than a node-wise latent DAG.

\textbf{CRL under multi-view or multiple domains.}
A related line studies CRL with multiple views or multiple domains. \citet{yao2024multi} consider multi-view CRL where each view depends on a subset of latent variables and identify shared latent information across views, while \citet{sturma2023unpaired} recover a shared causal representation from unpaired multi-domain marginal distributions. These works are related to ours through partial observability or multiple clients, but they do not address how latent marginalization changes each client's local causal object. In our setting, the unused latent variables induce bidirected dependencies, so the client-level target becomes a block-structured fragment that must be assembled to recover the global latent causal order.

\textbf{Latent confounding and mixed graphs.}
Our client-specific induced graphs are related to causal models with latent confounders, where ancestral graphs and acyclic directed mixed graphs (ADMGs) use bidirected edges to encode dependencies induced by marginalization or unobserved common causes \citep{richardson2002ancestral,zhang2008completeness,tian2002general}. In linear structural equation models, mixed graphs have also been used to study global or generic parameter identifiability \citep{drton2011global,foygel2012half}. These works provide important graphical languages for latent confounding, but they mainly concern observed-variable causal structure or parameter identifiability. In contrast, our mixed graphs arise over client-specific latent variables accessed only through high-dimensional observations, and our target is block-level latent identifiability from interventional precision perturbations rather than observed-variable ADMG discovery.

\textbf{Causal discovery from overlapping variable sets.}
Another related line studies causal discovery from multiple datasets with overlapping variable sets, including methods that integrate locally learned structures or constraints across observational and interventional datasets \citep{danks2008integrating,triantafillou2010learning,triantafillou2015constraint,dhir2020integrating}. These works also assemble information from partial views of a system, but they typically operate on observed variables and target a graph or equivalence class over the union of measured variables. In contrast, our clients observe high-dimensional mixtures of latent variables, and missing latent variables induce bidirected dependencies and block-level ambiguity. Our task is therefore not to merge overlapping observed-variable graphs, but to recover a global latent causal order from block-structured latent fragments.

\section{Normalization of the Noise Covariance}
\label{app:normalize_sigma}

In the main text, we work under the normalized parametrization $\bSigma=\bI$. In this appendix, we explain why this assumption is made without loss of generality.

Suppose there exists an alternative parameterization
\[
\Bigl\{\bA',\bSigma',\{\bH_k'\}_{k\in[K]},\{\Tilde{\bA}^{(j)'},\Tilde{\bSigma}^{(j)'}\}_{j\in[p]}\Bigr\}
\]
with $\bSigma'\neq \bI$. Let
\[
\bD=(\bSigma')^{-1/2},
\]
and define the transformed parameters
\[
\bA=\bD\bA'{\bD}^{-1},
\qquad
\bSigma=\bI,
\qquad
\bH_k=[\bD]_{O_k,O_k}\bH_k',
\]
together with
\[
\Tilde{\bA}^{(j)}=\bD\Tilde{\bA}^{(j)'}\bD^{-1},
\qquad
\Tilde{\bSigma}^{(j)}=\bD\Tilde{\bSigma}^{(j)'}\bD .
\]

Under this transformation, the latent variables are rescaled but the induced second-order information of the observed variables remains unchanged. In particular, the collection of observed distributions
\[
\{X_k^{(j)}\}_{k\in[K],\, j\in\cJ_k\cup\{0\}}
\]
generated by the transformed parameterization is identical to that generated by the original one. Therefore, the two parameterizations are observationally equivalent.

Hence, whenever a model with general noise covariance $\bSigma'$ is observationally feasible, there exists an equivalent reparameterization with identity noise covariance. It is therefore sufficient to analyze the normalized case $\bSigma=\bI$ throughout the paper.

\section{Discussion of Assumptions}
\label{app:assumptions}
\paragraph{Discussion of Assumption \ref{ass: aligned intervention}.} 
Assumption~\ref{ass: aligned intervention} combines a standard within-client coverage requirement with a cross-client label alignment condition. The coverage part requires that, for each client $k$, the intervention set $\cJ_k$ is in bijection with the client-specific node set $\cV_k$, so that every node is targeted by exactly one intervention environment. Such assumptions are standard in single-client CRL and intervention-based latent causal discovery, where sufficiently rich intervention coverage is needed to identify causal order or ancestral relations in the worst case \citep{varici2025score,buchholz2023learning,squires2023linear}. In our setting, this requirement is especially natural because each client only observes a subset of the global latent system, and marginalizing the missing latent variables induces bidirected edges, so that even a single client can typically recover only a block-structured object. The additional cross-client alignment condition ensures that intervention environments referring to the same global latent target share a common label across clients. This does not strengthen within-client recovery, but is needed to place the client-specific structural fragments into a common global coordinate system for subsequent assembly. Under this assumption, we define $\psi:j\mapsto t_j$ as the map from the global intervention index to its target node.

\paragraph{Discussion of Assumption \ref{ass:non-degeneracy}.} Assumption~\ref{ass:non-degeneracy} is a standard genericity condition in intervention-based causal discovery and related CRL work, used to rule out pathological intervention choices that create exceptional cancellations in the structural signal \citep{varici2025score,squires2023linear}. The first two parts correspond to non-degeneracy of the directed and bidirected components, respectively. In view of Lemma \ref{lem:support-du}, the supports of $\bd_k^{(j)}$ and $\bu_k^{(j)}$ are structurally constrained by the underlying graph, and these two conditions require that the entries which can be nonzero are, generically, indeed nonzero. Intuitively, this means that the intervention sufficiently excites all directed and bidirected directions that are structurally reachable, rather than being canceled by special parameter choices; such cancellations can typically be regarded as measure-zero pathologies. The third part imposes a non-degeneracy condition within each block. Combining the first two parts, for any $i\in C_{k,\ell}$, the possible elements of $\mathcal N_i$ lie in $\rho_k\big((sp_k(Anc_k(i))\cup Anc_k(i))\cap C_{k,\ell}\big)$. Hence, for any proper subset $\mathcal B\subsetneq C_{k,\ell}$, the union $\bigcup_{i\in\mathcal B}\mathcal N_i$ typically extends beyond $\mathcal B$ itself. Indeed, by the definition of the block structure, any two nodes in the same block are connected through a combination of ancestral relations and bidirected connectivity, so the set of potentially nonzero coordinates necessarily reaches nodes in $C_{k,\ell}\setminus \mathcal B$ as well. Therefore, as long as one avoids further pathological cancellations on these potentially active coordinates, one expects $\left|\bigcup_{i\in\mathcal B}\mathcal N_i\right|>|\mathcal B|$. In this sense, the third part likewise excludes degenerate interventions that collapse the within-block information. Overall, Assumption~\ref{ass:non-degeneracy} only requires that interventions generically reveal the structural signal allowed by the graph, rather than imposing any strong additional structural restriction.

\paragraph{Discussion of Assumption \ref{ass: global cover edge}.} Assumption~\ref{ass: global cover edge} is a cross-client coverage condition tailored to global recovery. Its role is to constrain the missingness patterns $\{O_k\}_{k=1}^K$ across clients so that the client-specific structural fragments can be assembled into a global latent causal order. Some such condition is unavoidable in our setting. On the one hand, if $O_k=[p]$ for every client $k$, then the problem essentially reduces to the single-client case, and there is no nontrivial fragmented multi-client assembly. On the other hand, if $\cup_{k=1}^K O_k \subsetneq [p]$, then at least one global latent node is never observed by any client, and its ancestral relations with the remaining nodes are clearly unidentifiable. Assumption~\ref{ass: global cover edge} adopts a relatively weak intermediate condition: rather than requiring some client to observe all nodes, or requiring every global ancestral relation to be directly recovered in a single client, it only requires each global Hasse edge to be witnessed in some client through a block-level parent relation between singleton blocks. Since the global ancestral order is the transitive closure of the Hasse edges, this provides exactly the level of visibility needed for the subsequent global assembly step.

\section{Proof of Proposition \ref{prop:margin Z}}
\label{app:proof Prop matgin Z}

\begin{proposition}[Proposition \ref{prop:margin Z} in the main text]
    For $Z$ generated by Eq.~\eqref{eq:gloabl latent}, let $Z_k=(Z_{(i)})_{i\in O_k}$. Then
\begin{equation}
Z_k = \bA_k Z_k + E_k,
\end{equation}
where
\begin{align}
\label{eq:Ak2}
    \bA_k
    &=
    [\bA]_{O_k,O_k}
    +
    [\bA]_{O_k,M_k}
    (\bI-[\bA]_{M_k,M_k})^{-1}
    [\bA]_{M_k,O_k}, \\
    E_k
    &=
    E_{O_k}
    +
    [\bA]_{O_k,M_k}
    (\bI-[\bA]_{M_k,M_k})^{-1}
    E_{M_k}.
\end{align}
Moreover, $E_k$ has mean zero and covariance
\begin{equation}
\label{eq:Sigmak2}
    \bSigma_k
    =
    [\bSigma]_{O_k,O_k}
    +
    [\bA]_{O_k,M_k}
    (\bI-[\bA]_{M_k,M_k})^{-1}
    [\bSigma]_{M_k,M_k}
    (\bI-[\bA]_{M_k,M_k})^{-T}
    [\bA]_{O_k,M_k}^{T}.
\end{equation}

Furthermore $\prec_k$, which is induced by $\bA_k$, is exactly the restriction of the $\prec$, which is induced by $\bA$,  to $\cV_k = O_k$.
\end{proposition}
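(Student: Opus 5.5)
We split the structural equation $Z=\bA Z+E$ into its $O_k$ and $M_k$ rows, solve for $Z_{M_k}$, and substitute; the ancestral claim then follows from a path expansion of the Neumann series.

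\textbf{Block elimination.} After permuting coordinates, the $M_k$ rows read
$Z_{M_k}=[\bA]_{M_k,O_k}Z_{O_k}+[\bA]_{M_k,M_k}Z_{M_k}+E_{M_k}$.
The matrix $[\bA]_{M_k,M_k}$ is a principal submatrix of a strictly upper triangular matrix. It is therefore nilpotent, so $\bI-[\bA]_{M_k,M_k}$ is invertible with inverse $\sum_{r\ge0}[\bA]_{M_k,M_k}^r$, a finite sum. Hence
\[
Z_{M_k}=(\bI-[\bA]_{M_k,M_k})^{-1}\bigl([\bA]_{M_k,O_k}Z_{O_k}+E_{M_k}\bigr).
\]
Substituting this into the $O_k$ rows $Z_{O_k}=[\bA]_{O_k,O_k}Z_{O_k}+[\bA]_{O_k,M_k}Z_{M_k}+E_{O_k}$ gives exactly $Z_k=\bA_kZ_k+E_k$, with $\bA_k$ and $E_k$ as stated.

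\textbf{Noise moments.} $E_k$ is a linear map of $E$ applied to $(E_{O_k},E_{M_k})$, so it has mean zero. Because $\bSigma$ is diagonal, $E_{O_k}$ and $E_{M_k}$ are uncorrelated, so the cross terms vanish and the covariance is the stated $\bSigma_k$. Since $\bA_k$ is a principal submatrix plus correction terms of a triangular structure, $\bA_k$ is again strictly upper triangular in the induced order, and $\bI-\bA_k$ is invertible.

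\textbf{Ancestral relation.} This is the main step. Expanding the inverse gives
\[
[\bA_k]_{ij}=A_{ij}+\sum_{r\ge0}\ \sum_{m_0,\dots,m_r\in M_k}A_{i m_r}\cdots A_{m_0 j}.
\]
So $[\bA_k]_{ij}$ is a weighted sum over directed paths $j\to\cdots\to i$ in $\cG$ whose interior nodes all lie in $M_k$.

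First direction: if $i\prec_k j$, each directed edge of $\cG_k$ has $[\bA_k]_{ij}\neq0$. A nonzero sum needs at least one nonzero term, hence a directed path in $\cG$. Concatenating these paths gives $i\prec j$.

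Converse: if $i\prec j$ with $i,j\in O_k$, take a directed path in $\cG$ from $j$ to $i$ and cut it at its $O_k$ nodes. Each segment is a path whose interior lies in $M_k$, so it contributes to the corresponding entry of $\bA_k$. The difficulty is that such an entry could vanish by cancellation. Two options handle this:
\begin{itemize}
\item adopt the faithfulness/genericity convention implicit in the setup (the paper's statement that $\prec_k$ is exactly the restriction of $\prec$ presupposes it), namely that path-sum entries do not cancel for generic $\bA$;
\item alternatively, argue via $(\bI-\bA_k)^{-1}=[(\bI-\bA)^{-1}]_{O_k,O_k}$ and note that the total-effect entry is generically nonzero iff a directed path exists.
\end{itemize}
I would state the genericity explicitly, noting that cancellation is a polynomial-zero, measure-zero event. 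This cancellation issue is the main obstacle; the algebra in the first two steps is routine.
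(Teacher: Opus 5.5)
Your proposal is correct and follows the paper's proof essentially step for step: you eliminate $Z_{M_k}$ using the nilpotency of $[\bA]_{M_k,M_k}$, use the uncorrelatedness of $E_{O_k}$ and $E_{M_k}$ to obtain $\bSigma_k$, and read both directions of the ancestral claim off the Neumann-series path expansion of $\bA_k$. Your cancellation caveat is well founded and is the exact step the paper's converse passes over silently (it asserts that each segment with interior in $M_k$ ``induces a directed edge in $\cG_k$''): with edges $3\to1$, $3\to2$, $2\to1$ of weights $a,b,c$ and $O_k=\{1,3\}$ one gets $[\bA_k]_{\rho_k(1),\rho_k(3)}=a+bc$, which vanishes when $a=-bc$ even though $1\prec 3$, so an explicit genericity condition is required for the statement to hold at all, not an extra assumption of your own, and your client-independent total-effect version via $(\bI-\bA_k)^{-1}=[(\bI-\bA)^{-1}]_{O_k,O_k}$ is the cleanest way to state it.
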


\begin{proof}
For client $k$, partition the global latent vector and exogenous noise as
\[
Z=
\begin{pmatrix}
Z_{O_k}\\
Z_{M_k}
\end{pmatrix},
\qquad
E=
\begin{pmatrix}
E_{O_k}\\
E_{M_k}
\end{pmatrix}.
\]
Then the global latent SEM in Eq.~(\ref{eq:gloabl latent}) can be written in block form as
\[
\begin{pmatrix}
Z_{O_k}\\
Z_{M_k}
\end{pmatrix}
=
\begin{pmatrix}
[\bA]_{O_k,O_k} & [\bA]_{O_k,M_k}\\
[\bA]_{M_k,O_k} & [\bA]_{M_k,M_k}
\end{pmatrix}
\begin{pmatrix}
Z_{O_k}\\
Z_{M_k}
\end{pmatrix}
+
\begin{pmatrix}
E_{O_k}\\
E_{M_k}
\end{pmatrix}.
\]
Equivalently,
\begin{align}
Z_{O_k}
&=
[\bA]_{O_k,O_k}Z_{O_k}
+
[\bA]_{O_k,M_k}Z_{M_k}
+
E_{O_k},
\label{eq:prop1_block_O}
\\
Z_{M_k}
&=
[\bA]_{M_k,O_k}Z_{O_k}
+
[\bA]_{M_k,M_k}Z_{M_k}
+
E_{M_k}.
\label{eq:prop1_block_M}
\end{align}

Since $\bA$ is strictly upper triangular under a topological order, the principal submatrix $[\bA]_{M_k,M_k}$ is also strictly upper triangular. Hence $\bI-[\bA]_{M_k,M_k}$ is invertible, and Eq.~(\ref{eq:prop1_block_M}) gives
\begin{equation}
\label{eq:prop1_ZM}
Z_{M_k}
=
(\bI-[\bA]_{M_k,M_k})^{-1}[\bA]_{M_k,O_k}Z_{O_k}
+
(\bI-[\bA]_{M_k,M_k})^{-1}E_{M_k}.
\end{equation}
Substituting Eq.~(\ref{eq:prop1_ZM}) into Eq.~(\ref{eq:prop1_block_O}), we obtain
\begin{align*}
Z_{O_k}
&=
[\bA]_{O_k,O_k}Z_{O_k}
+
[\bA]_{O_k,M_k}
(\bI-[\bA]_{M_k,M_k})^{-1}
[\bA]_{M_k,O_k}Z_{O_k}
\\
&\qquad
+
[\bA]_{O_k,M_k}
(\bI-[\bA]_{M_k,M_k})^{-1}E_{M_k}
+
E_{O_k}.
\end{align*}
Therefore,
\[
Z_k=\bA_k Z_k+E_k,
\qquad Z_k=Z_{O_k},
\]
where
\[
\bA_k
=
[\bA]_{O_k,O_k}
+
[\bA]_{O_k,M_k}
(\bI-[\bA]_{M_k,M_k})^{-1}
[\bA]_{M_k,O_k},
\]
and
\[
E_k
=
E_{O_k}
+
[\bA]_{O_k,M_k}
(\bI-[\bA]_{M_k,M_k})^{-1}
E_{M_k}.
\]
This proves the claimed induced structural equation.

It remains to compute the covariance of $E_k$. Let
\[
\bB_k
\triangleq
[\bA]_{O_k,M_k}
(\bI-[\bA]_{M_k,M_k})^{-1}.
\]
Then
\[
E_k=E_{O_k}+\bB_kE_{M_k}.
\]
Since the global exogenous noise covariance is $\bSigma$ and the coordinates of $E$ are mutually uncorrelated, we have
\[
Cov(E_{O_k},E_{M_k})=\mathbf 0.
\]
Hence
\begin{align*}
\bSigma_k
\triangleq
Cov(E_k)
&=
Cov(E_{O_k}+\bB_kE_{M_k}) \\
&=
[\bSigma]_{O_k,O_k}
+
\bB_k[\bSigma]_{M_k,M_k}\bB_k^T.
\end{align*}
Substituting the definition of $\bB_k$ yields
\[
\bSigma_k
=
[\bSigma]_{O_k,O_k}
+
[\bA]_{O_k,M_k}
(\bI-[\bA]_{M_k,M_k})^{-1}
[\bSigma]_{M_k,M_k}
(\bI-[\bA]_{M_k,M_k})^{-T}
[\bA]_{O_k,M_k}^T,
\]
which is exactly Eq.~(\ref{eq:Sigmak2}).

Finally, we justify the statement on the ancestral relation $\prec_k$. By construction, the directed edge set of the induced mixed graph $\cG_k$ is determined by the nonzero entries of $\bA_k$. From the formula above, a directed edge in $\cG_k$ arises either from a directed edge already present in $[\bA]_{O_k,O_k}$, or from a directed path in the global DAG whose endpoints lie in $O_k$ and whose internal nodes lie in $M_k$. Indeed, since $[\bA]_{M_k,M_k}$ is strictly upper triangular, it is nilpotent, and thus
\[
(\bI-[\bA]_{M_k,M_k})^{-1}
=
\bI+[\bA]_{M_k,M_k}+[\bA]_{M_k,M_k}^2+\cdots,
\]
where the sum is finite. Therefore,
\[
[\bA]_{O_k,M_k}
(\bI-[\bA]_{M_k,M_k})^{-1}
[\bA]_{M_k,O_k}
\]
collects exactly the contributions of all directed paths whose endpoints lie in $O_k$ and whose internal nodes lie in $M_k$. It follows that every directed path in $\cG_k$ corresponds to a directed path in the global latent DAG. Hence, for any $i,j\in O_k$,
\[
i\prec_k j \quad\Longrightarrow\quad i\prec j.
\]

Conversely, suppose $i,j\in O_k$ and $i\prec j$ in the global latent DAG. Then there exists a directed path from $j$ to $i$ in the global graph. Removing from this path all intermediate nodes that lie in $O_k$ decomposes it into consecutive segments whose endpoints lie in $O_k$ and whose internal nodes lie in $M_k$. Each such segment induces a directed edge in $\cG_k$ through the effective coefficient matrix $\bA_k$, possibly together with edges already present in $[\bA]_{O_k,O_k}$. Chaining these induced directed edges yields a directed path from $j$ to $i$ in $\cG_k$, and therefore
\[
i\prec j,\quad i,j\in O_k
\quad\Longrightarrow\quad
i\prec_k j.
\]
Combining the two directions, we conclude that $\prec_k$ is exactly the restriction of the global ancestral relation $\prec$ to $\cV_k=O_k$.
\end{proof}

\section{Proof of Proposition \ref{prop:block anc}}
\label{app:Proof Prop block anc}

\begin{proposition}[Proposition \ref{prop:block anc} in the main text]
    For each client $k$, let $C_{k,i},C_{k,j}\in\cC_k$ be two blocks.
\begin{enumerate}[label=(\roman*), leftmargin=2.4em, itemsep=0pt, topsep=1pt]
    \item If $C_{k,j}\notin anc_k^{\rblk}(C_{k,i})$, then for $\forall u\in C_{k,j}$ and $v\in C_{k,i}$, we have $u\notin anc_k(v)$.
    \item If $C_{k,j}\in pa_k^{\rblk}(C_{k,i})$, then there $\exists u\in C_{k,j}$ and $v\in C_{k,i}$ such that $u\in pa_k(v)$.
\end{enumerate}
\end{proposition}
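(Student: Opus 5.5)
We treat the two blocks as distinct throughout, since $anc_k^{\rblk}$ and $pa_k^{\rblk}$ only relate distinct blocks. If $C_{k,i}=C_{k,j}$, part (i) would wrongly forbid ancestral relations inside a single block, so that case is excluded. Both parts should then follow directly from the definitions of $\prec_k$, blocks, and $\prec_k^{\rblk}$, using the paths of $\cG_k$ that witness $\prec_k$. No matrix computation is needed.

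For (i) we argue by contraposition. Suppose there are $u\in C_{k,j}$ and $v\in C_{k,i}$ with $u\in anc_k(v)$, i.e.\ $v\prec_k u$. Since $C_{k,i}\neq C_{k,j}$, the pair $(v,u)$ is exactly a witness in the definition of the block-level order. Hence $C_{k,i}\prec_k^{\rblk}C_{k,j}$, that is, $C_{k,j}\in anc_k^{\rblk}(C_{k,i})$, which contradicts the hypothesis.

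For (ii), assume $C_{k,j}\in pa_k^{\rblk}(C_{k,i})$. By definition there are $u_0\in C_{k,j}$ and $v_0\in C_{k,i}$ with $v_0\prec_k u_0$. So $\cG_k$ contains a directed path $u_0=w_0\to w_1\to\cdots\to w_m=v_0$.

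The key step is to show that every $w_\ell$ lies in $C_{k,i}\cup C_{k,j}$. Take any intermediate $w_\ell$ with $0<\ell<m$. Its subpaths give $v_0\prec_k w_\ell$ and $w_\ell\prec_k u_0$; these are strict because $\cG_k$ inherits acyclicity from $\cG$ by Proposition~\ref{prop:margin Z}. Suppose the block $D=C_k(w_\ell)$ differed from both $C_{k,i}$ and $C_{k,j}$. Then the pair $(v_0,w_\ell)$ witnesses $C_{k,i}\prec_k^{\rblk}D$, and the pair $(w_\ell,u_0)$ witnesses $D\prec_k^{\rblk}C_{k,j}$. So $D$ would be an intermediate block, contradicting that $C_{k,j}$ is a block-level parent of $C_{k,i}$.

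This point needs care because $\prec_k^{\rblk}$ is defined through single witnessing pairs and need not be transitive. The argument above avoids transitivity by using $u_0$ and $v_0$ themselves as the witnesses for both comparisons. This is the step I expect to be the only real obstacle.

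Given that the path stays inside $C_{k,i}\cup C_{k,j}$, it starts in $C_{k,j}$ and ends in $C_{k,i}$. Let $\ell^\ast$ be the smallest index with $w_{\ell^\ast+1}\in C_{k,i}$; by minimality $w_{\ell^\ast}\in C_{k,j}$. The edge $w_{\ell^\ast}\to w_{\ell^\ast+1}$ is a directed edge of $\cG_k$. Setting $u=w_{\ell^\ast}$ and $v=w_{\ell^\ast+1}$ therefore gives $u\in pa_k(v)$ with $u\in C_{k,j}$ and $v\in C_{k,i}$, which proves (ii).
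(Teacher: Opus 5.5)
Your proposal is correct and follows the paper's proof essentially step for step. Part (i) is contraposition directly from the definition of $\prec_k^{\rblk}$. Part (ii) takes a witnessing directed path from $C_{k,j}$ to $C_{k,i}$, rules out intermediate nodes lying in a third block (using $u_0,v_0$ themselves as witnesses, which contradicts block-parent minimality), and reads off the crossing edge. The only differences are cosmetic: the paper picks a shortest such path, which your argument shows is unnecessary, and you make explicit both the distinct-blocks caveat and the fact that no transitivity of $\prec_k^{\rblk}$ is needed.
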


\begin{proof}
We prove the two claims separately.

For the first claim, suppose $C_{k,j}\notin Anc_k^\rblk(C_{k,i})$. We show that there do not exist $u\in C_{k,j}$ and $v\in C_{k,i}$ such that $u\in Anc_k(v)$. Indeed, if such a pair existed, then by the definition of the block-level ancestral relation, the existence of $u\prec_k v$ would imply
\[
C_{k,j}\prec_k^\rblk C_{k,i},
\]
or equivalently $C_{k,j}\in Anc_k^\rblk(C_{k,i})$, which contradicts the assumption. Therefore, for any $u\in C_{k,j}$ and $v\in C_{k,i}$, we must have $u\notin Anc_k(v)$.

We next prove the second claim. Suppose $C_{k,j}\in Pa_k^\rblk(C_{k,i})$. By definition of the block parent relation, there exist $u\in C_{k,j}$ and $v\in C_{k,i}$ such that $u\in Anc_k(v)$, that is, there exists a directed path from $u$ to $v$ in the directed part of $\cG_k$. Among all such directed paths from a node in $C_{k,j}$ to a node in $C_{k,i}$, choose one of shortest length, and write it as
\[
u=w_0 \to w_1 \to \cdots \to w_m=v.
\]

Consider the sequence of blocks containing these nodes:
\[
C_k(w_0),\, C_k(w_1),\, \ldots,\, C_k(w_m),
\]
where $C_k(w_\ell)$ denotes the block in $\cC_k$ containing $w_\ell$. Since $w_0=u\in C_{k,j}$ and $w_m=v\in C_{k,i}$, the path starts in $C_{k,j}$ and ends in $C_{k,i}$. We claim that no intermediate node $w_\ell$ can belong to a block different from both $C_{k,j}$ and $C_{k,i}$. Otherwise, for some $1\le \ell\le m-1$, there exists a block $C_{k,m'}\in\cC_k$ such that
\[
w_\ell\in C_{k,m'}, \qquad C_{k,m'}\neq C_{k,j}, \qquad C_{k,m'}\neq C_{k,i}.
\]
Then the directed path above implies
\[
C_{k,j}\prec_k^\rblk C_{k,m'} \prec_k^\rblk C_{k,i},
\]
which contradicts the assumption that $C_{k,j}\in Pa_k^\rblk(C_{k,i})$, since a block parent cannot have another block strictly between it and $C_{k,i}$ in the block-level partial order.

Hence the path cannot pass through any third block. It follows that along the chosen directed path, once the path leaves $C_{k,j}$, it must enter $C_{k,i}$ immediately. Therefore, there exists an index $r\in\{1,\dots,m\}$ such that
\[
w_{r-1}\in C_{k,j}, \qquad w_r\in C_{k,i},
\]
and
\[
w_{r-1}\to w_r
\]
is a directed edge in $\cG_k$. Consequently,
\[
w_{r-1}\in Pa_k(w_r).
\]
Setting
\[
u^\star\triangleq w_{r-1}\in C_{k,j}, \qquad v^\star\triangleq w_r\in C_{k,i},
\]
we obtain a pair of nodes such that $u^\star\in Pa_k(v^\star)$, as required.
\end{proof}

\section{Block RQ Decomposition}
\label{app:block RQ}
We now give a constructive procedure for the block-RQ decomposition introduced in Section \ref{sec:block RQ}. Given $\bH_k$, a block topological order, and the corresponding block ancestral relation, the algorithm builds $\bQ_k$ block by block in reverse topological order. At each step, the current block row of $\bH_k$ is projected onto the orthogonal complement of the row space already explained by its ancestor blocks, and the resulting innovation component is used to define the new block row of $\bQ_k$. The coefficient matrix $\bR_k$ is then obtained by expressing each block row of $\bH_k$ in the row span of the corresponding ancestor blocks together with its own newly added block.

\begin{algorithm}[H]
\caption{Block-RQ decomposition}
\label{alg:block_rq}
\begin{algorithmic}[1]
\Require $\bH_k \in \mathbb{R}^{p_k \times n_k}$, block topological order $C_r, C_{r-1}, \dots, C_1$, block ancestor relation $anc_k^\rblk(C_\ell)$ for each block $C_\ell$
\Ensure $\bR_k \in \mathbb{R}^{p_k \times p_k}$, $\bQ_k \in \mathbb{R}^{p_k \times n_k}$

\State Initialize $\bR_k = \mathbf 0_{p_k \times p_k}$, $\bQ_k = \mathbf 0_{p_k \times n_k}$

\For{$\ell = r, r-1, \dots, 1$}
    \State Let $\widetilde{\bH}_\ell = [\bH_k]_{C_\ell, :}$
    \State Let $\bW_\ell = [\bQ_k]_{anc_k^\rblk(C_\ell), :}$
    \If{$\bW_\ell$ is empty}
        \State $\widetilde{\bH}_\ell^\perp = \widetilde{\bH}_\ell$
    \Else
        \State $\widetilde{\bH}_\ell^\perp = \Pi_{\bW_\ell^\perp}(\widetilde{\bH}_\ell)$
    \EndIf
    \State Normalize each row of $\widetilde{\bH}_\ell^\perp$ to unit norm, and denote the resulting matrix by $[\bQ_k]_{C_\ell, :}$
    \State Let $J_\ell = C_\ell \cup Anc(C_\ell)$
    \State Set
    \[
    [\bR_k]_{C_\ell, J_\ell}
    =
    \widetilde{\bH}_\ell \bigl([\bQ_k]_{J_\ell, :}\bigr)^\dagger
    \]
    \State Set $\bR_k[C_\ell, J_\ell^c] = 0$
\EndFor

\State \Return $\bR_k, \bQ_k$
\end{algorithmic}
\end{algorithm}

The next proposition shows that the above procedure indeed produces a valid block-RQ decomposition in the sense of Section \ref{sec:block RQ}.

\begin{proposition}
The output $(\bR_k,\bQ_k)$ of Algorithm~\ref{alg:block_rq} satisfies the block-RQ requirements in Section~\ref{sec:block RQ}. In particular,
\begin{enumerate}
    \item $\bH_k=\bR_k\bQ_k$;
    \item for each block $C_\ell$, the row block $[\bR_k]_{C_\ell,:}$ is supported only on $J_\ell=Anc_k^\rblk(C_\ell)$, namely $[\bR_k]_{C_\ell,J_\ell^c}=0$;
    \item for each block $C_\ell$, the row block $[\bQ_k]_{C_\ell,:}$ is orthogonal to the row space generated by its strict block ancestors, i.e.
    \[
    \operatorname{row}\bigl([\bQ_k]_{C_\ell,:}\bigr)
    \perp
    \operatorname{row}\bigl([\bQ_k]_{anc_k^\rblk(C_\ell),:}\bigr),
    \]
    and
    \[
    \operatorname{row}\bigl([\bQ_k]_{C_\ell,:}\bigr)
    =
    \operatorname{row}\!\left(
    \Pi_{\operatorname{row}([\bQ_k]_{anc_k^\rblk(C_\ell),:})^\perp}
    \bigl([\bH_k]_{C_\ell,:}\bigr)
    \right).
    \]
\end{enumerate}
\end{proposition}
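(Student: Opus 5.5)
We verify the three claims by induction over the loop of Algorithm~\ref{alg:block_rq}, processing blocks in the order $C_r, C_{r-1},\dots,C_1$. Because $C_r$ is a source in the block order, every strict block ancestor of $C_\ell$ is processed before $C_\ell$. The induction hypothesis is: when block $C_\ell$ is reached, the rows $[\bQ_k]_{C',:}$ for all $C'\in anc_k^\rblk(C_\ell)$ are already defined, and each previously processed block already satisfies items 2 and 3. In particular, $\bW_\ell$ is well defined when it is used.

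Item 3 follows directly from the construction. By definition of $\Pi_{\bW_\ell^\perp}$, every row of $\widetilde{\bH}_\ell^\perp$ lies in $\operatorname{row}(\bW_\ell)^\perp$. Normalizing the rows rescales each one by a nonzero scalar, so orthogonality and row space are both preserved. This yields the orthogonality claim and the equality $\operatorname{row}([\bQ_k]_{C_\ell,:})=\operatorname{row}(\Pi_{\operatorname{row}(\bW_\ell)^\perp}[\bH_k]_{C_\ell,:})$.

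Normalization requires every row of $\widetilde{\bH}_\ell^\perp$ to be nonzero. I will obtain this from full row rank of $\bH_k=\bG_k^\dagger$, which holds because $\bG_k$ has full column rank. The argument is an induction showing that $\operatorname{row}([\bQ_k]_{J_\ell,:})=\operatorname{row}([\bH_k]_{J_\ell,:})$ for the ancestor-closed set $J_\ell$. Given this, if some row of $[\bH_k]_{C_\ell,:}$ had zero projection, it would lie in the span of the ancestor rows of $\bH_k$, contradicting linear independence of the rows of $\bH_k$. The same reasoning gives a slightly stronger fact: no nontrivial combination of the rows of $[\bH_k]_{C_\ell,:}$ lies in $\operatorname{row}(\bW_\ell)$. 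Hence $[\bQ_k]_{C_\ell,:}$ has full row rank $|C_\ell|$, and $[\bQ_k]_{J_\ell,:}$ has full row rank. This is the step I expect to need the most care, because the normalized rows within a block need not be mutually orthogonal; only full rank is required.

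Item 1 comes from the choice $[\bR_k]_{C_\ell,J_\ell}=\widetilde{\bH}_\ell([\bQ_k]_{J_\ell,:})^\dagger$. Each row of $\widetilde{\bH}_\ell$ can be written as $h=\Pi_{W}h+\Pi_{W^\perp}h$. The first term lies in $\operatorname{row}(\bW_\ell)$ and the second in $\operatorname{row}([\bQ_k]_{C_\ell,:})$, so $\widetilde{\bH}_\ell$ lies in $\operatorname{row}([\bQ_k]_{J_\ell,:})$. For any matrix $\bM$ and any $\bX$ whose rows lie in $\operatorname{row}(\bM)$, we have $\bX\bM^\dagger\bM=\bX$, because $\bM^\dagger\bM$ is the orthogonal projector onto $\operatorname{row}(\bM)$. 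Applying this with $\bM=[\bQ_k]_{J_\ell,:}$ gives $[\bR_k]_{C_\ell,:}\bQ_k=[\bR_k]_{C_\ell,J_\ell}[\bQ_k]_{J_\ell,:}=\widetilde{\bH}_\ell$. Stacking this identity over all blocks gives $\bH_k=\bR_k\bQ_k$.

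Item 2 holds because the algorithm explicitly sets $[\bR_k]_{C_\ell,J_\ell^c}=0$, with $J_\ell=Anc_k^\rblk(C_\ell)$ read as a union of index sets. This is exactly block upper triangularity with respect to $\preceq_k^\rblk$. Together with the unit-norm rows and the orthogonality property above, this meets every requirement of Definition~\ref{def:RQ}.
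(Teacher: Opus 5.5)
Your proposal is correct and follows the paper's route. You get item 3 from the projection plus the fact that normalization preserves row spaces, item 1 from the orthogonal decomposition of $[\bH_k]_{C_\ell,:}$ into parts in $\operatorname{row}(\bW_\ell)$ and $\operatorname{row}([\bQ_k]_{C_\ell,:})$, and item 2 by construction; your explicit use of $\mathbf{X}\bM^\dagger\bM=\mathbf{X}$ and your nonvanishing-rows argument via full row rank of $\bH_k=\bG_k^\dagger$ supply details the paper leaves implicit. For that last step you only need $\operatorname{row}(\bW_\ell)$ to lie in the span of the rows of $\bH_k$ indexed by previously processed blocks, which holds by induction whether or not $J_\ell$ is ancestor-closed, so you need not rely on the equality $\operatorname{row}([\bQ_k]_{J_\ell,:})=\operatorname{row}([\bH_k]_{J_\ell,:})$.
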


\begin{proof}
Fix a block $C_\ell$. By construction,
\[
\widetilde{\bH}_\ell^\perp
=
\Pi_{\operatorname{row}([\bQ_k]_{anc_k^\rblk(C_\ell),:})^\perp}\bigl([\bH_k]_{C_\ell,:}\bigr),
\]
with the convention that $\widetilde{\bH}_\ell^\perp=[\bH_k]_{C_\ell,:}$ if $anc_k^\rblk(C_\ell)=\emptyset$. Since Step 10 only normalizes the rows of $\widetilde{\bH}_\ell^\perp$, it does not change its row space. Therefore
\[
\operatorname{row}\bigl([\bQ_k]_{C_\ell,:}\bigr)
=
\operatorname{row}\bigl(\widetilde{\bH}_\ell^\perp\bigr)
=
\operatorname{row}\!\left(
\Pi_{\operatorname{row}([\bQ_k]_{anc_k^\rblk(C_\ell),:})^\perp}
\bigl([\bH_k]_{C_\ell,:}\bigr)
\right).
\]
Because $\widetilde{\bH}_\ell^\perp$ is defined as the orthogonal projection onto
\[
\operatorname{row}([\bQ_k]_{anc_k^\rblk(C_\ell),:})^\perp,
\]
its row space is orthogonal to $\operatorname{row}([\bQ_k]_{anc_k^\rblk(C_\ell),:})$. Hence
\[
\operatorname{row}\bigl([\bQ_k]_{C_\ell,:}\bigr)
\perp
\operatorname{row}\bigl([\bQ_k]_{anc_k^\rblk(C_\ell),:}\bigr),
\]
which proves the third claim.

Next, by the orthogonal decomposition of $[\bH_k]_{C_\ell,:}$ with respect to
\[
\operatorname{row}\bigl([\bQ_k]_{anc_k^\rblk(C_\ell),:}\bigr),
\]
we have
\[
[\bH_k]_{C_\ell,:}
=
\Pi_{\operatorname{row}([\bQ_k]_{anc_k^\rblk(C_\ell),:})}\bigl([\bH_k]_{C_\ell,:}\bigr)
+
\widetilde{\bH}_\ell^\perp.
\]
Therefore,
\[
\operatorname{row}\bigl([\bH_k]_{C_\ell,:}\bigr)
\subseteq
\operatorname{row}\bigl([\bQ_k]_{J_\ell,:}\bigr),
\qquad
J_\ell=Anc_k^\rblk(C_\ell).
\]
It follows that
\[
[\bH_k]_{C_\ell,:}
=
[\bR_k]_{C_\ell,J_\ell}[\bQ_k]_{J_\ell,:},
\]
where
\[
[\bR_k]_{C_\ell,J_\ell}
=
[\bH_k]_{C_\ell,:}\bigl([\bQ_k]_{J_\ell,:}\bigr)^\dagger.
\]
Together with the assignment $[\bR_k]_{C_\ell,J_\ell^c}=0$, this gives
\[
[\bH_k]_{C_\ell,:}
=
[\bR_k]_{C_\ell,:}\bQ_k.
\]
Stacking over all blocks yields
\[
\bH_k=\bR_k\bQ_k,
\]
which proves the first claim.

Finally, the second claim follows directly from the construction of $\bR_k$, since Step 13 explicitly sets
\[
[\bR_k]_{C_\ell,J_\ell^c}=0
\]
for every block $C_\ell$. Thus each block row of $\bR_k$ is supported only on the current block and its block ancestors.
\end{proof}

This decomposition is generally not unique, since the algorithm only determines each block through its row space, while different normalized bases of the same subspace lead to different matrix representatives $\bQ_k$ and hence different $\bR_k$. However, for Theorem \ref{thm:client-main}, the relevant object is precisely the family of block row spaces $\{\operatorname{row}([\bQ_k]_{C_\ell,:})\}_\ell$, rather than a particular matrix representative. These block row spaces are uniquely determined by $\bH_k$, the prescribed block order, and the block ancestral relation, since each step extracts the orthogonal innovation subspace relative to already determined ancestor blocks. In this subspace sense, the decomposition is unique, which is sufficient for the identifiability argument in Theorem \ref{thm:client-main}.

\begin{proposition}[Proposition \ref{prop: RQ} in the main text]
    For every block $C \in \mathcal{C}_k$ and every $i \in \rho_k(C)$, we have $[\bH_k]_{i,:} \in \mathrm{row}([\bQ_k]_{\bigcup_{C' \in Anc_k^{\mathrm{blk}}(C)} \rho_k(C'),:})$.
\end{proposition}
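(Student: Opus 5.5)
The claim follows directly from the defining properties of a block RQ decomposition in Definition~\ref{def:RQ}; no property of $\bH_k$ beyond the existence of the factorization is needed. Fix a block $C\in\cC_k$ and an index $i\in\rho_k(C)$. Reading off row $i$ of $\bH_k=\bR_k\bQ_k$ gives
\[
[\bH_k]_{i,:}=\sum_{C''\in\cC_k}[\bR_k]_{i,\rho_k(C'')}\,[\bQ_k]_{\rho_k(C''),:}.
\]
So $[\bH_k]_{i,:}$ is a linear combination of the rows of $\bQ_k$, and the only task is to determine which block rows actually appear with a nonzero coefficient.

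Next I apply the block upper-triangular structure of $\bR_k$. By Definition~\ref{def:RQ}, the block $[\bR_k]_{\rho_k(C),\rho_k(C'')}$ vanishes unless $C\preceq_k^\rblk C''$. Since $i\in\rho_k(C)$, every coefficient $[\bR_k]_{i,\rho_k(C'')}$ with $C''\notin Anc_k^\rblk(C)$ is zero. The sum therefore reduces to
\[
[\bH_k]_{i,:}=\sum_{C''\in Anc_k^\rblk(C)}[\bR_k]_{i,\rho_k(C'')}\,[\bQ_k]_{\rho_k(C''),:},
\]
which lies in $\mathrm{row}\bigl([\bQ_k]_{\bigcup_{C'\in Anc_k^\rblk(C)}\rho_k(C'),:}\bigr)$. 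This is exactly the claim.

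The one point to check is which form of the decomposition is in use. If one works instead with the constructive output of Algorithm~\ref{alg:block_rq}, the preceding proposition supplies the two facts I need: $\bH_k=\bR_k\bQ_k$ (item 1) and $[\bR_k]_{C_\ell,J_\ell^c}=0$ with $J_\ell=Anc_k^\rblk(C_\ell)$ (item 2). The argument is then identical, after identifying block labels with their $\rho_k$-images.

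The orthogonality and unit-norm properties of $\bQ_k$ are not used here. They matter only later, for the projection argument in Theorem~\ref{thm:client-main}.
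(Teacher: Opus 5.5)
Your proposal is correct and matches the paper's proof essentially line for line: expand row $i$ of $\bH_k=\bR_k\bQ_k$ as a linear combination of the rows of $\bQ_k$, then use the block support of $\bR_k$ (vanishing unless $C\preceq_k^\rblk C''$, i.e., $C''\in Anc_k^\rblk(C)$) to discard every non-ancestor block. Your remark that the orthogonality and unit-norm properties of $\bQ_k$ play no role here is also accurate.
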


\begin{proof}
Fix a block $C\in\cC_k$ and an index $i\in \rho_k(C)$. By Definition~2, the block-RQ decomposition satisfies
\[
\bH_k=\bR_k\bQ_k.
\]
Hence the $i$th row of $\bH_k$ can be written as
\[
[\bH_k]_{i,:}
=
\sum_{m=1}^{p_k} [\bR_k]_{i,m}[\bQ_k]_{m,:}.
\]
That is, $[\bH_k]_{i,:}$ is a linear combination of the rows of $\bQ_k$, with coefficients given by the $i$th row of $\bR_k$.

Since $i\in \rho_k(C)$, the $i$th row belongs to the block row indexed by $C$. By the block support property in Definition~2, this block row of $\bR_k$ can be nonzero only on the column blocks corresponding to $Anc_k^\rblk(C)$. Equivalently, for any block $C'\notin Anc_k^\rblk(C)$,
\[
[\bR_k]_{i,\rho_k(C')}=0.
\]
Therefore, the above sum reduces to
\[
[\bH_k]_{i,:}
=
\sum_{C'\in Anc_k^\rblk(C)}
\sum_{m\in \rho_k(C')}
[\bR_k]_{i,m}[\bQ_k]_{m,:}.
\]
This shows that $[\bH_k]_{i,:}$ lies in the row space generated by the rows of $\bQ_k$ indexed by the ancestor blocks of $C$, namely
\[
[\bH_k]_{i,:}
\in
\mathrm{row}\!\left(
[\bQ_k]_{\cup_{C'\in Anc_k^\rblk(C)}\rho_k(C'),:}
\right).
\]
The claim follows.
\end{proof}

\section{Proof of Lemma \ref{lem: rank 2}}
\label{app: proof Lemma rank 2}

\begin{lemma}[Lemma \ref{lem: rank 2} in the main text]
For each client $k$ and each intervention $j$ targeting node $t_j$ (which is unknown to the learner), we have
\begin{equation}
    \mathrm{row}(\bTheta_k^{(j)} - \bTheta_k^{(0)}) \subseteq \langle \bH_k^T(\bI-\bA_k)^{T}\bSigma_k^{-1}\be_{\rho_k(t_j)}, \bH_k^T\left ((\bI-\bA_k)^{T}\bSigma_k^{-1}\bu_k^{(j)} + \bd_k^{(j)}\right ) \rangle.
\end{equation}
where $\bu_k^{(j)},\bd_k^{(j)}$ have the same definition as in Assumption \ref{ass:non-degeneracy}. Therefore, it immediately follows that
$\mathrm{rank}(\bTheta_k^{(j)} - \bTheta_k^{(0)}) \leq 2$.
\end{lemma}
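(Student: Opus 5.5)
The plan is to first write the interventional client-level parameters $\bA_k^{(j)}$ and $\bSigma_k^{(j)}$ as explicit low-rank updates of $\bA_k$ and $\bSigma_k$. Then, rather than expanding inverses, I will show that the symmetric matrix $\bTheta_k^{(j)}-\bTheta_k^{(0)}$ annihilates the orthogonal complement of the two claimed vectors. Throughout, $\bSigma=\bI$ (Appendix~\ref{app:normalize_sigma}), $e\triangleq \be_{\rho_k(t_j)}$, and $\bN\triangleq(\bI-[\bA]_{M_k,M_k})^{-1}$.

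\emph{Step 1: the perturbed marginal model.} Since $t_j\in O_k$, the update $\be_{t_j}{\Delta\ba_k^{(j)}}^T$ only touches row $t_j$. Hence $[\tilde\bA_k^{(j)}]_{M_k,:}=[\bA]_{M_k,:}$, so $\bN$ is unchanged. The blocks $[\cdot]_{O_k,O_k}$ and $[\cdot]_{O_k,M_k}$ each gain $e[\Delta\ba_k^{(j)}]_{O_k}^T$ and $e[\Delta\ba_k^{(j)}]_{M_k}^T$ respectively. Substituting into Eq.~\eqref{eq:Ak} gives
\[
\bA_k^{(j)}=\bA_k+e\,{\bd_k^{(j)}}^T,
\]
with $\bd_k^{(j)}$ exactly as in Assumption~\ref{ass:non-degeneracy}. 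Substituting into Eq.~\eqref{eq:Sigmak}, and using that the global noise perturbation also sits at $t_j\in O_k$, gives
\[
\bSigma_k^{(j)}=\bSigma_k+e{\bu_k^{(j)}}^T+\bu_k^{(j)}e^T+\gamma\, ee^T .
\]
Here $\gamma=\Delta\sigma_k^{(j)}+[\Delta\ba_k^{(j)}]_{M_k}^T\bN[\bSigma]_{M_k,M_k}\bN^T[\Delta\ba_k^{(j)}]_{M_k}$ is a scalar. This step is routine bookkeeping, since the cross terms are exactly what defines $\bu_k^{(j)}$.

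\emph{Step 2: kernel argument.} Write $\bK^{(j)}=(\bI-\bA_k^{(j)})(\bSigma_k^{(j)})^{-1}(\bI-\bA_k^{(j)})^T$, so that by Eq.~\eqref{eq:interventional_precision} $\bTheta_k^{(j)}=\bH_k^T\bK^{(j)}\bH_k$. It suffices to show that the column space of the symmetric matrix $\bK^{(j)}-\bK^{(0)}$ lies in
\[
\langle v_1,v_2\rangle,\qquad v_1=(\bI-\bA_k)\bSigma_k^{-1}e,\qquad v_2=(\bI-\bA_k)\bSigma_k^{-1}\bu_k^{(j)}+\bd_k^{(j)} .
\]
Sandwiching with $\bH_k$ then gives the stated row space (up to the transpose convention of the statement). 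Equivalently, I show $(\bK^{(j)}-\bK^{(0)})x=0$ whenever $x\perp v_1,v_2$.

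Fix such an $x$ and set $a=(\bI-\bA_k)^Tx$. The two orthogonality conditions read $e^T\bSigma_k^{-1}a=0$ and ${\bd_k^{(j)}}^Tx=-\alpha$, where $\alpha\triangleq{\bu_k^{(j)}}^T\bSigma_k^{-1}a$. Then
\[
(\bI-\bA_k^{(j)})^Tx=a-\bd_k^{(j)}...
\]
more precisely $(\bI-\bA_k^{(j)})^Tx=a-({\bd_k^{(j)}}^Tx)\,e=a+\alpha e$. On the other hand, $\bSigma_k^{(j)}\bSigma_k^{-1}a=a+\alpha e+\bu_k^{(j)}\cdot 0+\gamma e\cdot 0=a+\alpha e$. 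Hence $(\bSigma_k^{(j)})^{-1}(a+\alpha e)=\bSigma_k^{-1}a$, and therefore
\[
\bK^{(j)}x=(\bI-\bA_k-e{\bd_k^{(j)}}^T)^T{}^T\bSigma_k^{-1}a .
\]
Expanding this, the correction term carries the factor $e^T\bSigma_k^{-1}a=0$, so $\bK^{(j)}x=(\bI-\bA_k)\bSigma_k^{-1}a=\bK^{(0)}x$. Since $\bK^{(j)}-\bK^{(0)}$ is symmetric, its range is orthogonal to its kernel, which contains $\{v_1,v_2\}^\perp$. Thus the range lies in $\langle v_1,v_2\rangle$, and in particular the rank is at most $2$.

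The main obstacle is that a naive Woodbury expansion of $(\bSigma_k^{(j)})^{-1}$ produces three candidate directions: $(\bI-\bA_k)\bSigma_k^{-1}e$, $(\bI-\bA_k)\bSigma_k^{-1}\bu_k^{(j)}$, and $\bd_k^{(j)}$. Merging the last two into a single vector is exactly what the kernel trick above achieves, because the $\bd$-correction to $(\bI-\bA_k^{(j)})^Tx$ is precisely cancelled by the $\bu$-correction in $\bSigma_k^{(j)}$. I would double-check the transpose and indexing conventions in Eq.~\eqref{eq:interventional_precision} against Lemma~\ref{lem:HKH}, so that the vectors land as $\bH_k^T(\bI-\bA_k)^T\bSigma_k^{-1}(\cdot)$ as written in the statement.
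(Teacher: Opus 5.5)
Your kernel argument is correct in substance. The transpose issue you flagged at the end, however, is not cosmetic: the chain as written does not hold until it is fixed.

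\textbf{The transpose fix.} You took $\bK^{(j)}=(\bI-\bA_k^{(j)})(\bSigma_k^{(j)})^{-1}(\bI-\bA_k^{(j)})^T$ from Eq.~\eqref{eq:interventional_precision}, but that display has its transposes misplaced. By Lemma~\ref{lem:HKH}, $\bK^{(j)}$ must be $\mathrm{Cov}(Z_k^{(j)})^{-1}=(\bI-\bA_k^{(j)})^T(\bSigma_k^{(j)})^{-1}(\bI-\bA_k^{(j)})$, which is also what the appendix proof uses.
\begin{itemize}
\item With your literal $\bK^{(j)}$ and $a=(\bI-\bA_k)^Tx$, one gets $(\bI-\bA_k^{(j)})^Tx=a-(e^Tx)\,\bd_k^{(j)}$, not $a-({\bd_k^{(j)}}^Tx)\,e$.
\item The outer factor then contributes a correction $e\,({\bd_k^{(j)}}^T\bSigma_k^{-1}a)$.
\item Neither correction is controlled by $x\perp v_1,v_2$, so with that matrix the cancellation does not occur.
\end{itemize}
The repair is to set $v_1=(\bI-\bA_k)^T\bSigma_k^{-1}e$, $v_2=(\bI-\bA_k)^T\bSigma_k^{-1}\bu_k^{(j)}+\bd_k^{(j)}$ and $a=(\bI-\bA_k)x$. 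Then $(\bI-\bA_k^{(j)})x=a-({\bd_k^{(j)}}^Tx)\,e=a+\alpha e$. Your identity $\bSigma_k^{(j)}\bSigma_k^{-1}a=a+\alpha e$ gives $(\bSigma_k^{(j)})^{-1}(a+\alpha e)=\bSigma_k^{-1}a$. Finally, $\bK^{(j)}x=(\bI-\bA_k)^T\bSigma_k^{-1}a-\bd_k^{(j)}(e^T\bSigma_k^{-1}a)=\bK^{(0)}x$. This is exactly the computation you intended; the vanishing factor $e^T\bSigma_k^{-1}a$ belongs to this convention. The statement's generators $\bH_k^Tv_1,\bH_k^Tv_2$ then come out verbatim.

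\textbf{Comparison with the paper.} With that repair, your route genuinely differs from the paper's. Both start from the same updates $\bA_k^{(j)}=\bA_k+e{\bd_k^{(j)}}^T$ and $\bSigma_k^{(j)}=\bSigma_k+e{\bu_k^{(j)}}^T+\bu_k^{(j)}e^T+\gamma\,ee^T$. The paper obtains these in Lemma~\ref{lem:support-du}, leaving the scalar implicit. It then writes $\bSigma_k^{(j)}=\bSigma_k+\bV\bW^T$ with $\bV=[e,\bu_k^{(j)}]$ and $\bW=[\bu_k^{(j)}+\gamma e,e]$, and applies Woodbury. After a simplification it does not display, it arrives at $\bDelta_k^{(j)}=c_1v_1v_1^T+c_2(v_1v_2^T+v_2v_1^T)+c_3v_2v_2^T$ with explicit scalars $c_1,c_2,c_3$.

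Your argument replaces that bookkeeping with a two-line check that needs only invertibility of $\bSigma_k^{(j)}$. It also makes transparent why the three naive Woodbury directions collapse to two: the $\bd$-correction to $(\bI-\bA_k^{(j)})x$ is exactly the $e$-component that $\bSigma_k^{(j)}$ adds to $a$.

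What the explicit expansion buys in return is more than the lemma states. One checks that $c_1c_3-c_2^2=-1/D$, where $D=\det(\bI+\bW^T\bSigma_k^{-1}\bV)=\det\bSigma_k^{(j)}/\det\bSigma_k>0$. Hence the row space actually equals $\langle\bH_k^Tv_1,\bH_k^Tv_2\rangle$ whenever $v_1,v_2$ are linearly independent. The lower bounds on $\dim\cP_k$ in Claims~1--3 are read off from the span of these generators, so they tacitly need this equality rather than mere inclusion. Your kernel argument alone yields only inclusion and would need a separate nondegeneracy check on $\langle v_1,v_2\rangle$ to supply it.
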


\begin{proof}
Fix a client $k$ and an intervention $j\in\cJ_k$, and write
\[
\bB_k \coloneqq \bI-\bA_k,\qquad
\bOmega_k \coloneqq \bSigma_k^{-1},\qquad
e \coloneqq \be_{\rho_k(t_j)},\qquad
d \coloneqq \bd_k^{(j)},\qquad
u \coloneqq \bu_k^{(j)}.
\]
By the precision factorization established earlier, we have
\[
\bTheta_k^{(j)}-\bTheta_k^{(0)}
=
\bH_k^T
\Bigl(
(\bI-\bA_k^{(j)})^T(\bSigma_k^{(j)})^{-1}(\bI-\bA_k^{(j)})
-
\bB_k^T\bOmega_k\bB_k
\Bigr)
\bH_k.
\]
Hence it suffices to study
\[
\bDelta_k^{(j)}
\coloneqq
(\bI-\bA_k^{(j)})^T(\bSigma_k^{(j)})^{-1}(\bI-\bA_k^{(j)})
-
\bB_k^T\bOmega_k\bB_k .
\]

By Lemma~\ref{lem:support-du}, the intervention changes only the row corresponding to the target node, so
\[
\bA_k^{(j)}=\bA_k+ed^T,
\qquad\text{hence}\qquad
\bI-\bA_k^{(j)}=\bB_k-ed^T.
\]
Moreover, there exists a scalar $\eta_k^{(j)}$ such that
\[
\bSigma_k^{(j)}
=
\bSigma_k+eu^T+ue^T+\eta_k^{(j)}ee^T .
\]
For brevity, write $\eta\coloneqq \eta_k^{(j)}$, and define
\[
\bV \coloneqq [e,u],
\qquad
\bW \coloneqq [u+\eta e,e].
\]
Then
\[
\bSigma_k^{(j)}=\bSigma_k+\bV\bW^T.
\]
Applying the Woodbury identity gives
\[
(\bSigma_k^{(j)})^{-1}
=
\bOmega_k-\bOmega_k\bV\bigl(\bI+\bW^T\bOmega_k\bV\bigr)^{-1}\bW^T\bOmega_k .
\]

Substituting this into $\bDelta_k^{(j)}$ yields
\begin{align}
\bDelta_k^{(j)}
&=
(\bB_k-ed^T)^T\bOmega_k(\bB_k-ed^T)-\bB_k^T\bOmega_k\bB_k \notag\\
&\quad
-(\bB_k-ed^T)^T\bOmega_k\bV\bigl(\bI+\bW^T\bOmega_k\bV\bigr)^{-1}\bW^T\bOmega_k(\bB_k-ed^T).
\label{eq:lemma1-delta-expand}
\end{align}
We now introduce
\[
q \coloneqq \bB_k^T\bOmega_k e,
\qquad
s \coloneqq \bB_k^T\bOmega_k u .
\]
Then the first line of \eqref{eq:lemma1-delta-expand} becomes
\[
(\bB_k-ed^T)^T\bOmega_k(\bB_k-ed^T)-\bB_k^T\bOmega_k\bB_k
=
-qd^T-dq^T+(e^T\bOmega_k e)\,dd^T.
\]
For the second line, direct multiplication gives
\[
(\bB_k-ed^T)^T\bOmega_k\bV
=
\bigl[q-(e^T\bOmega_k e)d,\ s-(e^T\bOmega_k u)d\bigr],
\]
and
\[
\bW^T\bOmega_k(\bB_k-ed^T)
=
\begin{bmatrix}
\bigl(s+\eta q-(e^T\bOmega_k u+\eta\,e^T\bOmega_k e)d\bigr)^T\\[2mm]
\bigl(q-(e^T\bOmega_k e)d\bigr)^T
\end{bmatrix}.
\]
Let
\[
\tau \coloneqq e^T\bOmega_k e,\qquad
\kappa \coloneqq e^T\bOmega_k u,\qquad
\nu \coloneqq u^T\bOmega_k u.
\]
Then
\[
\bI+\bW^T\bOmega_k\bV
=
\begin{pmatrix}
1+\kappa+\eta\tau & \nu+\eta\kappa\\
\tau & 1+\kappa
\end{pmatrix},
\]
which is invertible since $\bSigma_k^{(j)}$ is invertible. A straightforward simplification of
\eqref{eq:lemma1-delta-expand} then yields
\[
\bDelta_k^{(j)}
=
c_1\,qq^T
+
c_2\,q(s+d)^T
+
c_2\,(s+d)q^T
+
c_3\,(s+d)(s+d)^T
\]
for some scalars $c_1,c_2,c_3$ depending on $(k,j)$. For example, one may take
\[
c_1=\frac{\nu-\eta}{(1+\kappa)^2+\tau(\eta-\nu)},
\qquad
c_2=-\frac{1+\kappa}{(1+\kappa)^2+\tau(\eta-\nu)},
\qquad
c_3=\frac{\tau}{(1+\kappa)^2+\tau(\eta-\nu)}.
\]
In particular,
\[
\row(\bDelta_k^{(j)}) \subseteq \langle q,\ s+d\rangle.
\]
Recalling the definitions of $q$ and $s$, this becomes
\[
\row(\bDelta_k^{(j)})
\subseteq
\left\langle
\bB_k^T\bOmega_k e,\ 
\bB_k^T\bOmega_k u+d
\right\rangle
=
\left\langle
(\bI-\bA_k)^T\bSigma_k^{-1}\be_{\rho_k(t_j)},\ 
(\bI-\bA_k)^T\bSigma_k^{-1}\bu_k^{(j)}+\bd_k^{(j)}
\right\rangle .
\]

Finally, since
\[
\bTheta_k^{(j)}-\bTheta_k^{(0)}
=
\bH_k^T \bDelta_k^{(j)} \bH_k,
\]
we conclude that
\[
\row(\bTheta_k^{(j)}-\bTheta_k^{(0)})
\subseteq
\left\langle
\bH_k^T(\bI-\bA_k)^T\bSigma_k^{-1}\be_{\rho_k(t_j)},\ 
\bH_k^T\Bigl((\bI-\bA_k)^T\bSigma_k^{-1}\bu_k^{(j)}+\bd_k^{(j)}\Bigr)
\right\rangle.
\]
Therefore
\[
\rank(\bTheta_k^{(j)}-\bTheta_k^{(0)})\le 2.
\]
This proves the lemma.
\end{proof}

\section{Proof of Lemma \ref{lem: low rank Qk}}
\label{app:proof Lem low rank Qk}

\begin{lemma}[Lemma \ref{lem: low rank Qk} in the main text]
For each client $k$ and intervention $j$, suppose that the intervention target node $t_j$ belongs to block $C_{k,a}\in\cC_k$. Then
\[\mathrm{row}(\bTheta_k^{(j)}-\bTheta_k^{(0)})
\subseteq
\left\langle
[\bQ_k]_{\rho_k(i),:}
\;\middle|\;
i\in C_{k,b},\; C_{k,b}\in Anc_k^\rblk(C_{k,a})
\right\rangle.\]
\end{lemma}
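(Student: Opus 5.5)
The approach is to combine Lemma~\ref{lem: rank 2} with Proposition~\ref{prop: RQ}. The row space of $\bTheta_k^{(j)}-\bTheta_k^{(0)}$ lies in the span of the two vectors $\bH_k^T\bm{v}_1$ and $\bH_k^T\bm{v}_2$, where
\[
\bm{v}_1=(\bI-\bA_k)^T\bSigma_k^{-1}\be_{\rho_k(t_j)},\qquad \bm{v}_2=(\bI-\bA_k)^T\bSigma_k^{-1}\bu_k^{(j)}+\bd_k^{(j)}.
\]
Each vector $\bH_k^T\bm{v}$ is the linear combination $\sum_m [\bm{v}]_m [\bH_k]_{m,:}$ of rows of $\bH_k$. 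It therefore suffices to show that $\mathrm{supp}(\bm{v}_1)$ and $\mathrm{supp}(\bm{v}_2)$ are contained in $\rho_k\bigl(\bigcup_{C'\in Anc_k^\rblk(C_{k,a})}C'\bigr)$. Once this holds, Proposition~\ref{prop: RQ} puts each row $[\bH_k]_{m,:}$ with $m\in\rho_k(C')$ in the span of the $\bQ_k$-rows of $Anc_k^\rblk(C')$. Transitivity of $\prec_k^\rblk$ gives $Anc_k^\rblk(C')\subseteq Anc_k^\rblk(C_{k,a})$, which proves the claim.

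\textbf{Step 1: $\bSigma_k$ is block diagonal.} A bidirected edge $i\leftrightarrow j\in\cD_k$ makes $i$ and $j$ mutually reachable, so $i$ and $j$ lie in the same block. Hence the off-block entries of $\bSigma_k$ vanish, $\bSigma_k$ is block diagonal with respect to $\cC_k$, and so is $\bSigma_k^{-1}$. It follows that $\bSigma_k^{-1}\be_{\rho_k(t_j)}$ is supported on $\rho_k(C_{k,a})$.

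\textbf{Step 2: $(\bI-\bA_k)^T$ only moves support toward ancestor blocks.} We have
\[
[(\bI-\bA_k)^T\bm{x}]_m=x_m-\sum_i [\bA_k]_{im}x_i,
\]
and $[\bA_k]_{im}\neq 0$ means $m\in pa_k(i)$. Thus $m\succ_k i$, so $C_k(m)\in Anc_k^\rblk(C_k(i))$. Consequently, if $\bm{x}$ is supported on the ancestor-block indices of $C_{k,a}$, then so is $(\bI-\bA_k)^T\bm{x}$. Together with Step 1, this settles $\bm{v}_1$.

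\textbf{Step 3: supports of $\bu_k^{(j)}$ and $\bd_k^{(j)}$.} I will use Lemma~\ref{lem:support-du}, the support lemma cited in the proof of Lemma~\ref{lem: rank 2}. From it I expect $\mathrm{supp}(\bd_k^{(j)})\subseteq\rho_k(pa_k(t_j))$ and $\mathrm{supp}(\bu_k^{(j)})\subseteq\rho_k(sp_k(t_j)\cup\{t_j\})\subseteq\rho_k(C_{k,a})$.
\begin{itemize}
\item Every parent of $t_j$ lies in an ancestor block of $C_{k,a}$, so $\bd_k^{(j)}$ has the required support.
\item $\bu_k^{(j)}$ is supported in $C_{k,a}$, so by Step 1 $\bSigma_k^{-1}\bu_k^{(j)}$ is as well, and Step 2 then handles $(\bI-\bA_k)^T\bSigma_k^{-1}\bu_k^{(j)}$.
\end{itemize}
This settles $\bm{v}_2$.

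\textbf{Main obstacle.} The delicate point is the support of $\bu_k^{(j)}$ in Step 3, if Lemma~\ref{lem:support-du} does not state it in exactly this form. If I have to argue directly, I would expand $(\bI-[\bA]_{M_k,M_k})^{-1}$ as a finite path sum. A coordinate $[\bu_k^{(j)}]_{\rho_k(m)}$ can be nonzero only if $m\in O_k$ and some parent $w\in pa(t_j)\cap M_k$ of $t_j$ have a common marginalized ancestor $w'\in M_k$, with directed paths through $M_k$ to both. The same $w'$ then contributes to the effective noises of $m$ and $t_j$, which generically gives $m\leftrightarrow t_j$ and hence $m\in C_{k,a}$. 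The remaining risk is that an exact cancellation makes $[\bSigma_k]_{m,t_j}=0$ while $[\bu_k^{(j)}]_{\rho_k(m)}\neq0$. I would rule this out either by invoking the structural support statement of Lemma~\ref{lem:support-du} or by treating such cancellations as excluded by the genericity in Assumption~\ref{ass:non-degeneracy}.
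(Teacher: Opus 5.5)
Your proposal is correct and is essentially the paper's own proof. It writes the two generators from Lemma~\ref{lem: rank 2} as combinations of rows of $\bH_k$, shows their coefficient vectors are supported on blocks in $Anc_k^\rblk(C_{k,a})$, and finishes with Proposition~\ref{prop: RQ}; the paper simply cites Lemma~\ref{lem:support-du} for $\bu_k^{(j)}$ and $\bd_k^{(j)}$, which is your first option for the ``main obstacle'', while your Steps~1--2 spell out the block-diagonality of $\bSigma_k$ and the fact that $(\bI-\bA_k)^T$ only spreads support to parents, which the paper only asserts.

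The final inclusion $Anc_k^\rblk(C')\subseteq Anc_k^\rblk(C_{k,a})$ is also used tacitly by the paper, but it is valid only if $\prec_k^\rblk$ is read as its transitive closure (the ``block partial order'' the paper speaks of). The literal definition is not transitive: for the induced graph $u\leftarrow v\leftrightarrow v'\leftarrow w$ one has $\{u\}\prec_k^\rblk\{v,v'\}\prec_k^\rblk\{w\}$ but not $\{u\}\prec_k^\rblk\{w\}$.
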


\begin{proof}
Let $t_j\in C_{k,a}$ and write $r=\rho_k(t_j)$. By Lemma~\ref{lem: rank 2},
\[
\row(\bTheta_k^{(j)}-\bTheta_k^{(0)})
\subseteq
\left\langle
\bH_k^T(\bI-\bA_k)^T\bSigma_k^{-1}\be_r,\ 
\bH_k^T\Bigl((\bI-\bA_k)^T\bSigma_k^{-1}\bu_k^{(j)}+\bd_k^{(j)}\Bigr)
\right\rangle.
\]
Hence it suffices to show that both generators on the right-hand side belong to
\[
\left\langle
[\bQ_k]_{\rho_k(i),:}\ \middle|\ i\in C_{k,b},\ C_{k,b}\in Anc_k^\rblk(C_{k,a})
\right\rangle.
\]

For the first generator, define
\[
\balpha \coloneqq (\bI-\bA_k)^T\bSigma_k^{-1}\be_r.
\]
Then
\[
\bH_k^T(\bI-\bA_k)^T\bSigma_k^{-1}\be_r
=
\bH_k^T\balpha
=
\sum_{i\in V_k}\alpha_i[\bH_k]_{i,:}^T.
\]
Since $r=\rho_k(t_j)$ and $t_j\in C_{k,a}$, any nonzero coefficient $\alpha_i$ can only arise from indices $i$ belonging to blocks in $Anc_k^\rblk(C_{k,a})$. Therefore the above vector is a linear combination of rows of $\bH_k$ indexed by nodes in blocks belonging to $Anc_k^\rblk(C_{k,a})$.

For the second generator, define
\[
\bbeta \coloneqq (\bI-\bA_k)^T\bSigma_k^{-1}\bu_k^{(j)}+\bd_k^{(j)}.
\]
Then
\[
\bH_k^T\Bigl((\bI-\bA_k)^T\bSigma_k^{-1}\bu_k^{(j)}+\bd_k^{(j)}\Bigr)
=
\bH_k^T\bbeta
=
\sum_{i\in V_k}\beta_i[\bH_k]_{i,:}^T.
\]
By Lemma~\ref{lem:support-du},
\[
\supp(\bd_k^{(j)}) \subseteq \rho_k\bigl(pa_k(t_j)\bigr),
\qquad
\supp(\bu_k^{(j)}) \subseteq \rho_k\bigl(sp_k(t_j)\bigr).
\]
Hence the nonzero coordinates of $\bbeta$ can only come from set $\rho_k(Pa_k(dis_k(t_j)\cup\{t_j\}))$, where $dis_k(i)\triangleq \{j\in\cV_k\mid \text{$i$ and $j$ are connected by a bidirected path in $\cD_k$}\}$. By the definition of the block structure and the block-level ancestor relation, these nodes belong to blocks contained in $Anc_k^\rblk(C_{k,a})$. It follows that the second generator is also a linear combination of rows of $\bH_k$ indexed by nodes in blocks belonging to $Anc_k^\rblk(C_{k,a})$.

Now apply Proposition~3. For any node $i$ lying in a block $C\in\cC_k$, Proposition~3 implies that
\[
[\bH_k]_{i,:}
\in
\left\langle
[\bQ_k]_{\rho_k(\ell),:}\ \middle|\ \ell\in C_{k,b},\ C_{k,b}\in Anc_k^\rblk(C)
\right\rangle.
\]
In particular, whenever $C\in Anc_k^\rblk(C_{k,a})$, we also have
\[
[\bH_k]_{i,:}
\in
\left\langle
[\bQ_k]_{\rho_k(\ell),:}\ \middle|\ \ell\in C_{k,b},\ C_{k,b}\in Anc_k^\rblk(C_{k,a})
\right\rangle.
\]
Therefore both generators in Lemma~\ref{lem: rank 2} lie in
\[
\left\langle
[\bQ_k]_{\rho_k(i),:}\ \middle|\ i\in C_{k,b},\ C_{k,b}\in Anc_k^\rblk(C_{k,a})
\right\rangle.
\]
Taking their span proves that
\[
\row(\bTheta_k^{(j)}-\bTheta_k^{(0)})
\subseteq
\left\langle
[\bQ_k]_{\rho_k(i),:}\ \middle|\ i\in C_{k,b},\ C_{k,b}\in Anc_k^\rblk(C_{k,a})
\right\rangle.
\]
This completes the proof.
\end{proof}

\section{Proof of Theorem \ref{thm:client-main}}
\label{app: proof client main}

\begin{theorem}[Theorem \ref{thm:client-main} in the main text]
Assume Assumptions~\ref{ass: aligned intervention} and~\ref{ass:non-degeneracy}. Let $(\hat{\mathcal C}_k,\hat{\prec}_k^\rblk,\hat{\bQ}_k)$ be the output of Algorithm~\ref{alg:causal order client} for client $k$. Then:
\begin{enumerate}[label=(\roman*), leftmargin=2.4em, itemsep=0pt, topsep=1pt]
    \item $(\hat{\mathcal C}_k,\hat{\prec}_k^\rblk)$ recovers $(\mathcal C_k,\prec_k^\rblk)$ up to relabeling under the bijection $\psi:j\mapsto t_j$.
    \item For each true block $C\in\mathcal C_k$, let $\hat C=\psi^{-1}(C)$ denote the corresponding estimated block. Then
    $\langle \hat{\bq}_{k,i}: i\in\hat C\rangle
    =
    \mathrm{row}([\bQ_k]_{\rho_k(C),:}).$
\end{enumerate}
Consequently,$\hat{\bQ}_k=\bS_k\bP_{\sigma,k}\bQ_k$,
where $\bS_k$ is an invertible block-diagonal norm-preserving matrix and $\bP_{\sigma,k}$ is a row permutation matrix preserving the block ancestral order.
\end{theorem}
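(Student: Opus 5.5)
The proof will be an induction over the iterations $t$ of Algorithm~\ref{alg:causal order client}. The invariant at the start of iteration $t$ is:
\begin{itemize}
\item[(a)] every $\hat C\in\mathcal I_{t-1}$ equals $\psi^{-1}(C)$ for a true block $C\in\cC_k$;
\item[(b)] the set of these true blocks is closed under block ancestors, i.e.\ if $C$ is recovered then so is every $C'\in anc_k^\rblk(C)$;
\item[(c)] for each recovered block, $\langle\hat{\bq}_{k,i}:i\in\hat C\rangle=\row([\bQ_k]_{\rho_k(C),:})$;
\item[(d)] $\hat\prec_k^\rblk$ restricted to recovered blocks agrees with $\prec_k^\rblk$ under $\psi$.
\end{itemize}
Given the invariant, $W_t=\row([\bQ_k]_{\rho_k(\cup\text{recovered}),:})$. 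Because the recovered set is ancestor-closed, every unrecovered block $C$ has all its strict block ancestors inside $W_t$. Theorem (i)--(ii) then follow at termination. The factorization $\hat\bQ_k=\bS_k\bP_{\sigma,k}\bQ_k$ is read off from (c): each block row of $\hat\bQ_k$ is an invertible recombination, with unit-norm rows, of the corresponding block rows of $\bQ_k$, stacked in an order compatible with the block order.

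\textbf{Upper bound on projected rank.} For any label set $\hat C\subseteq R_{t-1}$, Lemma~\ref{lem: low rank Qk} confines each perturbation row space to the $\bQ_k$-rows of the target's block ancestors. The orthogonality in Definition~\ref{def:RQ} says that projecting out $W_t$ removes exactly the recovered ancestor directions. So $\cP_k(\hat C;W_t)$ lies in the span of $\bQ_k$-rows indexed by unrecovered blocks that are block-ancestors of targets in $\psi(\hat C)$.

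\textbf{Lower bound and minimality.} I need two claims.
\begin{enumerate}
\item If $C$ is a \emph{minimal} unrecovered block, meaning all its strict ancestors are recovered, then $\dim\cP_k(\psi^{-1}(C);W_t)=|C|$. The upper bound gives $\le|C|$. For the matching lower bound, I would write the projected generators from Lemma~\ref{lem: rank 2} in $\bQ_k$-coordinates restricted to $\rho_k(C)$. Using $\bH_k=\bR_k\bQ_k$ with the diagonal block of $\bR_k$ invertible, the first generator projects to a nonzero multiple of the $\be_{\rho_k(t_j)}$-type column of $[\bR_k]_{C,C}^T[(\bI-\bA_k)^T\bSigma_k^{-1}]_{C,C}$. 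The second generator projects to exactly the vector in Assumption~\ref{ass:non-degeneracy}(iii), after a Schur-complement identity for the restriction of $(\bI-\bA_k)^T\bSigma_k^{-1}$ to $C$. Together these span the full $|C|$-dimensional block space.
\item Every smaller set fails the test $\dim=|\hat C|$. Take $\hat C$ of size $r$ with $\dim\cP_k(\hat C;W_t)=r$. If $\psi(\hat C)$ is a proper subset $\mathcal B$ of a minimal block, Assumption~\ref{ass:non-degeneracy}(iii) (Hall-type expansion $|\cup\mathcal N_i|>|\mathcal B|$) forces the dimension to exceed $|\mathcal B|$. If $\psi(\hat C)$ meets a non-minimal unrecovered block, the projected span also picks up the target's own block, and by (i)--(ii) of Assumption~\ref{ass:non-degeneracy} its unrecovered ancestor blocks as well, again pushing the dimension above $r$. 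So the smallest $r$ at which the test succeeds is the size of a minimal unrecovered block, and every qualifying $\hat C$ of that size is $\psi^{-1}$ of such a block.
\end{enumerate}
The subspace returned by Algorithm~\ref{alg:block ancestor} is then $\cP_k(\hat C;W)=\row([\bQ_k]_{\rho_k(C),:})$, which gives (c).

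\textbf{Ancestor identification (d).} Algorithm~\ref{alg:block ancestor} drops a recovered block $B$ whenever removing it from $W$ still yields dimension $|\hat C|$. If $B\notin anc_k^\rblk(C)$, its directions are orthogonal to, and absent from, the perturbations, so the dimension is unchanged and $B$ is dropped. If $B$ is a block-level parent of $C$, Proposition~\ref{prop:block anc}(ii) gives an edge $u\to v$ with $u\in B$, $v\in C$. Assumption~\ref{ass:non-degeneracy}(i) then makes $\bd_k^{(\psi^{-1}(v))}$ load on $u$, so the projected span acquires a $B$-direction and the dimension strictly exceeds $|\hat C|$; hence $B$ is kept. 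Retained non-parent ancestors are harmless, because the algorithm adds $\hat C\,\hat\prec\,C'$ for every $C'$ above any retained block, i.e.\ it takes the transitive closure. So the ancestor sets are correct.

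\textbf{Main obstacle.} The hardest part is the lower bound in step 1 and its interaction with step 2: showing that the projected generators span exactly $|C|$ dimensions for a minimal block, and strictly more than $r$ otherwise. This requires carefully matching the $\bQ_k$-coordinates of $\bH_k^T(\bI-\bA_k)^T\bSigma_k^{-1}(\cdot)$, restricted to a block, with the vector in Assumption~\ref{ass:non-degeneracy}(iii), including the factor $(\bI-[\bA_k]_{C,C}^T)^{-1}$. I would attempt this via the block-triangular structure of $\bI-\bA_k$ and the Schur complement of $\bSigma_k$ over block ancestors.
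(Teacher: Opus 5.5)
Your overall architecture matches the paper's. You run an induction over an ancestor-closed set of recovered blocks. A minimal unrecovered block passes the test (the paper's Claim~3), other label sets fail it (Claims~1--2), and Algorithm~\ref{alg:block ancestor} keeps parents and drops non-ancestors (Claims~2 and~4). However, two load-bearing steps fail as written.

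\textbf{The exclusion step.} Your case (b) is false: a set that meets a non-minimal block can pass the test.
\begin{itemize}
\item \emph{Counterexample.} At $t=1$ take $\hat C=\cJ_k$. Every perturbation lies in $\row(\bH_k)$, so $\dim\cP_k(\cJ_k;\{0\})\le p_k=|\hat C|$. This holds even when some block of client $k$ has a block ancestor. More generally, once $\hat C$ contains a block together with all of its unrecovered ancestor blocks, the ancestor directions you invoke are already accounted for by selected labels.
\item \emph{What the correct claim says.} Claim~2 asserts $\dim>r$ only when some block-level parent lies outside $\psi(\mathcal I_{t-1}\cup\hat C)$. A union of two or more whole blocks that is closed under unrecovered parents must be excluded by minimality of $r$ instead. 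Such a union contains a minimal unrecovered block of strictly smaller size, which already passes by your step~1, so the search stops earlier.
\item \emph{Case (a) is too narrow.} It does not cover a partially selected non-minimal block, whose ancestors may all be selected so that (b)'s mechanism is unavailable. Nor does it cover partial selections spread over several blocks.
\item \emph{How the paper handles these.} Claim~1 treats every $\hat C$ that is not a union of whole blocks by localizing Assumption~\ref{ass:non-degeneracy}(iii). Let $S$ be the unrecovered ancestor coordinates, and multiply by the invertible matrix $[\bSigma_k]_{S,S}(\bI-[\bA_k]_{S,S}^T)^{-1}$. The first generators become $[\be_{\rho_k(t_j)}]_S$. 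For a target in a partially selected block $C_{k,u}$, the $\rho_k(C_{k,u})$-coordinates of the transformed second generator equal the block-local vector of (iii). That vector is nonzero at some unselected index, where every first generator vanishes.
\end{itemize}

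\textbf{Parent detection.} The step ``$\bd_k^{(\psi^{-1}(v))}$ loads on $u$, so the projected span acquires a $B$-direction'' does not give a dimension increase.
\begin{itemize}
\item The first generators $(\bI-\bA_k)^T\bSigma_k^{-1}\be_{\rho_k(t)}$ already have nonzero $\rho_k(B)$-coordinates in general, since $[\bA_k]_{\rho_k(v),\rho_k(u)}$ enters them. Yet they span only $|C|$ dimensions. What you must show is that some second generator leaves their span.
\item In the transformed coordinates this means $\bigl[[\bSigma_k]_{S,S}(\bI-[\bA_k]_{S,S}^T)^{-1}[\bd_k^{(j)}]_S+[\bu_k^{(j)}]_S\bigr]_{\rho_k(B)}\neq 0$ for some $j$ targeting $C$.
\item Up to the invertible factor $[\bSigma_k]_{\rho_k(B),\rho_k(B)}$, these coordinates are the direct term from $[\bd_k^{(j)}]_{\rho_k(B)}$ plus the cross term $[(\bI-[\bA_k]_{S,S}^T)^{-1}]_{\rho_k(B),\rho_k(C)}[\bd_k^{(j)}]_{\rho_k(C)}$. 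The cross term carries within-block parent effects back through $B$-ancestry, and it can cancel the contribution of $u$ for the particular $v$ you chose.
\item The paper resolves this by induction along the within-block topological order $v_1,\dots,v_m$. For $v_1$, which has no parent in $C$, the cross term is absent. Inductively, the columns of $(\bI-\bA_k)^{-1}$ at earlier nodes vanish on $\rho_k(B)$. So if all $\rho_k(B)$-coordinates vanished, no node of $C$ would have a parent in $B$, contradicting Proposition~\ref{prop:block anc}(ii).
\item Your case (b), correctly restricted to an unrecovered, unselected block parent $B$, needs exactly this argument.
\end{itemize}

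\textbf{Smaller points.}
\begin{itemize}
\item Bidirected edges only join mutually reachable nodes, so $\bSigma_k$ is block diagonal. Restriction to a union of blocks therefore commutes with inversion, and no Schur complement is needed. The lower bound in your step~1 follows from the first generators alone. The step you flag as the main obstacle is the easy one; the delicate one is the cancellation above.
\item In the non-ancestor deletion, $B$'s $\bQ_k$-rows need not be orthogonal to those of $Anc_k^\rblk(C)$, because Definition~\ref{def:RQ} gives orthogonality only to ancestors. Instead, let $U$ be the perturbation row space of $\hat C$, and argue $U\cap W_t=U\cap W_{-B}$ by linear independence of the rows of $\bQ_k$, then apply rank--nullity.
\item Your conclusion (c) needs $\langle\hat{\bq}_{k,i}:i\in\mathcal A\rangle$ to contain every ancestor direction that actually occurs in $U$. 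The inclusion $pa_k^\rblk(C)\subseteq\mathcal A$ alone does not give this, and the paper passes over the same point.
\end{itemize}
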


\begin{proof}
We proceed by induction on the outer-loop index of Algorithm~2. Let $T=|\cC_k|$ be the number of true blocks for client $k$. For each $t\in[T]$, we view the $t$-th iteration of the outer loop as the step that recovers the $t$-th block in the reverse block topological order.

At the beginning of iteration $t$, we maintain the following induction hypothesis:

\begin{itemize}
    \item[(IH1)] $\mathcal I_{t-1}$ consists of correctly identified blocks, and its image under $\psi$ is closed under the true block-level ancestor relation. Namely, if $C\in \psi(\mathcal I_{t-1})$, then
    $
    Anc_k^\rblk(C)\subseteq \psi(\mathcal I_{t-1}).
    $
    \item[(IH2)] The last $t-1$ recovered block rows of $\hat{\bQ}_k$ coincide with the corresponding block rows of some matrix of the form $\bS_k\bP_{\sigma,k}\bQ_k$, where $\bP_{\sigma,k}$ is a permutation consistent with the recovered block order and $\bS_k$ is a block-diagonal invertible matrix.
\end{itemize}

The base case $t=1$ is immediate: $\mathcal I_0=\emptyset$, so (IH1) holds trivially, and (IH2) is vacuous.

Now assume (IH1)--(IH2) hold at the beginning of iteration $t$. We prove that the $t$-th iteration correctly identifies one new block, correctly updates its ancestor information, and appends the corresponding correct block row space to $\hat{\bQ}_k$.

\medskip
\noindent\textbf{Step 1: identification of the new block in Steps 4--12 of Algorithm~2.}
Let
\[
W_t \;\triangleq\; \langle \hat{\bq}_{k,i}: i\in \mathcal I_{t-1}\rangle .
\]
We first record three claims.

\smallskip
\noindent\emph{Claim 1.} Under Assumptions \ref{ass: aligned intervention} and \ref{ass:non-degeneracy}, 
let $J=\{j_1,\dots,j_r\}\subseteq R_{t-1}$. If $\psi(J)$ is not a disjoint union of whole true blocks, then
\[
\dim\big(\cP_k(J;W_t)\big)>r.
\]

\smallskip
\noindent\emph{Claim 2.}Under Assumptions \ref{ass: aligned intervention} and \ref{ass:non-degeneracy}, 
let $J=\{j_1,\dots,j_r\}\subseteq R_{t-1}$. Suppose that $\psi(J)$ is a union of whole true blocks, but at least one true block parent of $\psi(J)$ is not contained in $\psi(\mathcal I_{t-1}\cup J)$. Then
\[
\dim\big(\cP_k(J;W_t)\big)>r.
\]

\smallskip
\noindent\emph{Claim 3.}
Under Assumptions \ref{ass: aligned intervention} and \ref{ass:non-degeneracy}, let $C\in\cC_k$ satisfy $C\notin \psi(\mathcal I_{t-1})$ and
\[
anc_k^\rblk(C)\subseteq \psi(\mathcal I_{t-1}).
\]
Let $J=\psi^{-1}(C)\subseteq R_{t-1}$. Then
\[
\dim\big(\cP_k(J;W_t)\big)=|J|=|C|.
\]

\smallskip
We postpone the proofs of the three claims and first show how they imply correctness of Steps 4--9.

Since $\psi(\mathcal I_{t-1})$ is ancestor-closed by (IH1), there exists at least one true block
\[
C^\star \in \cC_k\setminus \psi(\mathcal I_{t-1})
\]
whose true block ancestors are all contained in $\psi(\mathcal I_{t-1})$; equivalently, $C^\star$ is minimal in the remaining block partial order. By Claim~3, if $J^\star=\psi^{-1}(C^\star)$, then
\[
\dim\big(\cP_k(J^\star;W_t)\big)=|J^\star|.
\]
Hence the inner loop of Algorithm~2 must terminate.

Now let $\hat C$ be the set selected at Step~8 when the inner loop terminates. By construction, $\hat C$ is chosen with the smallest cardinality $r$ such that
\[
\dim\big(\cP_k(\hat C;W_t)\big)=r.
\]
Claim~1 rules out the possibility that $\psi(\hat C)$ contains only part of some true block. Claim~2 rules out the possibility that $\psi(\hat C)$ is a union of whole true blocks whose true block parents have not all been recovered already. Therefore, $\psi(\hat C)$ must be exactly one whole true block, say $C_t$, and moreover
\[
pa_k^\rblk(C_t)\subseteq \psi(\mathcal I_{t-1}).
\]
Since $\psi(\mathcal I_{t-1})$ is ancestor-closed by (IH1), this further implies
\[
anc_k^\rblk(C_t)\subseteq \psi(\mathcal I_{t-1}).
\]

\medskip
\noindent\textbf{Step 2: identification of the ancestor set in Algorithm \ref{alg:block ancestor}.}
We next analyze the call
\[
(\{\hat{\bq}_{k,i}\}_{i\in\hat C},\mathcal A)
=
\mathrm{ID\text{-}Block\text{-}Ancestor}
\Big(
\hat C,\{\Theta_k^{(j)}\}_{j\in\{0\}\cup\hat C},\mathcal I_{t-1},\{\hat{\bq}_{k,i}\}_{i\in\mathcal I_{t-1}}
\Big).
\]

For each block $B\in \mathcal I_{t-1}$, Algorithm~1 removes $B$ temporarily and checks whether
\[
\dim\big(\cP_k(\hat C;W_{-B})\big)=|\hat C|,
\qquad
W_{-B}\triangleq \langle \hat{\bq}_{k,i}: i\in \mathcal I_{t-1}\backslash B\rangle.
\]
We use Claim~2 again. If $B$ is a true block parent of $C_t$, then after removing $B$, the projected space still retains an unremoved parent contribution, hence
\[
\dim\big(\cP_k(\hat C;W_{-B})\big)>|\hat C|,
\]
so $B$ cannot be deleted from $\mathcal A$. Therefore,
\[
pa_k^\rblk(C_t)\subseteq \mathcal A.
\]

We also record one additional claim.

\smallskip
\noindent\emph{Claim 4.} Under Assumptions \ref{ass: aligned intervention} and \ref{ass:non-degeneracy}, 
if $B\in \mathcal I_{t-1}$ is not a true block ancestor of $C_t$, then
\[
\dim\big(\cP_k(\hat C;W_{-B})\big)=|\hat C|,
\]
and hence Algorithm~1 removes $B$ from $\mathcal A$.

\smallskip
Again postponing its proof, Claim~4 implies
\[
\mathcal A \subseteq anc_k^\rblk(C_t).
\]
Combining this with the previous inclusion yields
\[
pa_k^\rblk(C_t)\subseteq \mathcal A \subseteq anc_k^\rblk(C_t).
\]

Since $\psi(\mathcal I_{t-1})$ is ancestor-closed by (IH1), every true ancestor of every block in $\mathcal I_{t-1}$ has already been recovered. Therefore, once Algorithm~1 returns $\mathcal A$, the update rule in Step~14 of Algorithm~2 correctly inserts all block-level ancestral relations between the newly found block $C_t$ and the previously recovered true ancestor blocks. In particular, the recovered partial order remains correct after adding $\hat C$.

\medskip
\noindent\textbf{Step 3: update of the block row space.}
Let
\[
W \;\triangleq\; \langle \hat{\bq}_{k,i}: i\in \mathcal A\rangle .
\]
By the conclusion of Step~2, $W$ is exactly the span of the previously recovered block row spaces corresponding to true block ancestors of $C_t$. Hence
\[
\cP_k(\hat C;W)
\]
is precisely the innovation subspace associated with the block $C_t$. By the uniqueness statement for the block-RQ decomposition at the level of block row spaces, this innovation subspace is uniquely determined by $\bH_k$, the true block order, and the true ancestor structure. Therefore, the normalized vectors $\{\hat{\bq}_{k,i}\}_{i\in\hat C}$ constructed at Step~10 of Algorithm~1 span exactly the block row space corresponding to $C_t$ in some representation of the form $\bS_k\bP_{\sigma,k}\bQ_k$.

After Step~15 of Algorithm~2, the last $t$ recovered block rows of $\hat{\bQ}_k$ therefore coincide with the last $t$ block rows of some $\bS_k\bP_{\sigma,k}\bQ_k$. This proves (IH2) for the next iteration. Moreover, after Step~16, the set $\mathcal I_t$ consists of correctly recovered blocks and remains ancestor-closed under the true block-level partial order, proving (IH1) for the next iteration.

This completes the induction. After $T$ iterations, all true blocks of client $k$ have been recovered, the recovered block-level ancestral relation is correct, and the recovered $\hat{\bQ}_k$ agrees blockwise with $\bQ_k$ up to the intrinsic permutation and blockwise invertible transformation ambiguity. Hence the conclusion of the theorem follows.
\end{proof}

\subsection{Proof of \emph{Claim 1.}}
\begin{proof}
Let
\[
J=\{i_1,\dots,i_r\}\subseteq R_{t-1},
\]
and suppose that the corresponding targets involve exactly the true blocks
\[
C_{k,1},\dots,C_{k,q}\in \cC_k,
\]
with at least one of these blocks only partially selected by $J$. For each $s\in[q]$, let $p_s$ denote the length of the shortest block-parent path from $C_{k,s}$ to a root block in the true block partial order. Among all partially selected blocks, choose one, denoted by $C_{k,u}$, such that $p_u$ is maximal.

Define
\[
\cB_1
\;\triangleq\;
\{j\in J:\ C_k(j)=C_{k,s}\ \text{for some } s \text{ with } p_s\le p_u\},
\qquad
\cB_2
\;\triangleq\;
J\setminus \cB_1.
\]
Then $C_{k,u}$ is involved in $\cB_1$, every block involved in $\cB_2$ is fully selected, and no block involved in $\cB_1$ is a descendant of $C_{k,u}$. We will show
\[
\dim\big(\cP_k(\cB_1;W_t)\big)>|\cB_1|,
\qquad
\dim\Big(\Pi_{W'^\perp}\big(\cP_k(\cB_2;W_t)\big)\Big)\ge |\cB_2|
\]
for a suitable subspace $W'$ containing $\cP_k(\cB_1;W_t)$. This will imply
\[
\dim\big(\cP_k(J;W_t)\big)>|\cB_1|+|\cB_2|=r.
\]

\medskip
\noindent\textbf{Step 1: the contribution of $\cB_1$.}
By the induction hypothesis (IH2), the subspace
\[
W_t=\langle \hat{\bq}_{k,i}: i\in \mathcal I_{t-1}\rangle
\]
coincides with the span of the corresponding true block rows of $\bQ_k$, up to the intrinsic blockwise invertible transformation. Hence, for the purpose of dimension counting, we may work directly with the corresponding block rows of $\bQ_k$ and suppress this equivalence in the notation.

Set
\[
S_1
\;\triangleq\;
\rho_k\!\Big(
\bigcup_{j\in \cB_1} Anc_k^\rblk(C_k(j))
\setminus \psi(\mathcal I_{t-1})
\Big).
\]
By Lemma~\ref{lem:projected-rowspace-restriction},
\[
\cP_k(\cB_1;W_t)
\]
is contained in the span of the vectors
\[
[\bar{\bH}_k]_{S_1,:}^{T}
(\bI-[\bA_k]_{S_1,S_1})^{T}
[\bSigma_k]_{S_1,S_1}^{-1}
[\be_{\rho_k(t_j)}]_{S_1},
\]
and
\[
[\bar{\bH}_k]_{S_1,:}^{T}
\Big(
(\bI-[\bA_k]_{S_1,S_1})^{T}
[\bSigma_k]_{S_1,S_1}^{-1}
[\bu_k^{(j)}]_{S_1}
+
[\bd_k^{(j)}]_{S_1}
\Big),
\qquad j\in \cB_1.
\]
Moreover, by the block-RQ property, the projected rows $[\bar{\bH}_k]_{S_1,:}$ remain row independent on the residual innovation subspace. Therefore it suffices to prove that
\begin{align}
\dim\Big\langle\,
&(\bI-[\bA_k]_{S_1,S_1})^{T}
[\bSigma_k]_{S_1,S_1}^{-1}
[\be_{\rho_k(t_j)}]_{S_1},
\notag\\
&
(\bI-[\bA_k]_{S_1,S_1})^{T}
[\bSigma_k]_{S_1,S_1}^{-1}
[\bu_k^{(j)}]_{S_1}
+
[\bd_k^{(j)}]_{S_1}
\ \Big|\
j\in \cB_1
\Big\rangle
>|\cB_1|.
\label{eq:claim1-step1-goal}
\end{align}

The first family,
\[
(\bI-[\bA_k]_{S_1,S_1})^{T}
[\bSigma_k]_{S_1,S_1}^{-1}
[\be_{\rho_k(t_j)}]_{S_1},
\qquad j\in \cB_1,
\]
already consists of $|\cB_1|$ linearly independent vectors, since the standard basis vectors $[\be_{\rho_k(t_j)}]_{S_1}$ are distinct and
\[
(\bI-[\bA_k]_{S_1,S_1})^{T}
[\bSigma_k]_{S_1,S_1}^{-1}
\]
is invertible.

It remains to show that at least one vector from the second family is not in the span of the first family. Since $C_{k,u}$ is only partially selected by $J$, there exists an index
\[
b\in \rho_k(C_{k,u})
\]
such that $b$ is not the target coordinate of any intervention in $\cB_1\cap \psi^{-1}(C_{k,u})$. By construction of $\cB_1$, no descendant block of $C_{k,u}$ is involved in $\cB_1$. Hence, after reordering the blocks in $S_1$ so that $C_{k,u}$ appears first, the block upper-triangular structure of $\bA_k$ and the block-diagonal structure of $\bSigma_k$ imply that for every $j\in \cB_1\cap \psi^{-1}(C_{k,u})$,
\begin{align}
&\Big[
[\bSigma_k]_{S_1,S_1}
(\bI-[\bA_k]_{S_1,S_1}^{T})^{-1}
[\bd_k^{(j)}]_{S_1}
+
[\bu_k^{(j)}]_{S_1}
\Big]_b
\notag\\
&\qquad=
\Big[
[\bSigma_k]_{\rho_k(C_{k,u}),\rho_k(C_{k,u})}
(\bI-[\bA_k]_{\rho_k(C_{k,u}),\rho_k(C_{k,u})}^{T})^{-1}
[\bd_k^{(j)}]_{\rho_k(C_{k,u})}
+
[\bu_k^{(j)}]_{\rho_k(C_{k,u})}
\Big]_b.
\label{eq:claim1-localize}
\end{align}
By the third clause of Assumption~\ref{ass:non-degeneracy}, there exists some
\[
j^\star \in \cB_1\cap \psi^{-1}(C_{k,u})
\]
such that the quantity in \eqref{eq:claim1-localize} is nonzero at the unselected index $b$.

Now suppose, for contradiction, that
\[
(\bI-[\bA_k]_{S_1,S_1})^{T}
[\bSigma_k]_{S_1,S_1}^{-1}
[\bu_k^{(j^\star)}]_{S_1}
+
[\bd_k^{(j^\star)}]_{S_1}
\]
lies in the span of the first $|\cB_1|$ vectors. Left-multiplying by the invertible matrix
\[
[\bSigma_k]_{S_1,S_1}
(\bI-[\bA_k]_{S_1,S_1}^{T})^{-1}
\]
would imply that
\[
[\bu_k^{(j^\star)}]_{S_1}
+
[\bSigma_k]_{S_1,S_1}
(\bI-[\bA_k]_{S_1,S_1}^{T})^{-1}
[\bd_k^{(j^\star)}]_{S_1}
\]
lies in the span of the basis vectors $[\be_{\rho_k(t_j)}]_{S_1}$ for $j\in \cB_1$. But every such basis vector vanishes at the unselected coordinate $b$, whereas by construction and Assumption~\ref{ass:non-degeneracy}, the vector above has a nonzero $b$-th entry. This is impossible. Therefore \eqref{eq:claim1-step1-goal} holds, and hence
\[
\dim\big(\cP_k(\cB_1;W_t)\big)>|\cB_1|.
\]

\medskip
\noindent\textbf{Step 2: the contribution of $\cB_2$.}
Define
\[
W'
\;\triangleq\;
\Big\langle
[\bQ_k]_{i,:}
\ \Big|\
i\in
\rho_k\!\Big(
\bigcup_{j\in \cB_1} Anc_k^\rblk(C_k(j))
\Big)
\cup \mathcal I_{t-1}
\Big\rangle .
\]
By construction, $\cP_k(\cB_1;W_t)\subseteq W'$, and therefore
\[
\Pi_{W'^\perp}\big(\cP_k(\cB_2;W_t)\big)=\cP_k(\cB_2;W').
\]
Let
\[
S_2
\;\triangleq\;
\rho_k\!\Big(
\bigcup_{j\in \cB_2} Anc_k^\rblk(C_k(j))
\setminus
\Big(
\bigcup_{j\in \cB_1} Anc_k^\rblk(C_k(j))
\cup \psi(\mathcal I_{t-1})
\Big)
\Big).
\]
Applying Lemma~\ref{lem:projected-rowspace-restriction} again, it suffices to study the vectors
\[
(\bI-[\bA_k]_{S_2,S_2})^{T}
[\bSigma_k]_{S_2,S_2}^{-1}
[\be_{\rho_k(t_j)}]_{S_2},
\qquad j\in \cB_2.
\]
Since the blocks involved in $\cB_2$ are all completely selected, these vectors are linearly independent; indeed, the corresponding basis vectors $[\be_{\rho_k(t_j)}]_{S_2}$ are distinct, and
\[
(\bI-[\bA_k]_{S_2,S_2})^{T}
[\bSigma_k]_{S_2,S_2}^{-1}
\]
is invertible. Hence
\[
\dim\Big(\Pi_{W'^\perp}\big(\cP_k(\cB_2;W_t)\big)\Big)\ge |\cB_2|.
\]

\medskip
\noindent\textbf{Step 3: conclusion.}
Since $\cP_k(\cB_1;W_t)\subseteq W'$, we have
\[
\dim\big(\cP_k(J;W_t)\big)
\ge
\dim\big(\cP_k(\cB_1;W_t)\big)
+
\dim\Big(\Pi_{W'^\perp}\big(\cP_k(\cB_2;W_t)\big)\Big).
\]
Combining the bounds established above gives
\[
\dim\big(\cP_k(J;W_t)\big)
>
|\cB_1|+|\cB_2|
=
r.
\]
This proves Claim~1.
\end{proof}

\subsection{Proof of \emph{Claim 2.}}
\begin{proof}
Let
\[
J=\{i_1,\dots,i_r\}\subseteq R_{t-1},
\]
and suppose that $\psi(J)$ is the disjoint union of whole true blocks
\[
C_{k,1},\dots,C_{k,q}\in\cC_k.
\]
Assume further that there exists a true block parent of one of these blocks which is not yet contained in $\psi(\mathcal I_{t-1})$ and is not among the selected blocks themselves. We will prove
\[
\dim\big(\cP_k(J;W_t)\big)>r.
\]

Set
\[
S
\;\triangleq\;
\rho_k\!\Big(
Anc_k^\rblk(C_{k,1}\cup\cdots\cup C_{k,q})
\setminus \psi(\mathcal I_{t-1})
\Big).
\]
By Lemma~\ref{lem:projected-rowspace-restriction}, it suffices to show that
\begin{align}
\dim\Big\langle\,
&(\bI-[\bA_k]_{S,S})^{T}
[\bSigma_k]_{S,S}^{-1}
[\be_{\rho_k(t_j)}]_{S},
\notag\\
&
(\bI-[\bA_k]_{S,S})^{T}
[\bSigma_k]_{S,S}^{-1}
[\bu_k^{(j)}]_{S}
+
[\bd_k^{(j)}]_{S}
\ \Big|\
j\in J
\Big\rangle
>r.
\label{eq:claim2-goal}
\end{align}
The first family,
\[
(\bI-[\bA_k]_{S,S})^{T}
[\bSigma_k]_{S,S}^{-1}
[\be_{\rho_k(t_j)}]_{S},
\qquad j\in J,
\]
already consists of $r$ linearly independent vectors, since the basis vectors $[\be_{\rho_k(t_j)}]_{S}$ are distinct and
\[
(\bI-[\bA_k]_{S,S})^{T}
[\bSigma_k]_{S,S}^{-1}
\]
is invertible. Therefore, to prove \eqref{eq:claim2-goal}, it is enough to show that at least one vector from the second family is not contained in the span of the first family.

Choose a block $C_{k,i}\in\{C_{k,1},\dots,C_{k,q}\}$ and a true block parent
\[
C_{k,j}\in Pa_k^\rblk(C_{k,i})
\]
such that
\[
C_{k,j}\notin \psi(\mathcal I_{t-1})\cup \{C_{k,1},\dots,C_{k,q}\}.
\]
We claim that there exist
\[
j^\star\in \psi^{-1}(C_{k,i})
\qquad\text{and}\qquad
b\in \rho_k(C_{k,j})
\]
such that
\begin{equation}
\Big(
[\bSigma_k]_{S,S}(\bI-[\bA_k]_{S,S}^{T})^{-1}[\bd_k^{(j^\star)}]_{S}
+
[\bu_k^{(j^\star)}]_{S}
\Big)_b
\neq 0.
\label{eq:claim2-key-nonzero}
\end{equation}

We prove this by contradiction. Suppose instead that for every
\[
j\in \psi^{-1}(C_{k,i})
\qquad\text{and every}\qquad
b\in \rho_k(C_{k,j}),
\]
one has
\begin{equation}
\Big(
[\bSigma_k]_{S,S}(\bI-[\bA_k]_{S,S}^{T})^{-1}[\bd_k^{(j)}]_{S}
+
[\bu_k^{(j)}]_{S}
\Big)_b
= 0.
\label{eq:claim2-contradiction-assumption}
\end{equation}

Given the block-diagonal structure of $\bSigma_k$ and the fact that $C_{k,j}$ is a block-level parent of $C_{k,i}$, it follows that 

\begin{align*}
    &\Big([\bSigma_k]_{S,S}(\bI-[\bA_k]_{S,S}^{T})^{-1}[\bd_k^{(j)}]_{S}
+
[\bu_k^{(j)}]_{S}
\Big)_b \\ =  &\Big([\bSigma_k]_{\rho_k(C_{k,j}),\rho_k(C_{k,j})}(\bI-[\bA_k]_{S,S}^{T})^{-1}_{\rho_k(C_{k,j}), \rho_k(C_{k,i}\cup C_{k,j})}[\bd_k^{(j)}]_{\rho_k(C_{k,i}\cup C_{k,j}}
+
[\bu_k^{(j)}]_{\rho_k(C_{k,i}\cup C_{k,j}})
\Big)_b.
\end{align*}

Enumerate the nodes in $C_{k,i}$ according to the true topological order within the block:
\[
v_1,\dots,v_m.
\]
We will show inductively that no node in $C_{k,j}$ can be an ancestor of any $v_s$, which contradicts the assumption that $C_{k,j}\in Pa_k^\rblk(C_{k,i})$.

For the first node $v_1$, using Lemma \ref{lem:support-du}, by the support characterization of $\bd_k^{(v_1)}$, there is no support in the coordinates corresponding to $\rho_k(C_{k,i})$, since $v_1$ has no parent inside $C_{k,i}$. Moreover, for $b\in \rho_k(C_{k,j})$, the term $[\bu_k^{(v_1)}]_b$ vanishes by the support description of $\bu_k^{(v_1)}$. Hence \eqref{eq:claim2-contradiction-assumption} implies
\[
[\bSigma_k]_{\rho_k(C_{k,j}),\rho_k(C_{k,j})}
(\bI-[\bA_k]_{S,S}^{T})^{-1}_{\rho_k(C_{k,j}),\rho_k(C_{k,j})}
[\bd_k^{(v_1)}]_{\rho_k(C_{k,j})}
=\mathbf 0.
\]
Since
\[
[\bSigma_k]_{\rho_k(C_{k,j}),\rho_k(C_{k,j})}
(\bI-[\bA_k]_{S,S}^{T})^{-1}_{\rho_k(C_{k,j}),\rho_k(C_{k,j})}
\]
is invertible, we obtain
\[
[\bd_k^{(v_1)}]_{\rho_k(C_{k,j})}=\mathbf 0.
\]
By Assumption~\ref{ass:non-degeneracy} (the first part concerning $\bd_k^{(j)}$), this implies that no node in $C_{k,j}$ is a parent of $v_1$. Hence no node in $C_{k,j}$ is an ancestor of $v_1$, and therefore
\begin{equation}
\big[(\bI-[\bA_k]_{S,S})^{-1}\big]_{\rho_k(C_{k,j}),\,\rho_k(v_1)}=0.
\label{eq:claim2-col-zero-base}
\end{equation}

Now suppose inductively that for some $s\ge 2$, we have already shown that for every $a<s$,
\[
\big[(\bI-[\bA_k]_{S,S})^{-1}\big]_{\rho_k(C_{k,j}),\,\rho_k(v_a)}=0.
\]
We consider the node $v_s$. By the support characterization of $\bd_k^{(v_s)}$, the coordinates of
\[
[\bd_k^{(v_s)}]_{\rho_k(C_{k,i})}
\]
can only lie on earlier nodes $v_1,\dots,v_{s-1}$ inside the same block. The only new within-block support that may appear relative to previous interventions is therefore on coordinates corresponding to $v_1,\dots,v_{s-1}$. By the induction hypothesis, the columns of $(\bI-[\bA_k]_{S,S})^{-1}$ indexed by these earlier nodes vanish on the row block $\rho_k(C_{k,j})$. Hence these within-block contributions do not affect the coordinates in $\rho_k(C_{k,j})$.

Therefore, for $b\in \rho_k(C_{k,j})$, the identity \eqref{eq:claim2-contradiction-assumption} again reduces to
\[
[\bSigma_k]_{\rho_k(C_{k,j}),\rho_k(C_{k,j})}
(\bI-[\bA_k]_{S,S}^{T})^{-1}_{\rho_k(C_{k,j}),\rho_k(C_{k,j})}
[\bd_k^{(v_s)}]_{\rho_k(C_{k,j})}=\mathbf 0.
\]
Since the matrix multiplying $[\bd_k^{(v_s)}]_{\rho_k(C_{k,j})}$ is invertible, we conclude
\[
[\bd_k^{(v_s)}]_{\rho_k(C_{k,j})}=\mathbf 0.
\]
Again by Assumption~\ref{ass:non-degeneracy}, no node in $C_{k,j}$ is a parent of $v_s$, and any node within $C_{k,i}$ that serves as an ancestor of $v_s$ possesses no ancestors in $C_{k,j}$. Hence, no node in $C_{k,j}$ is an ancestor of $v_s$. Therefore
\[
\big[(\bI-\bA_k)^{-1}\big]_{\rho_k(C_{k,j}),\,\rho_k(v_s)}=0.
\]
This completes the induction.

We have thus shown that no node in $C_{k,j}$ is an ancestor of any node in $C_{k,i}$, contradicting the assumption that $C_{k,j}\in pa_k^\rblk(C_{k,i})$. Hence the contradiction assumption \eqref{eq:claim2-contradiction-assumption} is false, and \eqref{eq:claim2-key-nonzero} must hold.

Now fix $j^\star$ and $b$ as in \eqref{eq:claim2-key-nonzero}. Since $b\in \rho_k(C_{k,j})$ and the block $C_{k,j}$ is not selected, every basis vector
\[
[\be_{\rho_k(t_j)}]_{S},
\qquad j\in J,
\]
vanishes at the coordinate $b$. Consequently,
\[
[\bSigma_k]_{S,S}(\bI-[\bA_k]_{S,S}^{T})^{-1}
[\bd_k^{(j^\star)}]_{S}
+
[\bu_k^{(j^\star)}]_{S}
\]
cannot belong to the span of $\{[\be_{\rho_k(t_j)}]_{S}:j\in J\}$. Equivalently, after multiplying back by the invertible matrix
\[
(\bI-[\bA_k]_{S,S})^{T}
[\bSigma_k]_{S,S}^{-1},
\]
the vector
\[
(\bI-[\bA_k]_{S,S})^{T}
[\bSigma_k]_{S,S}^{-1}
[\bu_k^{(j^\star)}]_{S}
+
[\bd_k^{(j^\star)}]_{S}
\]
cannot lie in the span of the first $r$ vectors in \eqref{eq:claim2-goal}. Therefore the whole family in \eqref{eq:claim2-goal} has rank strictly larger than $r$, and hence
\[
\dim\big(\cP_k(J;W_t)\big)>r.
\]
This proves Claim~2.
\end{proof}

\subsection{Proof of \emph{Claim 3.}}
\begin{proof}
Let
\[
\hat C=\{i_1,\dots,i_r\}\subseteq R_{t-1},
\]
and suppose that $\psi(\hat C)$ is exactly one whole true block, denoted by $C_{k,i}$. Assume moreover that
\[
anc_k^\rblk(C_{k,i})\subseteq \psi(\mathcal I_{t-1}).
\]
We show that
\[
\dim\big(\cP_k(\hat C;W_t)\big)=|\hat C|=r.
\]

Since all true block ancestors of $C_{k,i}$ have already been recovered at the beginning of iteration $t$, the residual ancestor set appearing in Lemma~\ref{lem:projected-rowspace-restriction} reduces exactly to the block $C_{k,i}$ itself. More precisely,
\[
S
\;\triangleq\;
\rho_k\!\Big(
Anc_k^\rblk(C_{k,i})\setminus \psi(\mathcal I_{t-1})
\Big)
=
\rho_k(C_{k,i}).
\]
Therefore, by Lemma~\ref{lem:projected-rowspace-restriction}, the space
\[
\cP_k(\hat C;W_t)
\]
is contained in the span of the vectors
\[
[\bar{\bH}_k]_{S,:}^{T}
(\bI-[\bA_k]_{S,S})^{T}
[\bSigma_k]_{S,S}^{-1}
[\be_{\rho_k(t_j)}]_{S},
\]
and
\[
[\bar{\bH}_k]_{S,:}^{T}
\Big(
(\bI-[\bA_k]_{S,S})^{T}
[\bSigma_k]_{S,S}^{-1}
[\bu_k^{(j)}]_{S}
+
[\bd_k^{(j)}]_{S}
\Big),
\qquad j\in\hat C.
\]

We first consider the first family. Since $\hat C$ contains exactly all interventions corresponding to the block $C_{k,i}$, the coordinates $\rho_k(t_j)$ for $j\in\hat C$ exhaust all indices in $S=\rho_k(C_{k,i})$. Hence the vectors
\[
[\be_{\rho_k(t_j)}]_{S},
\qquad j\in\hat C,
\]
form the standard basis of $\mathbb R^{|S|}=\mathbb R^r$. Because
\[
(\bI-[\bA_k]_{S,S})^{T}
[\bSigma_k]_{S,S}^{-1}
\]
is invertible, it follows that
\[
(\bI-[\bA_k]_{S,S})^{T}
[\bSigma_k]_{S,S}^{-1}
[\be_{\rho_k(t_j)}]_{S},
\qquad j\in\hat C,
\]
are $r$ linearly independent vectors. Since $[\bar{\bH}_k]_{S,:}$ has full row rank on the residual innovation subspace, the corresponding projected vectors
\[
[\bar{\bH}_k]_{S,:}^{T}
(\bI-[\bA_k]_{S,S})^{T}
[\bSigma_k]_{S,S}^{-1}
[\be_{\rho_k(t_j)}]_{S},
\qquad j\in\hat C,
\]
also span an $r$-dimensional space. Hence
\[
\dim\big(\cP_k(\hat C;W_t)\big)\ge r.
\]

On the other hand, every vector appearing in the second family above belongs to the same $r$-dimensional ambient coordinate space indexed by $S=\rho_k(C_{k,i})$. Equivalently, before multiplying by $[\bar{\bH}_k]_{S,:}^{T}$, all coefficient vectors lie in $\mathbb R^{S}$, whose dimension is exactly $|S|=r$. Since the first family already forms a basis of this space, each vector
\[
(\bI-[\bA_k]_{S,S})^{T}
[\bSigma_k]_{S,S}^{-1}
[\bu_k^{(j)}]_{S}
+
[\bd_k^{(j)}]_{S},
\qquad j\in\hat C,
\]
must be a linear combination of the first $r$ vectors
\[
(\bI-[\bA_k]_{S,S})^{T}
[\bSigma_k]_{S,S}^{-1}
[\be_{\rho_k(t_j)}]_{S},
\qquad j\in\hat C.
\]
After left multiplication by $[\bar{\bH}_k]_{S,:}^{T}$, the same linear dependence relation is preserved. Therefore, the entire projected family still spans a space of dimension at most $r$. Thus
\[
\dim\big(\cP_k(\hat C;W_t)\big)\le r.
\]

Combining the lower and upper bounds yields
\[
\dim\big(\cP_k(\hat C;W_t)\big)=r.
\]
This proves Claim~3.
\end{proof}

\subsection{Proof of \emph{Claim 4.}}
\begin{proof}
Let $\hat C$ be the block identified in Step~1 of the current iteration, and let $C_t=\psi(\hat C)$ be the corresponding true block. Let
\[
U_{\hat C}
\;\triangleq\;
\row\Big(\{\bTheta_k^{(j)}-\bTheta_k^{(0)}\}_{j\in \hat C}\Big).
\]
Fix a block $B\in \mathcal I_{t-1}$ such that
\[
B\notin Anc_k^\rblk(C_t).
\]
Recall that Algorithm~1 defines
\[
W_t=\langle \hat{\bq}_{k,i}: i\in \mathcal I_{t-1}\rangle,
\qquad
W_{-B}=\langle \hat{\bq}_{k,i}: i\in \mathcal I_{t-1}\backslash B\rangle.
\]
We will show that
\[
\dim\big(\cP_k(\hat C;W_{-B})\big)=|\hat C|.
\]

By the induction hypothesis (IH2), the recovered block row spaces in $\hat{\bQ}_k$ coincide with the corresponding true block row spaces of $\bQ_k$, up to the intrinsic permutation and blockwise invertible transformation ambiguity. Hence, for the purpose of dimension counting, we may identify $W_t$ and $W_{-B}$ with the corresponding spans of true block rows of $\bQ_k$.

Now define
\[
V_{anc}
\;\triangleq\;
\Big\langle
[\bQ_k]_{\rho_k(i),:}
\ \Big|\
i\in C',\ C'\in Anc_k^\rblk(C_t)
\Big\rangle.
\]
By Lemma~2, the perturbation row space generated by the interventions in $\hat C$ satisfies
\[
U_{\hat C}\subseteq V_{anc}.
\]

Consider the orthogonal projection maps
\[
T_t:U_{\hat C}\to W_t^\perp,
\qquad
x\mapsto \Pi_{W_t^\perp}(x),
\]
and
\[
T_{-B}:U_{\hat C}\to W_{-B}^\perp,
\qquad
x\mapsto \Pi_{W_{-B}^\perp}(x).
\]
By definition,
\[
\cP_k(\hat C;W_t)=\mathrm{Im}(T_t),
\qquad
\cP_k(\hat C;W_{-B})=\mathrm{Im}(T_{-B}).
\]
Therefore,
\[
\dim\big(\cP_k(\hat C;W_t)\big)=\dim(\mathrm{Im}(T_t)),
\qquad
\dim\big(\cP_k(\hat C;W_{-B})\big)=\dim(\mathrm{Im}(T_{-B})).
\]

We next identify the kernels of these two maps. Since orthogonal projection onto $W^\perp$ vanishes exactly on $W$, we have
\[
\ker(T_t)=U_{\hat C}\cap W_t,
\qquad
\ker(T_{-B})=U_{\hat C}\cap W_{-B}.
\]
Thus, by the rank-nullity theorem, it is enough to prove
\[
U_{\hat C}\cap W_t
=
U_{\hat C}\cap W_{-B}.
\]

To see this, note first that $W_{-B}\subseteq W_t$, so it suffices to show that removing the block $B$ does not remove any direction lying in $U_{\hat C}$. Since $B\notin Anc_k^\rblk(C_t)$, the block row space associated with $B$ does not enter the ancestor-generated subspace $V_{anc}$ for $C_t$. Hence
\[
W_t\cap V_{anc}=W_{-B}\cap V_{anc}.
\]
Because $U_{\hat C}\subseteq V_{anc}$, it follows that
\[
U_{\hat C}\cap W_t
=
U_{\hat C}\cap (W_t\cap V_{anc})
=
U_{\hat C}\cap (W_{-B}\cap V_{anc})
=
U_{\hat C}\cap W_{-B}.
\]
Therefore,
\[
\ker(T_t)=\ker(T_{-B}).
\]

Applying rank-nullity to $T_t$ and $T_{-B}$ gives
\[
\dim(\mathrm{Im}(T_t))
=
\dim(U_{\hat C})-\dim(\ker T_t)
=
\dim(U_{\hat C})-\dim(\ker T_{-B})
=
\dim(\mathrm{Im}(T_{-B})).
\]
Hence
\[
\dim\big(\cP_k(\hat C;W_t)\big)
=
\dim\big(\cP_k(\hat C;W_{-B})\big).
\]

Finally, by Claim~3,
\[
\dim\big(\cP_k(\hat C;W_t)\big)=|\hat C|.
\]
Therefore,
\[
\dim\big(\cP_k(\hat C;W_{-B})\big)=|\hat C|.
\]
This proves Claim~4.
\end{proof}

\section{Proof of Corollary \ref{cor:zk}}
\label{app:proof cor zk}

\begin{corollary}[Corollary \ref{cor:zk} in the main text]
Assume Assumptions~\ref{ass: aligned intervention} and~\ref{ass:non-degeneracy} hold. Let $\hat{\bQ}_k$ be the output of Algorithm~\ref{alg:causal order client}, and define
\[
\hat Z_k=\hat{\bQ}_kX_k.
\]
For each true block $C\in\cC_k$, let $\hat C=\psi^{-1}(C)$ denote the corresponding estimated block. Then, for every $i\in\hat C$, the estimated latent variable $\hat Z_{k,(i)}$ lies in the span of the true latent variables indexed by the block ancestors of $C$, namely
\[
\hat Z_{k,(i)}\in \mathrm{span}\{Z_{(j)}:j\in C', C'\in Anc_k^\rblk(C)\}.
\]
Thus, up to the relabeling induced by $\psi$, each estimated latent variable is identified up to block-level ancestor mixing.
\end{corollary}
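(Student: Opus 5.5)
The plan is to compute $\hat Z_k=\hat{\bQ}_kX_k$ directly through the observation model. We have $X_k=\bG_kZ_k$, and $\bG_k$ has full column rank, so $\bH_k\bG_k=\bI$. By Theorem~\ref{thm:client-main}(ii), for each $i\in\hat C$ the row $\hat{\bq}_{k,i}$ lies in $\mathrm{row}([\bQ_k]_{\rho_k(C),:})$. Hence it suffices to show that for every $m\in\rho_k(C)$ the scalar random variable $[\bQ_k]_{m,:}\bG_kZ_k$ is a linear combination of $\{Z_{(j)}: j\in C',\,C'\in Anc_k^\rblk(C)\}$. Linearity of the span then passes this to any $\hat{\bq}_{k,i}$ spanned by those rows.

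To analyze $\bQ_k\bG_k$, I will use the block RQ decomposition $\bH_k=\bR_k\bQ_k$ of Definition~\ref{def:RQ}. Both $\bH_k$ and $\bQ_k$ have $p_k$ rows and full row rank, so $\bR_k$ is an invertible $p_k\times p_k$ matrix. Since $\bR_k$ is block upper triangular with respect to the partial order $\preceq_k^\rblk$, its inverse is block upper triangular in the same sense. Concretely, $[\bR_k^{-1}]_{\rho_k(C),\rho_k(C')}=0$ unless $C\preceq_k^\rblk C'$. To check this, arrange the blocks by a linear extension consistent with the ordering convention of Section~\ref{sec:setup}. The inverse of a block triangular matrix can be written as a Neumann-type sum over chains of nonzero off-diagonal blocks, and each such chain only links $C$ to its block ancestors by transitivity of $\prec_k^\rblk$.

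We also need $\bQ_k\bG_k=\bR_k^{-1}$. I intend to show that $\bQ_k=\bR_k^{-1}\bH_k$ and that $\bH_k\bG_k=\bI$. The latter holds because $\bH_k=\bG_k^\dagger$ and $\bG_k$ has full column rank. Together these give $\bQ_k\bG_k=\bR_k^{-1}\bH_k\bG_k=\bR_k^{-1}$. Consequently
$$[\bQ_k]_{m,:}X_k=[\bR_k^{-1}]_{m,:}Z_k=\sum_{C'\in Anc_k^\rblk(C)}\sum_{\ell\in\rho_k(C')}[\bR_k^{-1}]_{m,\ell}Z_{k,\ell},$$
which is the desired span statement once $Z_{k,\ell}$ is identified with $Z_{(\rho_k^{-1}(\ell))}$.

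I expect the main obstacle to be this identity $\bQ_k=\bR_k^{-1}\bH_k$. It requires $\mathrm{row}(\bQ_k)=\mathrm{row}(\bH_k)$, so that $\bR_k$ is square invertible and not merely some left factor. I will derive it from the constructive Algorithm~\ref{alg:block_rq}: each block of $\bQ_k$ is a projection of rows of $\bH_k$ onto the complement of ancestor rows already lying in $\mathrm{row}(\bH_k)$. This gives $\mathrm{row}(\bQ_k)\subseteq\mathrm{row}(\bH_k)$, and the reverse inclusion follows from $\bH_k=\bR_k\bQ_k$. For a general decomposition satisfying Definition~\ref{def:RQ}, I will argue that the rank of $\bQ_k$ is $p_k$ while $\bH_k=\bR_k\bQ_k$ has rank $p_k$, which forces $\bR_k$ to be invertible. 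Then $\bQ_k=\bR_k^{-1}\bH_k$ has rows in $\mathrm{row}(\bH_k)$. The within-block basis ambiguity $\bS_k$ and the relabeling $\psi$ do not affect the conclusion, since they act by invertible maps within a block's span.
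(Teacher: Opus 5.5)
Your proposal is correct and follows essentially the same route as the paper. Both arguments write $\bQ_kX_k=\bR_k^{-1}Z_k$ using $\bH_k=\bR_k\bQ_k$ and $\bH_k\bG_k=\bI$, and both use the ancestor-restricted block support of $\bR_k^{-1}$, which the paper proves by the same nilpotent Neumann-series argument you sketch. Both then pass to $\hat{\bQ}_k$ through Theorem~\ref{thm:client-main}. The differences are cosmetic: you invoke the subspace statement (ii) rather than the factorization $\hat{\bQ}_k=\bS_k\bP_{\sigma,k}\bQ_k$, and you justify the invertibility of $\bR_k$ by a rank count, which the paper only asserts.
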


\begin{proof}
By Theorem~\ref{thm:client-main}, there exist a block-diagonal invertible matrix $\bS_k$ and a block permutation matrix $\bP_{\sigma,k}$ such that
\[
\hat{\bQ}_k=\bS_k\bP_{\sigma,k}\bQ_k.
\]
Therefore,
\[
\hat{Z}_k
=
\hat{\bQ}_kX_k
=
\bS_k\bP_{\sigma,k}\bQ_kX_k.
\]

Next, recall that the block-RQ decomposition of $\bH_k$ is
\[
\bH_k=\bR_k\bQ_k.
\]
Since $\bH_kX_k=Z_k$, we have
\[
\bR_k\bQ_kX_k=Z_k.
\]
Because $\bR_k$ is invertible, it follows that
\[
\bQ_kX_k=\bR_k^{-1}Z_k.
\]
Substituting this into the previous display yields
\[
\hat{Z}_k
=
\bS_k\bP_{\sigma,k}\bR_k^{-1}Z_k.
\]

Now fix a block $C_{k,i}$. By Lemma~\ref{lem:Rk^-1},
\[
[\bR_k^{-1}]_{C_{k,i},C_{k,j}}=0
\qquad
\text{whenever } C_{k,j}\notin Anc_k^\rblk(C_{k,i}).
\]
Hence the block $(\bR_k^{-1}Z_k)_{C_{k,i}}$ depends only on the blocks $Z_{k,C_{k,j}}$ with
\[
C_{k,j}\in Anc_k^\rblk(C_{k,i}).
\]
Equivalently, $(\bR_k^{-1}Z_k)_{C_{k,i}}$ is a linear combination of the latent blocks associated with the block ancestors of $C_{k,i}$, including $C_{k,i}$ itself.

Finally, left multiplication by the block-diagonal invertible matrix $\bS_k$ only applies an invertible linear transformation within each recovered block, while $\bP_{\sigma,k}$ only permutes the block order. Therefore, neither operation changes the ancestor-restricted block dependence pattern; they only change the within-block coordinates and the block ordering. It follows that each recovered block $\hat{Z}_{k,(i)}$ is a linear combination of the true latent blocks corresponding to the block ancestors of its matched true block (including itself), exactly as claimed.
\end{proof}

\section{Proof of Theorem \ref{thm:global}}

\label{app:proof thm global}

\begin{theorem}[Theorem \ref{thm:global} in the main text]
Assume Assumptions~\ref{ass: aligned intervention}--\ref{ass: global cover edge} hold. Let $\{(\hat{\cC}_k,\hat{\prec}_k^\rblk)\}_{k=1}^K$ be the outputs of Algorithm~\ref{alg:causal order client} and the input to Algorithm~\ref{alg:global}. Let $\hat{\prec}$ be the output of Algorithm~\ref{alg:global}. Then $(\cV,\hat{\prec})$ recovers $(\cV,\prec)$ up to relabeling under the bijection $\psi:j\mapsto t_j$.
\end{theorem}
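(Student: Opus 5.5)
The plan is to prove two inclusions between the relation $\tilde{\prec}$ collected by Algorithm~\ref{alg:global} (transported through $\psi$) and the global relation $\prec$, and then pass to transitive closures. Throughout, we identify each estimated singleton block $\{j\}$ with the true node $\psi(j)$. This is legitimate by Theorem~\ref{thm:client-main}(i): for every client $k$, $(\hat{\cC}_k,\hat{\prec}_k^\rblk)$ equals $(\cC_k,\prec_k^\rblk)$ up to relabeling by $\psi$. Assumption~\ref{ass: aligned intervention}(ii) guarantees that the same label refers to the same global node across clients, so the pairs harvested from different clients live in one common coordinate system. It therefore suffices to show that the transitive closure of
\[
\{(i,j): \exists k,\ \{i\},\{j\}\in\cC_k,\ \{j\}\in pa_k^\rblk(\{i\})\}
\]
equals $\prec$.

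\textbf{Soundness.} Suppose a pair $i\,\tilde{\prec}\,j$ is added from client $k$. Then $\{j\}\in pa_k^\rblk(\{i\})$, so by definition $\{i\}\prec_k^\rblk\{j\}$. Since both blocks are singletons, this block-level relation means exactly $i\prec_k j$. Proposition~\ref{prop:margin Z} states that $\prec_k$ is the restriction of $\prec$ to $O_k$, hence $i\prec j$. Because $\prec$ is a strict partial order and therefore transitive, the transitive closure $\hat{\prec}$ is contained in $\prec$.

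\textbf{Completeness.} Take $i\prec j$ in $\cG$. Since $\cG$ is a finite DAG, there is a directed path from $j$ to $i$, and any maximal chain $i=v_0\prec v_1\prec\cdots\prec v_m=j$ refining it has every consecutive step a covering (Hasse) relation. Finiteness and acyclicity ensure that such a saturated chain exists. For each covering step, Assumption~\ref{ass: global cover edge} supplies a client $k$ with $\{v_\ell\},\{v_{\ell+1}\}\in\cC_k$ and $\{v_{\ell+1}\}\in pa_k^\rblk(\{v_\ell\})$. By Theorem~\ref{thm:client-main}(i), Algorithm~\ref{alg:global} detects these as singleton estimated blocks in a block-parent relation, so it adds $v_\ell\,\tilde{\prec}\,v_{\ell+1}$. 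Taking the transitive closure then yields $i\,\hat{\prec}\,j$.

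\textbf{Expected obstacle.} The main subtlety is confirming that the recovered block-parent relation $\hat{pa}_k^\rblk$ matches the true $pa_k^\rblk$, and not merely the ancestral order. Theorem~\ref{thm:client-main}(i) gives exact recovery of $\prec_k^\rblk$, and block parents are the covering relations of this order, so they are determined by it. Soundness, by contrast, only needs ancestry, so even a superset of parents would be harmless there.

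The other point to check is the direction convention of the Hasse-edge statement in Assumption~\ref{ass: global cover edge}. It must be read consistently with $\{j\}\in pa_k^\rblk(\{i\})$ meaning $i\prec j$; I would state this alignment explicitly rather than rely on the assumption's wording.
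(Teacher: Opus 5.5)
Your proposal is correct and follows essentially the same two-inclusion argument as the paper. Completeness splits $i\prec j$ into covering (Hasse) steps, each witnessed by Assumption~\ref{ass: global cover edge}, recovered via Theorem~\ref{thm:client-main}, and then closed under transitivity; soundness reads each harvested singleton block-parent pair as a client-level ancestral relation and applies the restriction property from Proposition~\ref{prop:margin Z}. Your version is slightly more careful than the paper's: you refine the directed path to a saturated chain of covers, you keep the direction convention between $pa_k^\rblk$ and $\prec$ consistent where the paper's write-up slips, and you need only ancestry rather than Proposition~\ref{prop:block anc} in the soundness step.
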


\begin{proof}
We prove that, under the bijection $\psi: j\mapsto t_j$, the output $\hat{\prec}$ of Algorithm~\ref{alg:global} coincides exactly with the true global ancestral relation $\prec$. It suffices to show the two inclusions
\[
\prec \subseteq \hat{\prec},
\qquad
\hat{\prec} \subseteq \prec.
\]

We first prove $\prec \subseteq \hat{\prec}$. Take any two global nodes $u,v\in\cV$ such that
\[
u\prec v.
\]
If $u=v$, then the claim is trivial. Assume $u\neq v$. Since $\prec$ is the ancestral relation of the global DAG, there exists a directed path from $u$ to $v$. Equivalently, there exists a sequence of nodes
\[
u=v_0 \to v_1 \to \cdots \to v_m=v
\]
such that each edge $v_{r-1}\to v_r$ is a cover edge in the Hasse diagram of $\prec$, namely there is no node $w$ satisfying
\[
v_{r-1}\prec w\prec v_r.
\]

Fix any such cover edge $v_{r-1}\to v_r$. By Assumption~\ref{ass: global cover edge}, there exists some client $k$ such that both $v_{r-1}$ and $v_r$ appear as singleton blocks in the client-specific block structure, and moreover
\[
\{v_{r-1}\}\in pa_k^\rblk(\{v_r\}).
\]
By Theorem \ref{thm:client-main}, Algorithm \ref{alg:causal order client} recovers the client-level block structure and block-level ancestral relation correctly. Hence its output satisfies
\[
\{v_{r-1}\}\in \hat{pa}_k^\rblk(\{v_r\}).
\]
Therefore, Algorithm~\ref{alg:global} adds
\[
v_{r-1}\Tilde{\prec} v_r
\]
to $\Tilde{\prec}$ at Lines 2--3. Since this holds for every cover edge along the above path, we obtain
\[
v_0\Tilde{\prec} v_1,\quad
v_1\Tilde{\prec} v_2,\quad \dots,\quad
v_{m-1}\Tilde{\prec} v_m.
\]
Algorithm~\ref{alg:global} then defines $\hat{\prec}$ as the transitive closure of $\Tilde{\prec}$. Hence
\[
u=v_0 \hat{\prec} v_1 \hat{\prec}\cdots \hat{\prec} v_m=v,
\]
which implies
\[
u\hat{\prec} v.
\]
Since $u\prec v$ was arbitrary, we conclude that
\[
\prec \subseteq \hat{\prec}.
\]

Next we prove $\hat{\prec}\subseteq \prec$. Take any two nodes $u,v\in\cV$ such that
\[
u\hat{\prec} v.
\]
Since $\hat{\prec}$ is the transitive closure of $\Tilde{\prec}$, there exists a sequence of nodes
\[
u=v_0,\ v_1,\ \dots,\ v_m=v
\]
such that
\[
v_{r-1}\Tilde{\prec} v_r,\qquad r=1,\dots,m.
\]
It therefore suffices to show that every relation inserted into $\Tilde{\prec}$ is a true global ancestral relation.

Fix any pair $a,b\in\cV$ such that
\[
a\Tilde{\prec} b.
\]
By the construction of Algorithm~\ref{alg:global}, there exists some client $k$ such that
\[
\{a\},\{b\}\in \hat{\cC}_k,
\qquad
\{a\}\in \hat{pa}_k^\rblk(\{b\}).
\]
Again by Theorem \ref{thm:client-main}, the estimated client-level structure is correct, so in the true client-specific structure we also have
\[
\{a\}\in pa_k^\rblk(\{b\}).
\]
Since both blocks are singletons, Proposition~\ref{prop:block anc} implies that this block-level parent relation corresponds to a genuine node-level parent-child relation in client $k$. In particular,
\[
a\prec_k b,
\]
where $\prec_k$ denotes the node-level ancestral relation in the induced graph for client $k$. On the other hand, the proof of Proposition \ref{prop:margin Z} shows that $\prec_k$ is exactly the restriction of the global ancestral relation $\prec$ to $O_k$. Therefore
\[
a\prec_k b \quad\Longrightarrow\quad a\prec b.
\]
This proves that
\[
\Tilde{\prec}\subseteq \prec.
\]
Since $\prec$ is transitive and $\hat{\prec}$ is the transitive closure of $\Tilde{\prec}$, it follows immediately that
\[
\hat{\prec}\subseteq \prec.
\]

Combining the two inclusions yields
\[
\hat{\prec}=\prec.
\]
Therefore, under the bijection $\psi:j\mapsto t_j$, the recovered structure $(\cV,\hat{\prec})$ coincides exactly with the true global ancestral structure $(\cV,\prec)$, up to relabeling.
\end{proof}

\section{Auxiliary Lemmas}
\label{app: auxiliary lemmas}

\begin{lemma}
\label{lem:HKH}
Let $X = \bG Z$, where $\bG \in \mathbb{R}^{n\times p}$ has full column rank. Assume that $Cov(Z)$ is nonsingular, and define
\[
\bK \triangleq Cov(Z)^{-1}, 
\qquad 
\bH \triangleq \bG^\dagger .
\]
Then
\[
Cov(X)^\dagger = \bH^T \bK \bH.
\]
\end{lemma}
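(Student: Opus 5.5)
We need to show $Cov(X)^\dagger=\bH^T\bK\bH$ where $Cov(X)=\bG\,\Sigma_Z\,\bG^T$ with $\Sigma_Z\triangleq Cov(Z)$ nonsingular. The plan is to check directly that the candidate $\bM\triangleq \bH^T\bK\bH$ satisfies the four Penrose conditions for $\bN\triangleq \bG\Sigma_Z\bG^T$; uniqueness of the Moore--Penrose inverse then gives the claim. We use two facts about $\bH=\bG^\dagger$. First, since $\bG$ has full column rank, $\bH=(\bG^T\bG)^{-1}\bG^T$ and $\bH\bG=\bI_p$. Second, $\bG\bH=\bP$, the orthogonal projector onto $\mathrm{col}(\bG)$, which is symmetric and idempotent. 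Transposing gives $\bG^T\bH^T=\bI_p$ and $\bH^T\bG^T=\bP$.

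The key computation is
\[
\bN\bM=\bG\Sigma_Z\bG^T\bH^T\bK\bH=\bG\Sigma_Z\bK\bH=\bG\bH=\bP,
\]
and similarly
\[
\bM\bN=\bH^T\bK\bH\bG\Sigma_Z\bG^T=\bH^T\bK\Sigma_Z\bG^T=\bH^T\bG^T=\bP.
\]
Both products equal $\bP$, so they are symmetric; this gives the two symmetry conditions $(\bN\bM)^T=\bN\bM$ and $(\bM\bN)^T=\bM\bN$. For the remaining two conditions:
\[
\bN\bM\bN=\bP\bN=\bG\bH\bG\Sigma_Z\bG^T=\bG\Sigma_Z\bG^T=\bN,
\]
and
\[
\bM\bN\bM=\bP\bM=\bG\bH\bH^T\bK\bH .
\]
Here we still need $\bG\bH\bH^T=\bH^T$. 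This holds because $\bH^T=\bG(\bG^T\bG)^{-1}$ has columns in $\mathrm{col}(\bG)$, so $\bP\bH^T=\bH^T$. Hence $\bM\bN\bM=\bM$.

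With all four Penrose conditions verified, uniqueness of the pseudoinverse yields $Cov(X)^\dagger=\bH^T\bK\bH$. There is no real obstacle. The only step needing care is $\bP\bH^T=\bH^T$: the range of $\bH^T$ must lie in $\mathrm{col}(\bG)$, which follows from the explicit formula for $\bG^\dagger$ under full column rank. Note also that if $X$ is not mean zero, $Cov$ and second moments differ only by centering, and the identity applies verbatim to whichever second-moment matrix of $Z$ is nonsingular. This is how the lemma feeds into Eq.~\eqref{eq:interventional_precision}, with $\bK=(\bI-\bA_k^{(j)})^T{\bSigma_k^{(j)}}^{-1}(\bI-\bA_k^{(j)})$.
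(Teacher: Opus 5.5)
Your proof is correct, but it takes a different route from the paper's.

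\textbf{The paper's route.} The paper invokes the reverse-order law for Moore--Penrose inverses (citing Greville) to write $(\bG\,Cov(Z)\,\bG^T)^\dagger=(\bG^T)^\dagger\, Cov(Z)^{-1}\,\bG^\dagger$. It then uses $(\bG^T)^\dagger=(\bG^\dagger)^T=\bH^T$.

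\textbf{Your route.} You verify the four Penrose conditions for $\bM=\bH^T\bK\bH$ directly. You use only three facts: $\bH\bG=\bI_p$, that $\bG\bH$ is the orthogonal projector onto $\mathrm{col}(\bG)$, and that $\mathrm{col}(\bH^T)\subseteq\mathrm{col}(\bG)$.

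\textbf{Comparison.} The paper's argument is a one-liner, but it rests on an external result whose hypotheses it does not check. Greville's criterion concerns a product of two factors. For the triple product one must group it, e.g.\ as $\bG\cdot(Cov(Z)\bG^T)$, which is full column rank times full row rank, and then apply the law a second time to $Cov(Z)\bG^T$. Your verification is self-contained and shows exactly what is used: only $\Sigma_Z\bK=\bK\Sigma_Z=\bI_p$ and properties of $\bH=\bG^\dagger$. Because of this, your closing remark holds: the identity applies verbatim to the uncentered second-moment matrix that defines $\bTheta_k^{(j)}$.

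\textbf{A small simplification.} The last Penrose condition also follows as $\bM\bN\bM=\bM(\bN\bM)=\bM\bP=\bH^T\bK\bH\bG\bH=\bM$. This needs only $\bH\bG=\bI_p$ and avoids the explicit formula $\bH^T=\bG(\bG^T\bG)^{-1}$.
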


\begin{proof}
Since $X=\bG Z$, its covariance matrix is
\[
Cov(X)=\bG\,Cov(Z)\,\bG^T.
\]
Because $\bG$ has full column rank, $\bG^T$ has full row rank. Moreover, $Cov(Z)$ is invertible by assumption. According to \cite{greville1966note}, we can apply the standard pseudoinverse identity for a matrix product to obtain
\[
Cov(X)^\dagger
=
\bigl(G\,Cov(Z)\,G^T\bigr)^\dagger
=
(G^T)^\dagger Cov(Z)^\dagger G^\dagger.
\]
Using $Cov(Z)^\dagger=Cov(Z)^{-1}=\bK$ and $(\bG^T)^\dagger=(\bG^\dagger)^T=\bH^T$, we conclude that
\[
Cov(X)^\dagger
=
\bH^T \bK \bH.
\]
\end{proof}

\begin{lemma}[Support of $\bd_k^{(j)}$ and $\bu_k^{(j)}$]
\label{lem:support-du}
Under the setup of Proposition~\ref{prop:margin Z}, for each client $k\in[K]$ and intervention $j\in\cJ_k$ targeting $t_j\in O_k$, let $r=\rho_k(t_j)$. Then the vectors $\bd_k^{(j)}$ and $\bu_k^{(j)}$ defined in Assumption~\ref{ass:non-degeneracy} satisfy
\[
\supp(\bd_k^{(j)}) \subseteq \rho_k\bigl(pa_k(t_j)\bigr),
\qquad
\supp(\bu_k^{(j)}) \subseteq \rho_k\bigl(sp_k(t_j)\bigr).
\]
Equivalently, for any $v\in O_k$,
\[
v\notin pa_k(t_j)\quad \Longrightarrow\quad [\bd_k^{(j)}]_{\rho_k(v)}=0,
\qquad
v\notin sp_k(t_j)\quad \Longrightarrow\quad [\bu_k^{(j)}]_{\rho_k(v)}=0.
\]
\end{lemma}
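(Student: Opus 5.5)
We prove the two support inclusions separately, working entrywise from the definitions in Assumption~\ref{ass:non-degeneracy}. Throughout we use the path expansion from the proof of Proposition~\ref{prop:margin Z}. Since $[\bA]_{M_k,M_k}$ is nilpotent, $(\bI-[\bA]_{M_k,M_k})^{-1}=\sum_{r\ge 0}[\bA]_{M_k,M_k}^r$. Hence for $m\in M_k$ and $v\in O_k$, the entry $[(\bI-[\bA]_{M_k,M_k})^{-1}[\bA]_{M_k,O_k}]_{m,v}$ is a sum over directed paths $v\to\cdots\to m$ whose intermediate nodes all lie in $M_k$. Similarly, $[[\bA]_{O_k,M_k}(\bI-[\bA]_{M_k,M_k})^{-1}]_{w,m}$ is a sum over paths $m\to\cdots\to w$ through $M_k$. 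We also recall that $\Delta\ba_k^{(j)}$ is supported on $pa(t_j)$.

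\textbf{Support of $\bd_k^{(j)}$.} Fix $v\in O_k$ and expand
\[
[\bd_k^{(j)}]_{\rho_k(v)}=[\Delta\ba_k^{(j)}]_v+\sum_{m\in M_k}[(\bI-[\bA]_{M_k,M_k})^{-1}[\bA]_{M_k,O_k}]_{m,v}[\Delta\ba_k^{(j)}]_m .
\]
The first term is nonzero only if $v\in pa(t_j)$. In that case $A_{t_j v}\neq0$ generically, so $v\to t_j$ is an edge of $\cG_k$ through the $[\bA]_{O_k,O_k}$ part of $\bA_k$. A summand of the second term is nonzero only if $m\in pa(t_j)\cap M_k$ and there is a path $v\to\cdots\to m$ through $M_k$. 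Then $v\to\cdots\to m\to t_j$ is a path from $v$ to $t_j$ whose interior lies in $M_k$, and by the decomposition of $\bA_k$ in Eq.~\eqref{eq:Ak} it induces the directed edge $v\to t_j$ in $\cE_k$. In both cases $v\in pa_k(t_j)$, which gives the first inclusion.

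\textbf{Support of $\bu_k^{(j)}$.} With $\bSigma=\bI$ and $\bB_k\triangleq[\bA]_{O_k,M_k}(\bI-[\bA]_{M_k,M_k})^{-1}$, we have $\bu_k^{(j)}=\bB_k\bB_k^{\top}[\Delta\ba_k^{(j)}]_{M_k}$. The entry at $v$ is therefore
\[
\sum_{m'\in M_k}[\bB_k]_{v,m'}\sum_{m\in M_k}[\bB_k]_{t_j... }
\]
More precisely, it equals $\sum_{m'}[\bB_k]_{v,m'}\,c_{m'}$ with $c_{m'}=\sum_m[\bB_k]_{m,m'}^{\top}$-weighted terms over $m\in pa(t_j)\cap M_k$. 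A nonzero summand requires some $m'\in M_k$ with a path $m'\to\cdots\to v$ through $M_k$ and a path $m'\to\cdots\to m\to t_j$ through $M_k$. Since $\bSigma_k=\bI+\bB_k\bB_k^{\top}$, we have
\[
[\bSigma_k]_{\rho_k(v),\rho_k(t_j)}=\sum_{m'}[\bB_k]_{v,m'}[\bB_k]_{t_j,m'} .
\]
The existence of this common source $m'$ is exactly what generically makes that entry nonzero, i.e.\ $v\leftrightarrow t_j\in\cD_k$, so $v\in sp_k(t_j)$. The case $v=t_j$ should be handled by excluding the diagonal, or by noting that the statement concerns $sp_k$ together with the $\eta$ term.

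\textbf{Main obstacle.} The delicate point is that these arguments show only ``nonzero entry $\Rightarrow$ structural path exists''. The edge sets $\cE_k,\cD_k$ are defined by nonzero entries of $\bA_k$ and $\bSigma_k$, which can vanish through cancellation even when paths exist. I would resolve this by reading $pa_k$ and $sp_k$ generically, meaning faithfulness of $(\bA,\bSigma)$: path sums are nonzero whenever paths exist. This is consistent with the genericity spirit of Assumption~\ref{ass:non-degeneracy}. I would state this reading explicitly at the start of the proof. The same care is needed for the $v=t_j$ coordinate of $\bu_k^{(j)}$, which is absorbed into the scalar $\eta_k^{(j)}$ in Lemma~\ref{lem: rank 2}.
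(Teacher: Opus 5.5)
Your route differs from the paper's, and on the key point it is more rigorous. The paper derives $\bA_k^{(j)}=\bA_k+\be_r(\bd_k^{(j)})^\top$ and $\bSigma_k^{(j)}=\bSigma_k+\be_r(\bu_k^{(j)})^\top+\bu_k^{(j)}\be_r^\top+\eta_k^{(j)}\be_r\be_r^\top$. It then concludes, directly from the definitions of $\cE_k,\cD_k$, that the ``admissible'' nonzero entries of $\bd_k^{(j)}$ and $\bu_k^{(j)}$ lie on $\rho_k(pa_k(t_j))$ and $\rho_k(sp_k(t_j))$. It never shows why those entries cannot fall outside the existing support of row $r$ of $\bA_k$ (resp.\ $\bSigma_k$), and in general a perturbation of a row need not live on that row's support.

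Your entrywise path expansion supplies this missing step. A nonzero entry forces a directed path $v\to\cdots\to t_j$, respectively a trek with source $m'\in M_k$, all of whose non-endpoint nodes lie in $M_k$. Your faithfulness reading is exactly the hidden content of the word ``admissible''. The obstacle you flag is genuine, and Assumption~\ref{ass:non-degeneracy} does not supply it, since it only constrains intervention parameters. For example, take $v\to t_j$, $v\to m\to t_j$, $m\in M_k$, and $A_{t_jv}=-A_{t_jm}A_{mv}$. Then $[\bA_k]_{\rho_k(t_j),\rho_k(v)}=0$, so $v\notin pa_k(t_j)$, yet $[\bd_k^{(j)}]_{\rho_k(v)}=[\Delta\ba_k^{(j)}]_v+A_{mv}[\Delta\ba_k^{(j)}]_m\neq 0$ generically.

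So do state the no-cancellation hypothesis explicitly. It is needed both for the path sums defining $\bA_k$ and for the trek sums $\sum_{m'}[\bB_k]_{\rho_k(v),m'}[\bB_k]_{\rho_k(t_j),m'}$ defining the off-diagonal of $\bSigma_k$. A small correction: for $v\in pa(t_j)\cap O_k$, $A_{t_jv}\neq0$ holds by definition. What needs genericity is only that the full entry of $\bA_k$ does not cancel.

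Two points in your $\bu$ argument need repair.

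\textbf{The formula for $\bu_k^{(j)}$.} The expression $\bu_k^{(j)}=\bB_k\bB_k^\top[\Delta\ba_k^{(j)}]_{M_k}$ is wrong; it does not type-check, since $\bB_k\bB_k^\top$ is $p_k\times p_k$. With $\bSigma=\bI$ the definition gives $\bu_k^{(j)}=\bB_k(\bI-[\bA]_{M_k,M_k})^{-\top}[\Delta\ba_k^{(j)}]_{M_k}$. Entrywise, $[\bu_k^{(j)}]_{\rho_k(v)}=\sum_{m'}[\bB_k]_{\rho_k(v),m'}c_{m'}$ with $c_{m'}=\sum_{m\in pa(t_j)\cap M_k}[(\bI-[\bA]_{M_k,M_k})^{-1}]_{m,m'}[\Delta\ba_k^{(j)}]_m$. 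This vanishes unless there is a path $m'\to\cdots\to m\to t_j$ inside $M_k$, which is precisely the common-source trek you then use. So your conclusion survives.

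\textbf{The coordinate $v=t_j$.} This is not a matter of care: the inclusion is false there. Take $O_k=\{t_j,v\}$, $M_k=\{m\}$, $m\to t_j$, $m\to v$. Then $[\bu_k^{(j)}]_r=A_{t_jm}[\Delta\ba_k^{(j)}]_m\neq0$, while $t_j\notin sp_k(t_j)$ because $\cD_k$ excludes $i=j$. Absorbing this entry into $\eta_k^{(j)}$ is harmless inside Lemma~\ref{lem: rank 2}: replacing $\bu_k^{(j)}$ by $\bu_k^{(j)}-[\bu_k^{(j)}]_r\be_r$ shifts the second generator by a multiple of the first. But it does not rescue the statement for $\bu_k^{(j)}$ as defined in Assumption~\ref{ass:non-degeneracy}.

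What is provable is $\supp(\bu_k^{(j)})\subseteq\rho_k(sp_k(t_j)\cup\{t_j\})$, and this is all that is used downstream. Lemmas~\ref{lem: low rank Qk} and~\ref{lem:projected-rowspace-restriction} allow support on the whole block of $t_j$. In Claim~2 of the proof of Theorem~\ref{thm:client-main}, the coordinate $b$ lies in a different block. The paper's own proof overlooks this point as well.
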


\begin{proof}
Fix $k\in[K]$ and $j\in\cJ_k$. For notational simplicity, write
\[
O=O_k,\qquad M=M_k,\qquad t=t_j,\qquad r=\rho_k(t),
\]
and let
\[
\bB \coloneqq (\bI-[\bA]_{M,M})^{-1}.
\]

Since intervention $j$ acts only on the structural equation of node $t$ in the global latent SEM, after partitioning according to $(O,M)$, the only changed blocks of the structural matrix are the $r$-th row blocks in $[\,\cdot\,]_{O,O}$ and $[\,\cdot\,]_{O,M}$. More precisely,
\[
[\widetilde{\bA}^{(j)}]_{O,O}
=
[\bA]_{O,O}
+
\be_r[\Delta\ba^{(j)}]_{O}^{\top},
\qquad
[\widetilde{\bA}^{(j)}]_{O,M}
=
[\bA]_{O,M}
+
\be_r[\Delta\ba^{(j)}]_{M}^{\top},
\]
whereas
\[
[\widetilde{\bA}^{(j)}]_{M,O}=[\bA]_{M,O},
\qquad
[\widetilde{\bA}^{(j)}]_{M,M}=[\bA]_{M,M}.
\]

By the expression of the induced client-specific SEM in Proposition~\ref{prop:margin Z},
\[
\bA_k
=
[\bA]_{O,O}
+
[\bA]_{O,M}(\bI-[\bA]_{M,M})^{-1}[\bA]_{M,O},
\]
and similarly,
\[
\bA_k^{(j)}
=
[\widetilde{\bA}^{(j)}]_{O,O}
+
[\widetilde{\bA}^{(j)}]_{O,M}
(\bI-[\widetilde{\bA}^{(j)}]_{M,M})^{-1}
[\widetilde{\bA}^{(j)}]_{M,O}.
\]
Substituting the above block relations yields
\begin{align*}
\bA_k^{(j)}
&=
\bA_k
+
\be_r\Big(
[\Delta\ba^{(j)}]_{O}^{\top}
+
[\Delta\ba^{(j)}]_{M}^{\top}\bB[\bA]_{M,O}
\Big).
\end{align*}
Hence
\[
\bA_k^{(j)}=\bA_k+\be_r(\bd_k^{(j)})^{\top},
\qquad
(\bd_k^{(j)})^{\top}
=
[\Delta\ba^{(j)}]_{O}^{\top}
+
[\Delta\ba^{(j)}]_{M}^{\top}\bB[\bA]_{M,O}.
\]
Therefore, $\bd_k^{(j)}$ parameterizes the perturbation of the $r$-th row of $\bA_k$, namely the perturbation of the incoming directed coefficients of node $t$ in the induced client graph. By definition of the induced graph $\cG_k$, for any $v\in O_k$,
\[
v\in pa_k(t)
\quad\Longleftrightarrow\quad
[\bA_k]_{r,\rho_k(v)}\neq 0.
\]
It follows that an admissible nonzero entry of $\bd_k^{(j)}$ can only appear at coordinates indexed by $\rho_k(pa_k(t))$, i.e.,
\[
\supp(\bd_k^{(j)})\subseteq \rho_k\bigl(pa_k(t)\bigr).
\]

Next consider $\bu_k^{(j)}$. By Proposition~\ref{prop:margin Z}, the induced noise covariance satisfies
\[
\bSigma_k
=
\bI
+
[\bA]_{O,M}(\bI-[\bA]_{M,M})^{-1}
(\bI-[\bA]_{M,M})^{-T}[\bA]_{O,M}^{\top}.
\]
Under intervention $j$, only the $r$-th row of $[\bA]_{O,M}$ is perturbed, while the $M\times M$ block remains unchanged. Therefore the perturbation of $\bSigma_k$ has the form
\[
\bSigma_k^{(j)}
=
\bSigma_k
+
\be_r(\bu_k^{(j)})^{\top}
+
\bu_k^{(j)}\be_r^{\top}
+
\eta_k^{(j)}\be_r\be_r^{\top},
\]
for some scalar $\eta_k^{(j)}$, where
\[
\bu_k^{(j)}
=
[\bA]_{O,M}
(\bI-[\bA]_{M,M})^{-1}
(\bI-[\bA]_{M,M})^{-T}
[\Delta\ba^{(j)}]_{M}.
\]
Thus $\bu_k^{(j)}$ parameterizes the off-diagonal perturbation in the $r$-th row and column of $\bSigma_k$, namely the perturbation of the bidirected part incident to node $t$ in the induced mixed graph. By definition of $\cG_k$, for any $v\in O_k$,
\[
v\in sp_k(t)
\quad\Longleftrightarrow\quad
[\bSigma_k]_{r,\rho_k(v)}\neq 0
\quad\Longleftrightarrow\quad
[\bSigma_k]_{\rho_k(v),r}\neq 0.
\]
Hence a nonzero entry of $\bu_k^{(j)}$ can only appear at coordinates indexed by $\rho_k(sp_k(t))$, and therefore
\[
\supp(\bu_k^{(j)})\subseteq \rho_k\bigl(sp_k(t)\bigr).
\]

Combining the two parts proves the claim.
\end{proof}

\begin{lemma}\label{lem:projected-rowspace-restriction}
Fix a client $k\in[K]$. Let $\cI\subseteq \cC_k$ be a collection of blocks that is closed under block ancestors, namely
\[
C\in \cI,\ C'\in Anc_k^\rblk(C)\quad \Longrightarrow\quad C'\in \cI.
\]
Define
\[
W \;\triangleq\; \Big\langle [\bQ_k]_{\rho_k(i),:}\ \Big|\ i\in \bigcup_{C\in\cI} C \Big\rangle .
\]
Let $j_1,\dots,j_\ell$ be intervention indices such that
\[
C_k(j_r)\notin \cI,\qquad r=1,\dots,\ell,
\]
where $C_k(j_r)$ denotes the block in $\cC_k$ containing the target $t_{j_r}$. Define
\[
\cS_{\cI}
\;\triangleq\;
\Big( \bigcup_{r=1}^\ell Anc_k^\rblk\big(C_k(j_r)\big)\Big)\setminus \cI,
\qquad
S
\;\triangleq\;
\rho_k\!\Big(\bigcup_{C\in\cS_{\cI}} C\Big).
\]
For each $s\in S$, let $[\bar{\bH}_k]_{s,:}$ be the projection of $[\bH_k]_{s,:}$ onto $W^\perp$, and let $[\bar{\bH}_k]_{S,:}$ be the matrix formed by stacking these projected rows.

Then
\[
\Pi_{W^\perp}\!\left(
\row\Big(\{\bTheta_k^{(j)}-\bTheta_k^{(0)}\}_{j\in\{j_1,\dots,j_\ell\}}\Big)
\right)
\]
is contained in
\[
\Big\langle
[\bar{\bH}_k]_{S,:}^{T}
(\bI-[\bA_k]_{S,S})^{T}
[\bSigma_k]_{S,S}^{-1}
[\be_{\rho_k(t_j)}]_{S},
\ 
[\bar{\bH}_k]_{S,:}^{T}
\Big(
(\bI-[\bA_k]_{S,S})^{T}
[\bSigma_k]_{S,S}^{-1}
[\bu_k^{(j)}]_{S}
+
[\bd_k^{(j)}]_{S}
\Big)
\ \Big|\ 
j\in\{j_1,\dots,j_\ell\}
\Big\rangle .
\]
\end{lemma}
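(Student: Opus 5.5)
We begin from Lemma~\ref{lem: rank 2}. For each $j\in\{j_1,\dots,j_\ell\}$ it gives
\[
\row(\bTheta_k^{(j)}-\bTheta_k^{(0)})\subseteq\langle \bH_k^T\balpha^{(j)},\ \bH_k^T\bbeta^{(j)}\rangle,
\]
where
\[
\balpha^{(j)}=(\bI-\bA_k)^T\bSigma_k^{-1}\be_{\rho_k(t_j)},\qquad
\bbeta^{(j)}=(\bI-\bA_k)^T\bSigma_k^{-1}\bu_k^{(j)}+\bd_k^{(j)}.
\]
Since $\Pi_{W^\perp}$ is linear, the projected row space is spanned by the vectors $\Pi_{W^\perp}\bH_k^T\balpha^{(j)}$ and $\Pi_{W^\perp}\bH_k^T\bbeta^{(j)}$. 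Moreover $\Pi_{W^\perp}\bH_k^T\bgamma=\sum_s\gamma_s[\bar{\bH}_k]_{s,:}^T$ for any coefficient vector $\bgamma$. The plan therefore has two parts: (a) show that the coordinates of $\balpha^{(j)},\bbeta^{(j)}$ outside $S$ contribute nothing after projection, and (b) show that their restrictions to $S$ agree with the reduced expressions built from $[\bA_k]_{S,S}$ and $[\bSigma_k]_{S,S}$.

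For step (a), split the index set $\rho_k(\cV_k)$ into three parts: indices in blocks of $\cI$, indices in $S$, and the rest $T$. If $s$ lies in a block $C\in\cI$, then by Proposition~\ref{prop: RQ} and ancestor-closedness of $\cI$, the row $[\bH_k]_{s,:}$ lies in $W$, so $[\bar{\bH}_k]_{s,:}=0$. For indices in $T$, I would argue, exactly as in the proof of Lemma~\ref{lem: low rank Qk}, that the coordinates of $\balpha^{(j)}$ and $\bbeta^{(j)}$ vanish there. The reason is that their supports lie in $\rho_k$ of nodes in blocks of $Anc_k^\rblk(C_k(j))$, using Lemma~\ref{lem:support-du} and the block upper-triangularity of $\bI-\bA_k$ and $\bSigma_k^{-1}$ with respect to the block order. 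Hence only the $S$-coordinates survive.

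Step (b) is the main obstacle. We need
\[
[\balpha^{(j)}]_S=(\bI-[\bA_k]_{S,S})^T[\bSigma_k]_{S,S}^{-1}[\be_{\rho_k(t_j)}]_S,
\]
and the analogous identity for $\bbeta^{(j)}$, whereas the true quantities involve the full matrices. First order the coordinates so that blocks in $S$ precede blocks in $\cI$ in the block order (descendants before ancestors). Two structural facts hold. First, $\bSigma_k$ is block diagonal across distinct blocks, since bidirected edges join only mutually reachable nodes. Second, $\bI-\bA_k$ is block upper triangular. With these, $\bSigma_k^{-1}$ is block diagonal, so $[\bSigma_k^{-1}]_{S,S}=[\bSigma_k]_{S,S}^{-1}$ and $[\bSigma_k^{-1}]_{S,S^c}=0$.

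Now compute $[(\bI-\bA_k)^T\bv]_S=\sum_m[\bI-\bA_k]_{m,S}v_m$ for $\bv=\bSigma_k^{-1}\be_{\rho_k(t_j)}$ (resp. $\bSigma_k^{-1}\bu_k^{(j)}$). The vector $\bv$ is supported in $S$, because $\be_{\rho_k(t_j)}$ and $\bu_k^{(j)}$ are supported in the block $C_k(j)$, which lies in $S$. Hence the sum runs only over $m\in S$, giving exactly $(\bI-[\bA_k]_{S,S})^T$ applied to $[\bv]_S$. For $\bd_k^{(j)}$, the restriction $[\bd_k^{(j)}]_S$ is simply carried along. Its $\cI$-coordinates are killed by the argument in (a), and its remaining coordinates lie in the parents of $t_j$, which belong to $Anc_k^\rblk(C_k(j))$, so they fall in $S\cup\cI$.

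The delicate point is justifying block diagonality of $\bSigma_k$ and confirming that no contribution enters from $T$ via $\bSigma_k^{-1}$ mixing. I would settle this first, from the definition of blocks as classes closed under bidirected adjacency. With (a) and (b) in hand, the two generators become exactly the displayed vectors, and taking spans over $j$ completes the proof.
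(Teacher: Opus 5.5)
Your proposal is correct and follows essentially the same route as the paper. Both use Lemma~\ref{lem: rank 2} to reduce to two generators per intervention. Both then kill the rows indexed by blocks of $\cI$ via Proposition~\ref{prop: RQ} and ancestor-closedness, discard coordinates outside the ancestor blocks via Lemma~\ref{lem:support-du}, and restrict the coefficient vectors to $S$ using block-diagonality of $\bSigma_k$ and block upper-triangularity of $\bA_k$. Your justification of the restriction step is more explicit than the paper's: $\bSigma_k^{-1}\be_{\rho_k(t_j)}$ and $\bSigma_k^{-1}\bu_k^{(j)}$ are supported in $\rho_k(C_k(j))\subseteq S$, so only $[\bA_k]_{S,S}$ and $[\bSigma_k]_{S,S}$ enter. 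One wording point: in step (a), mere block triangularity of $\bSigma_k^{-1}$ would not keep the support inside ancestor blocks; block-diagonality is what you need, and you do establish it later.
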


\begin{proof}
For each intervention index $j$, define
\[
\balpha_k^{(j)}
\;\triangleq\;
(\bI-\bA_k)^T\bSigma_k^{-1}\be_{\rho_k(t_j)},
\qquad
\bbeta_k^{(j)}
\;\triangleq\;
(\bI-\bA_k)^T\bSigma_k^{-1}\bu_k^{(j)}+\bd_k^{(j)}.
\]
By Lemma~\ref{lem: rank 2},
\[
\row(\bTheta_k^{(j)}-\bTheta_k^{(0)})
\subseteq
\langle \bH_k^T\balpha_k^{(j)},\ \bH_k^T\bbeta_k^{(j)} \rangle .
\]
Hence
\[
\row\Big(\{\bTheta_k^{(j)}-\bTheta_k^{(0)}\}_{j\in\{j_1,\dots,j_\ell\}}\Big)
\subseteq
\Big\langle
\bH_k^T\balpha_k^{(j)},\ \bH_k^T\bbeta_k^{(j)}
\ \Big|\ 
j\in\{j_1,\dots,j_\ell\}
\Big\rangle .
\]
Since orthogonal projection is linear, it suffices to analyze
\[
\Pi_{W^\perp}(\bH_k^T\balpha_k^{(j)})
\qquad\text{and}\qquad
\Pi_{W^\perp}(\bH_k^T\bbeta_k^{(j)})
\]
for each $j\in\{j_1,\dots,j_\ell\}$.

Now fix such a $j$, and write
\[
T_j
\;\triangleq\;
\rho_k\!\Big(
\bigcup_{C\in Anc_k^\rblk(C_k(j))} C
\Big).
\]
By the support characterization of $\bu_k^{(j)}$ and $\bd_k^{(j)}$ in Lemma \ref{lem:support-du}, together with the block upper-triangular structure of $\bA_k$ and the block-diagonal structure of $\bSigma_k$, we have
\[
\supp(\be_{\rho_k(t_j)})\subseteq T_j,\qquad
\supp(\bu_k^{(j)})\subseteq T_j,\qquad
\supp(\bd_k^{(j)})\subseteq T_j,
\]
and therefore
\[
\supp(\balpha_k^{(j)})\subseteq T_j,
\qquad
\supp(\bbeta_k^{(j)})\subseteq T_j.
\]
Consequently,
\[
\bH_k^T\balpha_k^{(j)}
=
[\bH_k]_{T_j,:}^T[\balpha_k^{(j)}]_{T_j},
\qquad
\bH_k^T\bbeta_k^{(j)}
=
[\bH_k]_{T_j,:}^T[\bbeta_k^{(j)}]_{T_j}.
\]
Thus, rows of $\bH_k$ indexed outside $T_j$ do not contribute at all, regardless of whether we project onto $W^\perp$.

Next, because $\cI$ is closed under block ancestors and
\[
W=\Big\langle [\bQ_k]_{\rho_k(i),:}\ \Big|\ i\in \bigcup_{C\in\cI} C \Big\rangle,
\]
the block-RQ construction implies that every row of $\bH_k$ indexed by
\[
\rho_k\!\Big(\bigcup_{C\in\cI} C\Big)
\]
already lies in $W$. Hence these rows vanish after projection onto $W^\perp$. On the other hand, for rows indexed by $T_j\setminus \rho_k(\cup_{C\in\cI}C)$, the projection may change the row vectors, but only through their projections onto $W^\perp$. Therefore,
\[
\Pi_{W^\perp}(\bH_k^T\balpha_k^{(j)})
=
[\bar{\bH}_k]_{S,:}^T[\balpha_k^{(j)}]_S,
\qquad
\Pi_{W^\perp}(\bH_k^T\bbeta_k^{(j)})
=
[\bar{\bH}_k]_{S,:}^T[\bbeta_k^{(j)}]_S.
\]
In words, the orthogonal projection acts only on the $\bH_k$-side of the generators from Lemma~\ref{lem: rank 2}, while the coefficient vectors remain unchanged. Moreover, rows indexed by blocks in $\cI$ vanish after projection because they already lie in the previously extracted subspace $W$, whereas rows outside the relevant block ancestors never contribute in the first place by the support characterization above.

It remains to identify the restricted coefficient vectors on $S$. We claim that
\[
[\balpha_k^{(j)}]_S
=
(\bI-[\bA_k]_{S,S})^T[\bSigma_k]_{S,S}^{-1}[\be_{\rho_k(t_j)}]_S,
\]
and
\[
[\bbeta_k^{(j)}]_S
=
(\bI-[\bA_k]_{S,S})^T[\bSigma_k]_{S,S}^{-1}[\bu_k^{(j)}]_S
+
[\bd_k^{(j)}]_S.
\]

Indeed, by construction of $S$, the vectors $\be_{\rho_k(t_j)}$, $\bu_k^{(j)}$, and $\bd_k^{(j)}$ have no support outside the union of blocks in $Anc_k^\rblk(C_k(j))$, and their coordinates on blocks in $\cI$ do not contribute after projection onto $W^\perp$. Moreover, $\bSigma_k$ is block diagonal, so applying $\bSigma_k^{-1}$ does not mix coordinates across distinct blocks. In addition, $\bA_k$ is block upper triangular with respect to the block topological order, so for rows indexed by $S$, the matrix $(\bI-\bA_k)^T$ only collects contributions from the same block or its block ancestors. Since $S$ already contains precisely those relevant ancestor blocks outside $\cI$, no coordinates from $S^c$ contribute to the restriction on $S$. Therefore, when evaluating the $S$-coordinates of $\balpha_k^{(j)}$ and $\bbeta_k^{(j)}$, one may replace $(\bI-\bA_k)^T\bSigma_k^{-1}$ by its restriction to $S$, which yields the two identities above.

Substituting these identities into the previous display yields
\[
\Pi_{W^\perp}\!\left(
\row\Big(\{\bTheta_k^{(j)}-\bTheta_k^{(0)}\}_{j\in\{j_1,\dots,j_\ell\}}\Big)
\right)
\]
contained in
\[
\Big\langle
[\bar{\bH}_k]_{S,:}^{T}
(\bI-[\bA_k]_{S,S})^{T}
[\bSigma_k]_{S,S}^{-1}
[\be_{\rho_k(t_j)}]_{S},
\ 
[\bar{\bH}_k]_{S,:}^{T}
\Big(
(\bI-[\bA_k]_{S,S})^{T}
[\bSigma_k]_{S,S}^{-1}
[\bu_k^{(j)}]_{S}
+
[\bd_k^{(j)}]_{S}
\Big)
\ \Big|\ 
j\in\{j_1,\dots,j_\ell\}
\Big\rangle .
\]
This proves the claim.
\end{proof}

\begin{lemma}
\label{lem:Rk^-1}
Let $\bR_k$ be the matrix in the block-RQ decomposition of $\bH_k$, where the blocks are ordered according to a block topological order. Suppose that
\[
[\bR_k]_{C_{k,i},C_{k,j}}=0
\qquad
\text{whenever } C_{k,j}\notin Anc_k^\rblk(C_{k,i}).
\]
Then the inverse matrix $\bR_k^{-1}$ satisfies the same block support property:
\[
[\bR_k^{-1}]_{C_{k,i},C_{k,j}}=0
\qquad
\text{whenever } C_{k,j}\notin Anc_k^\rblk(C_{k,i}).
\]
\end{lemma}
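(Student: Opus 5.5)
The plan is to use the standard fact that the inverse of a block upper-triangular matrix is block upper-triangular, refined to the finer support pattern given by the block ancestral relation. First, we record that the block relation $\preceq_k^\rblk$ is a partial order on $\cC_k$. Reflexivity holds by definition. Transitivity holds because $\prec_k$ is transitive, so a chain $u\prec_k v$, $v'\prec_k w$ with $v,v'$ in the same block can be rerouted through mutual reachability. Antisymmetry holds because blocks are maximal mutually reachable classes. Let $\bN$ be the block support pattern, i.e. the set of pairs $(C_{k,i},C_{k,j})$ with $C_{k,j}\in Anc_k^\rblk(C_{k,i})$. Transitivity says exactly that this pattern is closed under block-matrix multiplication: if $[\bM]_{C,C''}$ and $[\bM']_{C'',C'}$ are nonzero only on the pattern, then $[\bM\bM']_{C,C'}=\sum_{C''}[\bM]_{C,C''}[\bM']_{C'',C'}$ can be nonzero only when $C\preceq_k^\rblk C''\preceq_k^\rblk C'$, hence only when $C'\in Anc_k^\rblk(C)$. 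So the matrices supported on the pattern form an algebra $\mathcal{S}$ containing $\bI$.

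Next, we write $\bR_k=\bD+\bN_0$. Here $\bD$ is the block-diagonal part, with diagonal blocks $[\bR_k]_{C,C}$. The remainder $\bN_0$ is supported on pairs with $C\prec_k^\rblk C'$ strictly. We need the diagonal blocks to be invertible. $\bR_k$ is invertible because $\bH_k$ has full row rank $p_k$ and $\bH_k=\bR_k\bQ_k$ with $\bQ_k$ having $p_k$ rows. Ordering blocks by a topological order of $\preceq_k^\rblk$ makes $\bR_k$ genuinely block triangular, so $\det\bR_k=\prod_C\det[\bR_k]_{C,C}\neq0$. Then $\bR_k=\bD(\bI+\bD^{-1}\bN_0)$, and $\bD^{-1}\bN_0\in\mathcal{S}$ is supported on strict ancestor pairs.

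The key step is nilpotency. Any product of $m$ strictly supported matrices is supported on chains $C_0\prec_k^\rblk C_1\prec_k^\rblk\cdots\prec_k^\rblk C_m$. Since the order is acyclic on finitely many blocks, such chains cannot exist once $m\ge|\cC_k|$. Hence $(\bD^{-1}\bN_0)^{m}=0$, and
\[
\bR_k^{-1}=\Big(\sum_{s=0}^{|\cC_k|-1}(-\bD^{-1}\bN_0)^s\Big)\bD^{-1}.
\]
This is a finite sum of products of elements of $\mathcal{S}$, so it lies in $\mathcal{S}$, which is the claim.

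The only delicate points are verifying transitivity and antisymmetry of $\prec_k^\rblk$ from the block definition, and the invertibility of the diagonal blocks; the latter I would settle via the topological-order determinant argument. An alternative that avoids $\bD^{-1}$ is Cayley--Hamilton: $\bR_k^{-1}$ is a polynomial in $\bR_k$, and since $\mathcal{S}$ is a unital algebra, $\bR_k^{-1}\in\mathcal{S}$ immediately. I would likely present this shorter argument, using only closure of $\mathcal{S}$ under products (transitivity) and the invertibility of $\bR_k$.
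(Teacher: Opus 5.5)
Your primary argument is the paper's own proof: split $\bR_k=\bD_k+\bN_k$ into block-diagonal and strictly off-diagonal parts, write $\bR_k=\bD_k(\bI+\bD_k^{-1}\bN_k)$, use nilpotency to get a finite Neumann series, and conclude from closure of the support pattern under products. You are more careful than the paper about why the diagonal blocks are invertible. Your Cayley--Hamilton variant is a genuinely shorter route. It needs only that the admissible pattern forms a unital algebra (i.e.\ the relation is reflexive and transitive) and that $\det\bR_k\neq 0$. It makes no appeal to acyclicity, nilpotency, or invertible diagonal blocks.

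The gap is the step you yourself flagged: transitivity of $\preceq_k^{\rblk}$. Every version of the argument needs it, including yours and the paper's, which just says ``by transitivity of the block ancestor relation''. Your rerouting justification fails because two nodes of the same block may be mutually reachable only through a bidirected edge, and a bidirected edge carries no directed ancestry.

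Concretely, let $h\in M_k$ and take global edges $h\to v$, $h\to v'$, $v\to u$, $w\to v'$, with $O_k=\{u,v,v',w\}$. The client graph is $v\to u$, $w\to v'$, $v\leftrightarrow v'$, and the blocks are $C_u=\{u\}$, $C_v=\{v,v'\}$, $C_w=\{w\}$. Then $C_u\prec_k^{\rblk}C_v\prec_k^{\rblk}C_w$, yet $w$ is not an ancestor of $u$, so $C_u\not\prec_k^{\rblk}C_w$ under the literal definition. For any $\bR_k$ with the hypothesized support, $[\bR_k^{-1}]_{C_u,C_w}=[\bR_k]_{C_u,C_u}^{-1}[\bR_k]_{C_u,C_v}[\bR_k]_{C_v,C_v}^{-1}[\bR_k]_{C_v,C_w}[\bR_k]_{C_w,C_w}^{-1}$, which is generically nonzero. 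The $\bR_k$ produced by Algorithm~\ref{alg:block_rq} is generically of exactly this form. So under the literal definition the statement itself is false, and no argument can close this step.

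The fix is definitional: take $\prec_k^{\rblk}$ to be the transitive closure of the stated relation. This is also what the update rule of Algorithm~\ref{alg:causal order client} outputs, so it is the reading Theorem~\ref{thm:client-main} needs anyway. Transitivity is then automatic. Antisymmetry still follows from your mutual-reachability argument, since the closure is contained in block-level reachability. With that reading, either of your two arguments is complete.
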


\begin{proof}
Write
\[
\bR_k=\bD_k+\bN_k,
\]
where $\bD_k$ is the block-diagonal part of $\bR_k$, and $\bN_k:=\bR_k-\bD_k$ is the off-diagonal part. By assumption, $[\bN_k]_{C_{k,i},C_{k,j}}$ can be nonzero only if $C_{k,j}\in anc_k^\rblk(C_{k,i})$.

Since $\bR_k$ is invertible, each diagonal block of $\bR_k$ is invertible, and hence so is $\bD_k$. Therefore,
\[
\bR_k=\bD_k(\bI+\bD_k^{-1}\bN_k).
\]
Let
\[
\bM_k:=\bD_k^{-1}\bN_k.
\]
Then $\bM_k$ has zero diagonal blocks and satisfies
\[
[\bM_k]_{C_{k,i},C_{k,j}}=0
\qquad
\text{whenever } C_{k,j}\notin anc_k^\rblk(C_{k,i})\}.
\]
Since the blocks are arranged in a block topological order, $\bM_k$ is strictly block triangular, and hence nilpotent. In particular, if $m_k$ is the number of blocks, then
\[
\bM_k^{m_k}=0.
\]
It follows that
\[
(\bI+\bM_k)^{-1}
=
\bI-\bM_k+\bM_k^2-\cdots+(-1)^{m_k-1}\bM_k^{m_k-1},
\]
and thus
\[
\bR_k^{-1}
=
(\bI+\bM_k)^{-1}\bD_k^{-1}.
\]

It remains to verify that every term in the above expression has the same block support pattern. First, $\bD_k^{-1}$ is block diagonal, so its $(i,j)$ block can be nonzero only when $i=j$, equivalently when $C_{k,j}\in Anc_k^\rblk(C_{k,i})$. Next, $\bM_k$ satisfies the same ancestor-restricted support property as above. By transitivity of the block ancestor relation, the product of two matrices with this support property again has the same support property. Hence each power $\bM_k^s$ also satisfies
\[
[\bM_k^s]_{C_{k,i},C_{k,j}}=0
\qquad
\text{whenever } C_{k,j}\notin Anc_k^\rblk(C_{k,i}).
\]
Therefore the finite sum $(\bI+\bM_k)^{-1}$ satisfies the same property, and so does its product with $\bD_k^{-1}$. Consequently,
\[
[\bR_k^{-1}]_{C_{k,i},C_{k,j}}=0
\qquad
\text{whenever } C_{k,j}\notin Anc_k^\rblk(C_{k,i}),
\]
as claimed.
\end{proof}

\section{Additional finite-sample details}
\label{app: expriments}

This appendix collects the implementation details omitted from Section \ref{sec:experiments}, including the finite-sample counterparts of the population algorithms, the full simulation setup, the exact evaluation metrics, and the hyperparameter tuning protocol.

\subsection{Finite-sample algorithms}

Algorithm \ref{alg:block ancestor FS} is the finite-sample counterpart of Algorithm \ref{alg:block ancestor}. It replaces exact row-space dimension equalities by a singular-value ratio criterion controlled by the threshold $\tau$.

\begin{algorithm}[t] 
\caption{ID-Block-Ancestor-FS} 
\label{alg:block ancestor FS} \begin{algorithmic}[1] 
\Require $\hat{C} \subseteq \cJ_k$, $\{\hat{\Theta}_k^{(j)}\}_{j \in \{0\}\cup \hat{C}}$,  $\mathcal{I}$, $\{\hat{\bq}_{k,i}\}_{i\in\cI}$ 
\Ensure $\{\hat{\bq}_{k,i}\}_{i\in\hat{C}}$, $\cA$ 
\State Initialize $\cA \gets \mathcal{I}$
\State Form \[ \Tilde{\bM}_k(\hat{C}) \gets \begin{bmatrix} \hat{\bTheta}_k^{(j_1)}-\hat{\bTheta}_k^{(0)}\\ \hat{\bTheta}_k^{(j_2)}-\hat{\bTheta}_k^{(0)}\\ \vdots\\ \hat{\bTheta}_k^{(j_{|\hat{C}|})}-\hat{\bTheta}_k^{(0)} \end{bmatrix}, \qquad \hat{C}=\{j_1,\dots,j_{|\hat{C}|}\} \] 

\For{each block $B \in \mathcal{I}$}
\State Form $\bW_{-B}$ by stacking the row vectors $\{\hat{\bq}_{k,i}\}_{i\in\cI\backslash B}$.
\State Compute \[ \bM_{-B}(\hat{C}) \gets \Tilde{\bM}_k(\hat{C}) (\bI - \bW_{-B}^T(\bW_{-B}\bW_{-B}^T)^{-1}\bW_{-B}).\]
\State Compute\[ \rho(\bM_{-B}(\hat{C})) = \frac{\sum_{i=1}^{|\hat{C}|} \sigma_i(\bM_{-B}(\hat{C}))}{\sum_i\sigma_i(\bM_{-B}(\hat{C}))},\]
where $\sigma_i(\cdot)$ denotes the $i$-th largest singular value.
\If{$ \rho(\bM_{-B}(\hat{C}))> 1-\tau$} 
\State Remove $B$ from $\cA$ 
\EndIf 
\EndFor 
\State Form $\bW_{\mathrm{anc}}$ by stacking the row vectors $\{\hat{\bq}_{k,i}\}_{i\in\cA}$.
\State Compute \[\bM \gets \Tilde{\bM}_k(\hat{C}) (\bI - \bW_{\mathrm{anc}}^T(\bW_{\mathrm{anc}}\bW_{\mathrm{anc}}^T)^{-1}\bW_{\mathrm{anc}}). \]
\State Set $\{\hat{\bq}_{k,i}\}_{i\in\hat{C}}$ to be the right singular vectors of $\bM$ corresponding to its top $|\hat{C}|$ singular values.
\State \Return $\{\hat{\bq}_{k,i}\}_{i\in\hat{C}}$, $\cA$ 
\end{algorithmic} \end{algorithm}

Algorithm \ref{alg:causal order client FS} is the finite-sample counterpart of Algorithm \ref{alg:causal order client}. It searches for the smallest subset whose projected perturbation matrix is approximately rank-$r$ under the same thresholding rule.

\begin{algorithm}[t] 
\caption{ID-Block-Causal-Order-FS} \label{alg:causal order client FS} \begin{algorithmic}[1] 
\Require $\{\hat{\Theta}_k^{(j)}\}_{j \in \{0\}\cup \cJ_k}$, $p_k$ 
\Ensure  $\hat{\mathcal C}_k$, $\hat{\prec}_k^\rblk$, $\hat{\bQ}_k$
\State Initialize $\mathcal{I}_0 \gets \emptyset$, $R_0 \gets \cJ_k$, $\widehat Q_k \gets \mathbf{0}_{p_k\times n_k}$, $t \gets 1$ 
\While{$R_{t-1}\neq \emptyset$} 
\State $\mathrm{flag} \gets \mathrm{True}$, $r\gets 1$.
\While{$\mathrm{flag}$}
\State Form $\bW_t \in\bbR^{|\cI_{t-1}|\times n_k}$ by stacking the row vectors $\{\hat{\bq}_{k,i}\}_{i\in\cI_{t-1}}$.
\State Define $\cR = \{C| C\subseteq R_{t-1},|C| = r\}$. 
\For{each subset $\Tilde{C} \subseteq \cR$ with $|\widehat C|=r$}
\State Form \[ \Tilde{\bM}_k(\Tilde{C}) \gets \begin{bmatrix} \hat{\bTheta}_k^{(j_1)}-\hat{\bTheta}_k^{(0)}\\ \hat{\bTheta}_k^{(j_2)}-\hat{\bTheta}_k^{(0)}\\ \vdots\\ \hat{\bTheta}_k^{(j_r)}-\hat{\bTheta}_k^{(0)} \end{bmatrix}, \qquad \Tilde{C}=\{j_1,\dots,j_r\} \] 
\State Compute \[ \bM_t(\Tilde{C}) \gets \Tilde{\bM}_k(\Tilde{C}) (\bI - \bW_t^T(\bW_t\bW_t^T)^{-1}\bW_t). \] 
\State Compute \[ \rho_r(\bM_t(\Tilde{C})) = \frac{\sum_{i=1}^r \sigma_i(\bM_t(\Tilde{C}))} {\sum_i \sigma_i(\bM_t(\Tilde{C}))}, \] where $\sigma_i(\cdot)$ denotes the $i$-th largest singular value.
\EndFor 
\If{there exists at least one subset $\Tilde{C}$ with $\rho_r(\bM_t(\Tilde{C})) \ge 1-\tau$}
\State Among all such subsets of size $r$, choose $\hat{C} = \Tilde{C}$ which maximizes $\rho_r(\bM_t(\hat{C}))$ 
\State Set $\mathrm{flag} \gets \mathrm{False}$
\Else 
\State Set $r \gets r+1$ 
\EndIf 
\EndWhile
\State $(\{\hat{\bq}_{k,i}\}_{i\in\hat{C}},\cA) \gets$ ID-Block-Ancestor-FS$(\hat{C}, \{\hat{\Theta}_k^{(j)}\}_{j \in \{0\}\cup \hat{C}}$,  $\cI_{t-1}$, $\{\hat{\bq}_{k,i}\}_{i\in\cI_{t-1}})$ 
\State Add $\hat{C} \hat{\prec}_k^\rblk C'$ for any $\Tilde{C} \hat{\preceq}_k^\rblk C', \Tilde{C}\in\cA$.
\State Update $\hat{\bQ}_k \gets [\Tilde{\bQ}_{k,\hat{C}};\hat{\bQ}_k]$ where $\Tilde{\bQ}_{k,\hat{C}} \in \bbR^{|\hat{C}|\times n_k}$ is the matrix formed by stacking the row vectors $\{\hat{\bq}_{k,i}\}_{i\in \hat{C}}$.
\State Update $\mathcal{I}_t \gets \{\hat{C}\}\cup \mathcal{I}_{t-1}$, $R_t \gets R_{t-1}\setminus \hat{C}$, $t\gets t+1$.
\EndWhile 
\State Set $\widehat{\mathcal C}_k \gets \mathcal{I}_{t-1}$ 
\State \Return $\hat{\mathcal C}_k$, $\hat{\prec}_k^\rblk$, $\hat{\bQ}_k$.
\end{algorithmic} 
\end{algorithm}

\subsection{Additional simulation details}

\paragraph{Data generation.}
We consider three simulation settings with different global causal structures and client-specific missing patterns as follows. Each setting is specified by a global causal coefficient matrix $\bA\in\mathbb{R}^{p\times p}$, a collection of used latent node sets $\{O_k\}_{k=1}^K$ with $O_k\subseteq [p]$, and client-specific observation dimensions $\{n_k\}_{k=1}^K$.

\paragraph{Setting A.}
Here $p=5$ and
\[
\bA=
\begin{pmatrix}
0 & 1 & 0.7 & 0 & 0\\
0 & 0 & 0 & -1 & 0\\
0 & 0 & 0 & 1.3 & -0.9\\
0 & 0 & 0 & 0 & 0.6\\
0 & 0 & 0 & 0 & 0
\end{pmatrix}.
\]
We take $K=5$ clients, with
$O_1=\{1,3,4,5\},\quad
O_2=\{1,2,4,5\},\quad
O_3=\{1,2,3,4\},\quad
O_4=\{1,2,3\},\quad
O_5=\{2,3,5\}$,
and observation dimensions
$(n_1,n_2,n_3,n_4,n_5)=(8,6,7,6,5)$.

\paragraph{Setting B.}
Here $p=8$ and
\[
\bA=
\begin{pmatrix}
0 & -0.8 & 0.1 & 0 & 0 & 0 & 0 & 0\\
0 & 0 & 0.6 & 1.2 & 0 & 0 & 0 & 0\\
0 & 0 & 0 & 0 & 0 & 0 & 0 & 1\\
0 & 0 & 0 & 0 & 0 & 0 & -0.9 & 0\\
0 & 0 & 0 & 0 & 0 & 1.1 & 0 & 0\\
0 & 0 & 0 & 0 & 0 & 0 & 0.7 & 0\\
0 & 0 & 0 & 0 & 0 & 0 & 0 & 0\\
0 & 0 & 0 & 0 & 0 & 0 & 0 & 0
\end{pmatrix}.
\]
We take $K=4$ clients, with 
$O_1=\{2,3,4,7,8\},\quad
O_2=\{1,2,3,5,6,7\},\quad
O_3=\{1,2,4,5,6,8\},\quad
O_4=\{1,3,4,5,7,8\}$,
and observation dimensions $(n_1,n_2,n_3,n_4)=(9,10,8,9)$.

\paragraph{Setting C.}
Here $p=10$ and
\[
\bA=
\begin{pmatrix}
0 & 0 & -0.9 & 0 & 0 & 0 & 0 & 0 & 0 & 0\\
0 & 0 & 1 & 0 & 0 & 0.7 & 0 & 0 & 0 & 0\\
0 & 0 & 0 & 1.2 & 0 & 0 & 0 & 0 & 0 & -0.8\\
0 & 0 & 0 & 0 & 0 & 0 & 0 & 0 & 0 & -1\\
0 & 0 & 0 & 0 & 0 & 0 & 0 & 0 & 0 & 1.3\\
0 & 0 & 0 & 0 & 0 & 0 & 1.1 & 0.9 & 0 & 0\\
0 & 0 & 0 & 0 & 0 & 0 & 0 & -0.5 & 0 & 0\\
0 & 0 & 0 & 0 & 0 & 0 & 0 & 0 & 0 & 0\\
0 & 0 & 0 & 0 & 0 & 0 & 0 & 0 & 0 & 0\\
0 & 0 & 0 & 0 & 0 & 0 & 0 & 0 & 0 & 0
\end{pmatrix}.
\]
We take $K=5$ clients, with 
$O_1=\{1,2,3,4,5,6,7,9\},\quad
O_2=\{3,5,6,7,8,10\},\quad
O_3=\{1,2,4,7,8,9,10\},\quad
O_4=\{1,3,4,5,6,7,10\},\quad
O_5=\{1,2,3,6,8,9\}$,
and observation dimensions
$(n_1,n_2,n_3,n_4,n_5)=(12,10,9,11,8)$.

We consider two types of noise distributions, namely Gaussian noise and Laplace noise. For each network setting and each noise setting, the intervention magnitudes on the intervened entries of $\bA$ and $\bSigma$ are independently drawn from the uniform distribution on $[2,3]$. By construction, the baseline nonzero entries of $\bA$ have magnitudes strictly smaller than the intervention range, which helps separate the observational structural coefficients from the intervention effects. For each client $k\in[K]$, we independently generate a column full-rank matrix $\bG_k$ with entries sampled from the uniform distribution on $[-1,1]$.

\paragraph{Sample sizes.}
We evaluate the method under finite sample sizes $n$ ranging from $10^3$ to $10^6$, and also include the population case $n=\infty$, where the estimated precision matrices are replaced by the corresponding population precision matrices.

\subsection{Evaluation metrics}

For all metrics, for ease of exposition, we first align the row ordering of $\hat{\bQ}_k$ with that of $\bQ_k$ according to the permutation induced by $\bP_\sigma$. This alignment is used only for presentation and does not affect the evaluation result.

\paragraph{Metric $E_1$: global ancestor recovery.}
We compare the true partial order $\prec$ and the estimated partial order $\hat{\prec}$ on the node set $\cV$, and report their F1-score:
\[
E_1 \triangleq \mathrm{F1}(\prec,\hat{\prec}).
\]

\paragraph{Metric $E_2$: block-level subspace error.}
For each client $k$ and each true block $C \in \mathcal C_k$, let $\hat C=\psi^{-1}(C)$ be the corresponding estimated block. We compare the row spaces of $[\bQ_k]_{\rho_k(C),:}$ and $[\hat{\bQ}_k]_{\rho_k(\psi(\hat{C})),:}$ through their projection matrices
\[
\bP_{k,C}=[\bQ_k]_{\rho_k(C),:}[\bQ_k]_{\rho_k(C),:}^T, 
\qquad 
\hat{\bP}_{k,C}=[\hat{\bQ}_k]_{\rho_k(\psi(\hat{C})),:}[\hat{\bQ}_k]_{\rho_k(\psi(\hat{C})),:}^T.
\]
We define
\[
E_{k,C} \triangleq \frac{\|\hat{\bP}_{k,C}-\bP_{k,C}\|_F^2}{\sqrt{2}},
\qquad
E_2 \triangleq\frac{1}{K}\sum_{k=1}^K \frac{1}{|\mathcal C_k|}\sum_{C\in\mathcal C_k} E_{k,C}.
\]

\paragraph{Metric $E_3$: support leakage error.}
Since $\hat{Z}_k=\hat{\bQ}_k X_k=\hat{\bQ}_k \bG_k Z_k$, Corollary \ref{cor:zk} implies that $[\hat{\bQ}_k\bG_k]_{\rho_k(i),\rho_k(j)}$ can be nonzero only if the block containing $j$ is a block-level ancestor of the block containing $i$. Let
\[
\hat{\bN}_k \triangleq \hat{\bQ}_k\bG_k,
\qquad
supp(\hat{\bN}_k)\triangleq\{(i,j):|[\hat{\bN}_k]_{\rho_k(i),\rho_k(j)}|>\epsilon\},
\]
where $\epsilon=10^{-2}$. Define the violation set
\[
\cT_k \triangleq \{(i,j): C_k(i)\not\preceq_k^{\mathrm{blk}} C_k(j), \ i,j\in O_k\}.
\]
Then the leakage error for client $k$ is
\[
E_{3,k}\triangleq\frac{|supp(\hat{\bN}_k)\cap\cT_k|}{|\cT_k|},
\qquad
E_3\triangleq\frac{1}{K}\sum_{k=1}^K E_{3,k}.
\]

\paragraph{Interpretation.}
The three metrics quantify complementary aspects of recovery: $E_1$ measures the accuracy of the global ancestral relation, $E_2$ measures the recovery of client-specific block-associated subspaces, and $E_3$ measures whether the estimated latent variables respect the block-level ancestral support implied by Corollary \ref{cor:zk}. All three metrics take values in $[0,1]$, where larger $E_1$ indicates better performance, while smaller $E_2$ and $E_3$ indicate more accurate recovery.

\subsection{Hyperparameter tuning}

The hyperparameter $\tau$ is chosen according to the sample size $n$. For each $n$, we conduct 10 additional independent runs of data generation and estimation, and search over
\[
\tau \in \{0.01,0.02,\dots,0.10\}.
\]
For each candidate value, we compute
\[
Score = E_1 - E_2 - E_3,
\]
and select the $\tau$ that achieves the highest average score over these runs. The resulting value is then used in the final evaluation for that sample size. When such simulation-based calibration is unavailable, we recommend $\tau=0.05$ as a robust default.

\subsection{Additional experimental results}

For the six scenarios induced by three graph settings and two noise settings, we plot the three evaluation metrics as functions of the sample size $n$ in Figure \ref{fig:setA_gau} and Figures \ref{fig:setA_lap}--\ref{fig:setC_lap}. Across all scenarios, recovery improves steadily as $n$ increases. In the population case $n=\infty$, we obtain $E_1=1$ and $E_2=E_3=0$, which exactly matches the population-level guarantees in Theorems \ref{thm:client-main} and \ref{thm:global} together with Corollary \ref{cor:zk}.

\begin{figure}[h]
    \centering
    \begin{subfigure}[t]{0.32\linewidth}
        \centering
        \includegraphics[width=\linewidth]{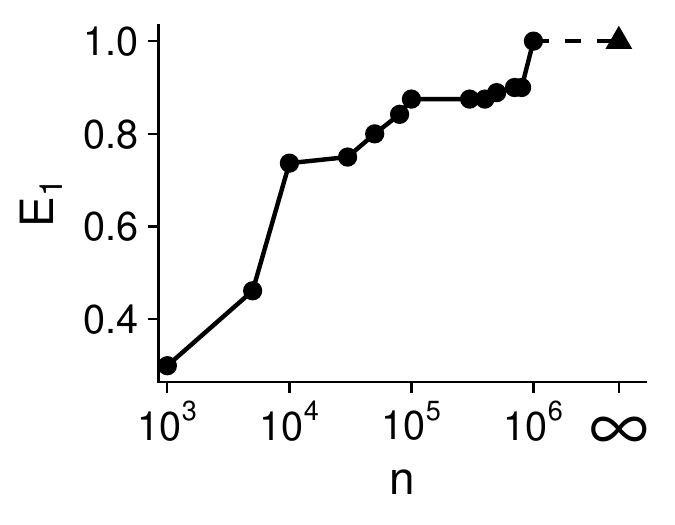}
        \caption{Global ancestral F1.}

    \end{subfigure}
    \hfill
    \begin{subfigure}[t]{0.32\linewidth}
        \centering
        \includegraphics[width=\linewidth]{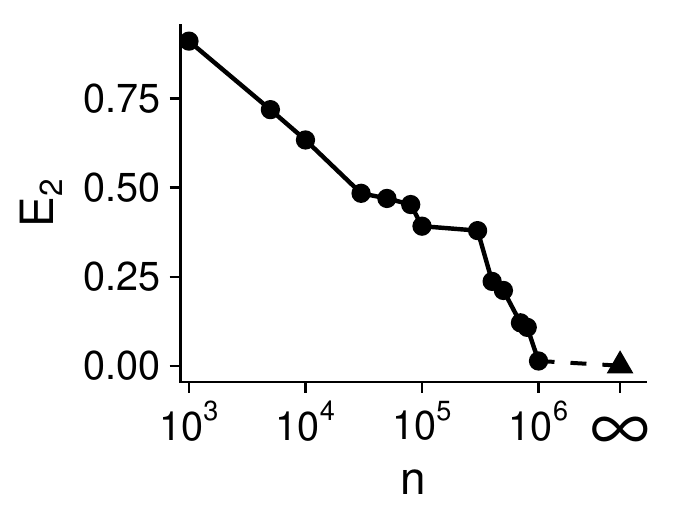}
        \caption{Subspace error of $\{\hat{\bQ}_k\}_{k=1}^K$.}

    \end{subfigure}
    \hfill
    \begin{subfigure}[t]{0.32\linewidth}
        \centering
        \includegraphics[width=\linewidth]{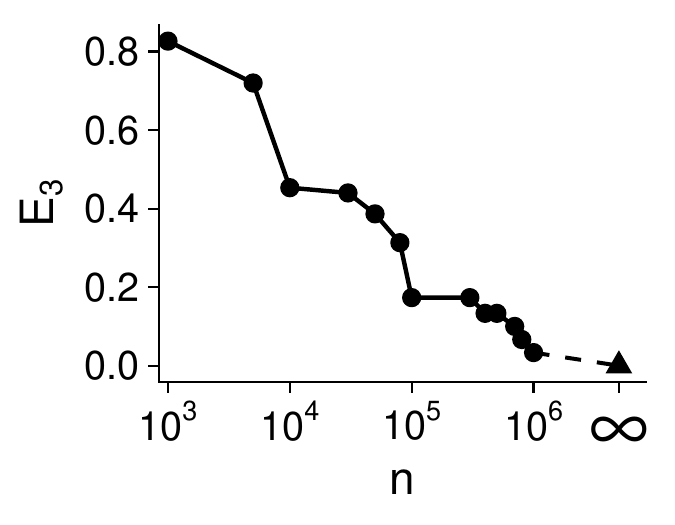}
        \caption{Support error of $\{\hat{Z}_k\}_{k=1}^K$.}
        
    \end{subfigure}
    \caption{Finite-sample performance under one graph setting A and laplace noise. As $n$ increases, global ancestral recovery improves while subspace and support errors decrease. The rightmost marker at $\infty$ corresponds to the population-level precision-matrix input.}
    \label{fig:setA_lap}
\end{figure}

\begin{figure}[h]
    \centering
    \begin{subfigure}[t]{0.32\linewidth}
        \centering
        \includegraphics[width=\linewidth]{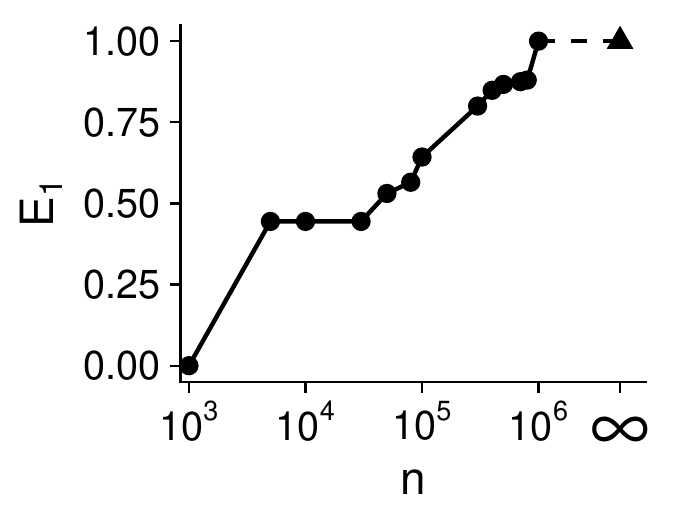}
        \caption{Global ancestral F1.}

    \end{subfigure}
    \hfill
    \begin{subfigure}[t]{0.32\linewidth}
        \centering
        \includegraphics[width=\linewidth]{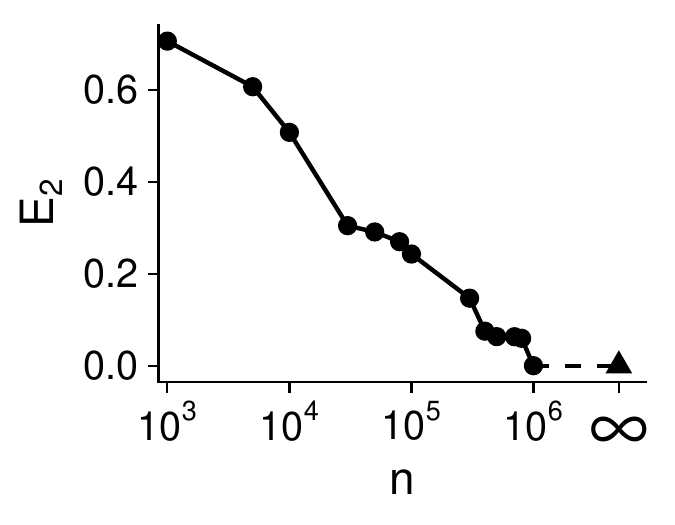}
        \caption{Subspace error of $\{\hat{\bQ}_k\}_{k=1}^K$.}

    \end{subfigure}
    \hfill
    \begin{subfigure}[t]{0.32\linewidth}
        \centering
        \includegraphics[width=\linewidth]{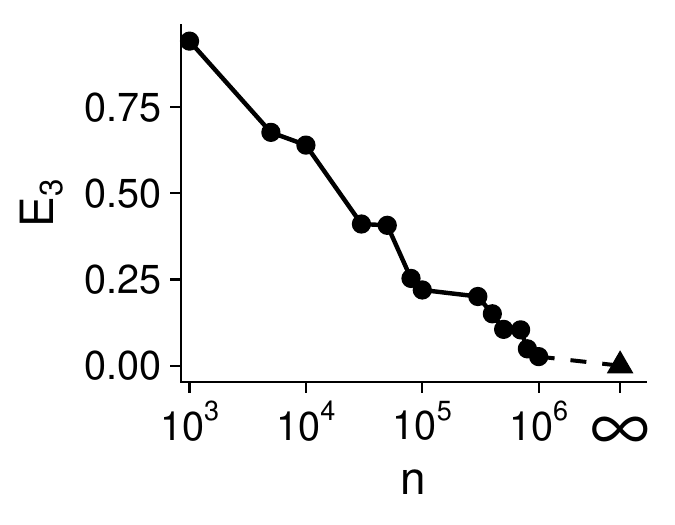}
        \caption{Support error of $\{\hat{Z}_k\}_{k=1}^K$.}
        
    \end{subfigure}
    \caption{Finite-sample performance under one graph setting B and gaussian noise. As $n$ increases, global ancestral recovery improves while subspace and support errors decrease. The rightmost marker at $\infty$ corresponds to the population-level precision-matrix input.}
    \label{fig:setB_gau}
\end{figure}

\begin{figure}[h]
    \centering
    \begin{subfigure}[t]{0.32\linewidth}
        \centering
        \includegraphics[width=\linewidth]{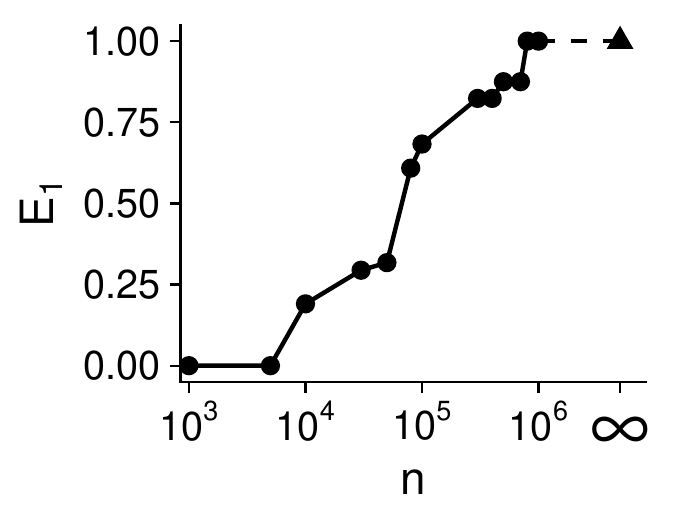}
        \caption{Global ancestral F1.}

    \end{subfigure}
    \hfill
    \begin{subfigure}[t]{0.32\linewidth}
        \centering
        \includegraphics[width=\linewidth]{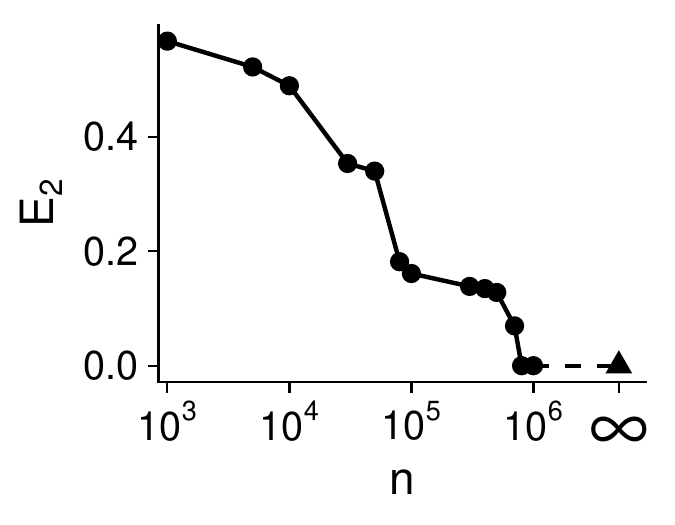}
        \caption{Subspace error of $\{\hat{\bQ}_k\}_{k=1}^K$.}

    \end{subfigure}
    \hfill
    \begin{subfigure}[t]{0.32\linewidth}
        \centering
        \includegraphics[width=\linewidth]{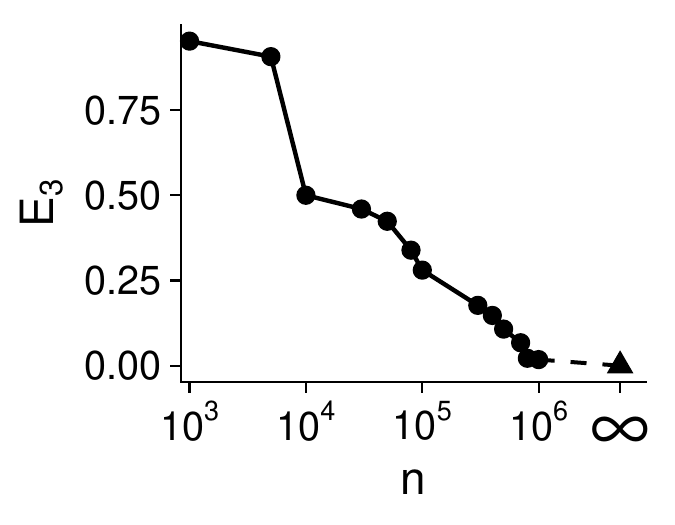}
        \caption{Support error of $\{\hat{Z}_k\}_{k=1}^K$.}
        
    \end{subfigure}
    \caption{Finite-sample performance under one graph setting B and laplace noise. As $n$ increases, global ancestral recovery improves while subspace and support errors decrease. The rightmost marker at $\infty$ corresponds to the population-level precision-matrix input.}
    \label{fig:setB_lap}
\end{figure}

\begin{figure}[h]
    \centering
    \begin{subfigure}[t]{0.32\linewidth}
        \centering
        \includegraphics[width=\linewidth]{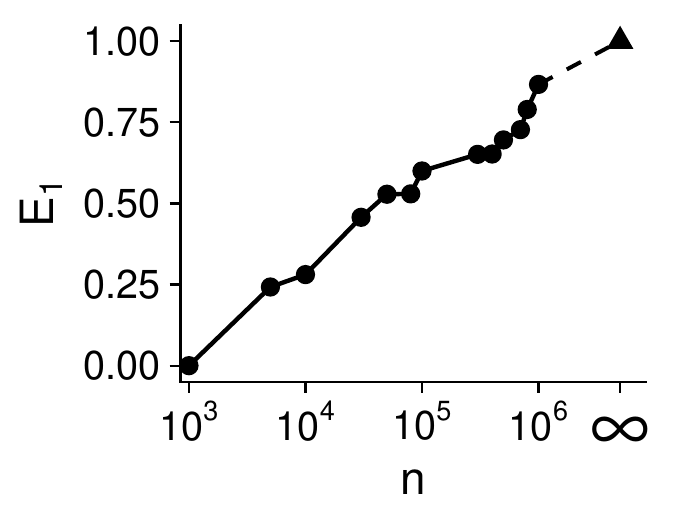}
        \caption{Global ancestral F1.}

    \end{subfigure}
    \hfill
    \begin{subfigure}[t]{0.32\linewidth}
        \centering
        \includegraphics[width=\linewidth]{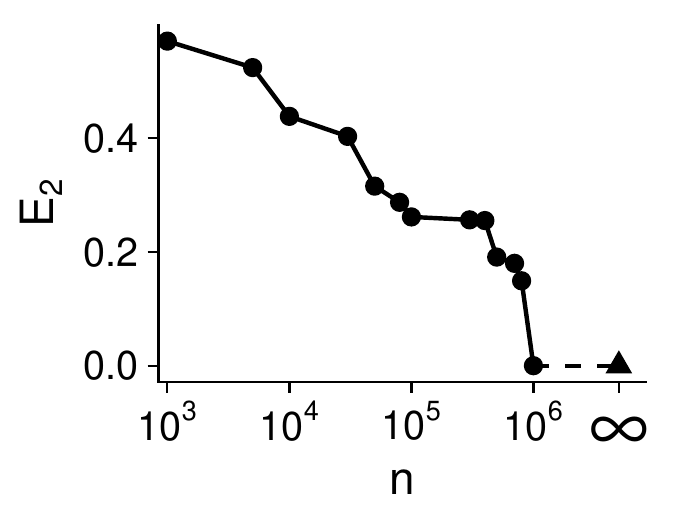}
        \caption{Subspace error of $\{\hat{\bQ}_k\}_{k=1}^K$.}

    \end{subfigure}
    \hfill
    \begin{subfigure}[t]{0.32\linewidth}
        \centering
        \includegraphics[width=\linewidth]{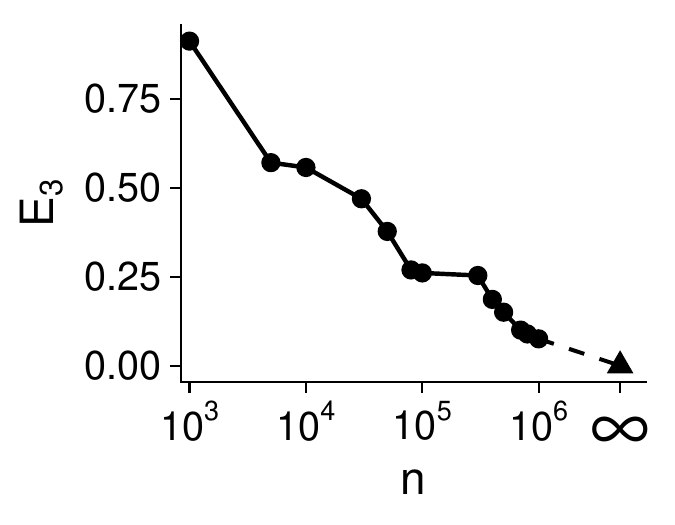}
        \caption{Support error of $\{\hat{Z}_k\}_{k=1}^K$.}
        
    \end{subfigure}
    \caption{Finite-sample performance under one graph setting C and gaussian noise. As $n$ increases, global ancestral recovery improves while subspace and support errors decrease. The rightmost marker at $\infty$ corresponds to the population-level precision-matrix input.}
    \label{fig:setC_gau}
\end{figure}

\begin{figure}[h]
    \centering
    \begin{subfigure}[t]{0.32\linewidth}
        \centering
        \includegraphics[width=\linewidth]{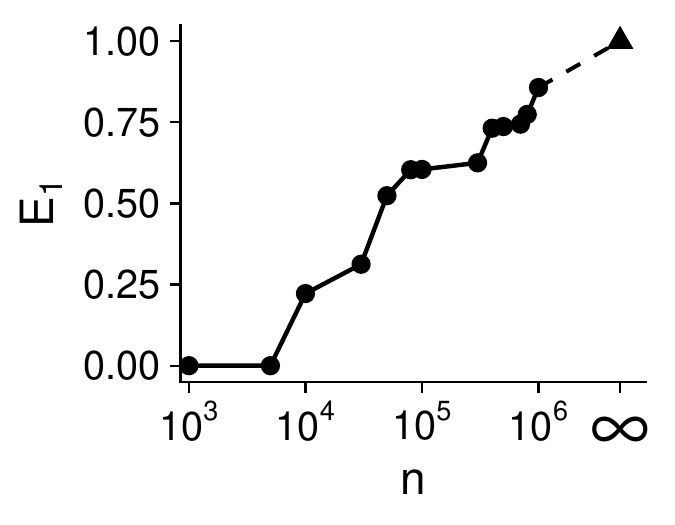}
        \caption{Global ancestral F1.}

    \end{subfigure}
    \hfill
    \begin{subfigure}[t]{0.32\linewidth}
        \centering
        \includegraphics[width=\linewidth]{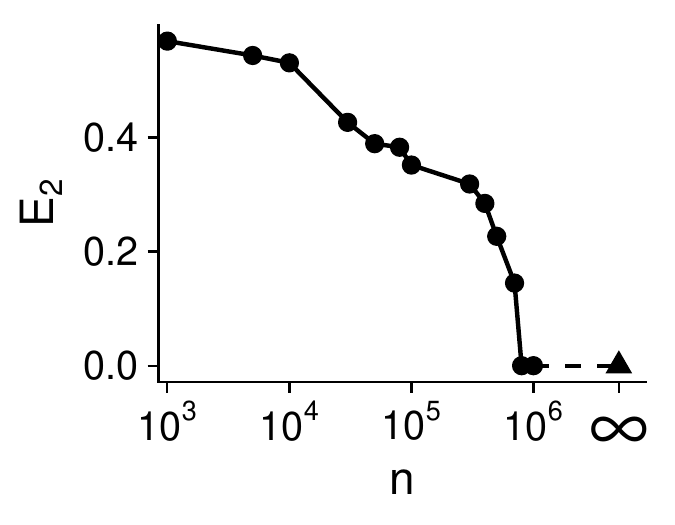}
        \caption{Subspace error of $\{\hat{\bQ}_k\}_{k=1}^K$.}

    \end{subfigure}
    \hfill
    \begin{subfigure}[t]{0.32\linewidth}
        \centering
        \includegraphics[width=\linewidth]{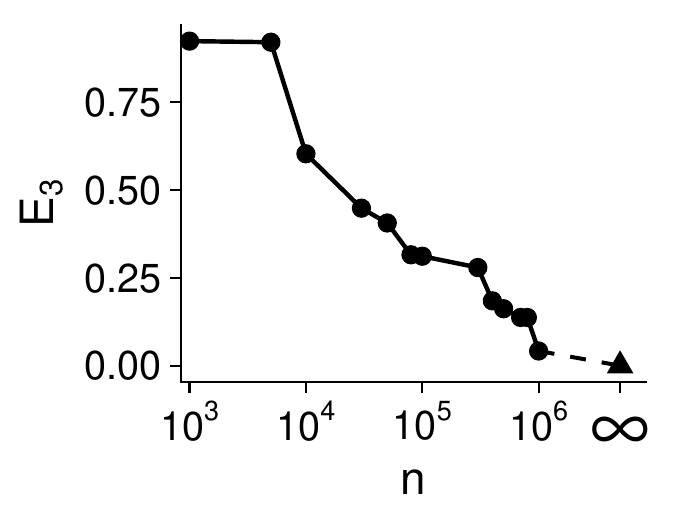}
        \caption{Support error of $\{\hat{Z}_k\}_{k=1}^K$.}
        
    \end{subfigure}
    \caption{Finite-sample performance under one graph setting C and laplace noise. As $n$ increases, global ancestral recovery improves while subspace and support errors decrease. The rightmost marker at $\infty$ corresponds to the population-level precision-matrix input.}
    \label{fig:setC_lap}
\end{figure}

\subsection{Sensitivity and scalability analysis}
\label{app:sensitivity_scalability}

We further evaluate sensitivity to the global latent dimension $p$, the number of clients $K$, and the client coverage pattern. For each $p$, we generate an Erd\H{o}s--R\'enyi DAG under a fixed topological order, include each admissible edge independently with probability $3/p$, and sample its coefficient from $\operatorname{Unif}[-1,1]$. Each node is independently included in $O_k$ with probability $q$, and the observation dimension is set to $n_k=\lceil 1.5p_k\rceil$, where $p_k=|O_k|$. The remaining parameters follow the simulation protocol above. Because $p_k$, cross-client overlap, and the maximum induced block size are jointly determined by the missingness pattern, we use $q$ as the coverage control: $\mathbb E[p_k]=qp$ and $\mathbb E[|O_k\cap O_\ell|]=q^2p$, while larger $q$ generally reduces marginalization-induced block ambiguity.

We conduct three controlled experiments: (i) $p\in\{10,20,30,40\}$ with $K=15$ and $q=0.6$; (ii) $K\in\{5,10,15,20\}$ with $p=20$ and $q=0.6$; and (iii) $q\in\{0.5,0.6,0.7,0.8\}$ with $p=20$ and $K=15$. We report $E_1,E_2,E_3$ for $n\in\{10^4,5\times10^4,10^5,5\times10^5,10^6,\infty\}$ in Tables~\ref{tab:sensitivity_p}--\ref{tab:sensitivity_q}. Runtime is measured from population precision-matrix inputs and therefore excludes data generation and precision estimation.

\begin{table}[t]
\centering
\caption{Sensitivity to the global latent dimension $p$ ($K=15$, $q=0.6$). Runtime is measured at $n=\infty$.}
\label{tab:sensitivity_p}
\small
\setlength{\tabcolsep}{6pt}
\begin{tabular}{ccrrrrrr}
\toprule
$p$ & Metric & $10^4$ & $5\!\times\!10^4$ & $10^5$ &
$5\!\times\!10^5$ & $10^6$ & $\infty$ \\
\midrule
10 & $E_1$ & 0.19 & 0.45 & 0.56 & 0.58 & 0.81 & 1.00 \\
   & $E_2$ & 0.32 & 0.30 & 0.24 & 0.14 & 0.00 & 0.00 \\
   & $E_3$ & 0.57 & 0.39 & 0.25 & 0.18 & 0.07 & 0.00 \\
   & Runtime (s) & -- & -- & -- & -- & -- & 0.17 \\
\cmidrule(lr){1-8}
20 & $E_1$ & 0.17 & 0.49 & 0.50 & 0.62 & 0.85 & 0.93 \\
   & $E_2$ & 0.45 & 0.28 & 0.27 & 0.18 & 0.10 & 0.00 \\
   & $E_3$ & 0.48 & 0.38 & 0.27 & 0.14 & 0.04 & 0.01 \\
   & Runtime (s) & -- & -- & -- & -- & -- & 0.44 \\
\cmidrule(lr){1-8}
30 & $E_1$ & 0.17 & 0.38 & 0.47 & 0.55 & 0.71 & 0.82 \\
   & $E_2$ & 0.49 & 0.38 & 0.30 & 0.11 & 0.08 & 0.03 \\
   & $E_3$ & 0.50 & 0.42 & 0.31 & 0.09 & 0.05 & 0.01 \\
   & Runtime (s) & -- & -- & -- & -- & -- & 2.67 \\
\cmidrule(lr){1-8}
40 & $E_1$ & 0.13 & 0.41 & 0.45 & 0.55 & 0.69 & 0.77 \\
   & $E_2$ & 0.38 & 0.34 & 0.25 & 0.20 & 0.07 & 0.05 \\
   & $E_3$ & 0.49 & 0.37 & 0.24 & 0.14 & 0.06 & 0.02 \\
   & Runtime (s) & -- & -- & -- & -- & -- & 4.23 \\
\bottomrule
\end{tabular}
\end{table}

\begin{table}[t]
\centering
\caption{Sensitivity to the number of clients $K$ ($p=20$, $q=0.6$). Runtime is measured at $n=\infty$.}
\label{tab:sensitivity_K}
\small
\setlength{\tabcolsep}{6pt}
\begin{tabular}{ccrrrrrr}
\toprule
$K$ & Metric & $10^4$ & $5\!\times\!10^4$ & $10^5$ &
$5\!\times\!10^5$ & $10^6$ & $\infty$ \\
\midrule
5  & $E_1$ & 0.13 & 0.18 & 0.19 & 0.24 & 0.32 & 0.34 \\
   & $E_2$ & 0.42 & 0.34 & 0.22 & 0.18 & 0.10 & 0.00 \\
   & $E_3$ & 0.58 & 0.48 & 0.34 & 0.11 & 0.00 & 0.00 \\
   & Runtime (s) & -- & -- & -- & -- & -- & 0.25 \\
\cmidrule(lr){1-8}
10 & $E_1$ & 0.15 & 0.44 & 0.46 & 0.55 & 0.76 & 0.81 \\
   & $E_2$ & 0.36 & 0.28 & 0.27 & 0.14 & 0.10 & 0.00 \\
   & $E_3$ & 0.48 & 0.39 & 0.28 & 0.13 & 0.08 & 0.00 \\
   & Runtime (s) & -- & -- & -- & -- & -- & 0.37 \\
\cmidrule(lr){1-8}
15 & $E_1$ & 0.17 & 0.49 & 0.50 & 0.62 & 0.85 & 0.93 \\
   & $E_2$ & 0.45 & 0.28 & 0.27 & 0.18 & 0.10 & 0.00 \\
   & $E_3$ & 0.48 & 0.38 & 0.27 & 0.14 & 0.04 & 0.01 \\
   & Runtime (s) & -- & -- & -- & -- & -- & 0.44 \\
\cmidrule(lr){1-8}
20 & $E_1$ & 0.18 & 0.52 & 0.54 & 0.67 & 0.92 & 1.00 \\
   & $E_2$ & 0.50 & 0.29 & 0.21 & 0.14 & 0.05 & 0.00 \\
   & $E_3$ & 0.47 & 0.36 & 0.25 & 0.17 & 0.04 & 0.00 \\
   & Runtime (s) & -- & -- & -- & -- & -- & 0.49 \\
\bottomrule
\end{tabular}
\end{table}

\begin{table}[t]
\centering
\caption{Sensitivity to the client coverage probability $q$ ($p=20$, $K=15$). Runtime is measured at $n=\infty$.}
\label{tab:sensitivity_q}
\small
\setlength{\tabcolsep}{6pt}
\begin{tabular}{ccrrrrrr}
\toprule
$q$ & Metric & $10^4$ & $5\!\times\!10^4$ & $10^5$ &
$5\!\times\!10^5$ & $10^6$ & $\infty$ \\
\midrule
0.5 & $E_1$ & 0.12 & 0.46 & 0.46 & 0.57 & 0.75 & 0.82 \\
    & $E_2$ & 0.55 & 0.36 & 0.29 & 0.15 & 0.08 & 0.00 \\
    & $E_3$ & 0.47 & 0.39 & 0.18 & 0.10 & 0.08 & 0.00 \\
    & Runtime (s) & -- & -- & -- & -- & -- & 0.53 \\
\cmidrule(lr){1-8}
0.6 & $E_1$ & 0.17 & 0.49 & 0.50 & 0.62 & 0.85 & 0.93 \\
    & $E_2$ & 0.45 & 0.28 & 0.27 & 0.18 & 0.10 & 0.00 \\
    & $E_3$ & 0.48 & 0.38 & 0.27 & 0.14 & 0.04 & 0.01 \\
    & Runtime (s) & -- & -- & -- & -- & -- & 0.44 \\
\cmidrule(lr){1-8}
0.7 & $E_1$ & 0.18 & 0.54 & 0.54 & 0.73 & 0.91 & 1.00 \\
    & $E_2$ & 0.44 & 0.29 & 0.22 & 0.14 & 0.10 & 0.00 \\
    & $E_3$ & 0.49 & 0.36 & 0.26 & 0.15 & 0.00 & 0.00 \\
    & Runtime (s) & -- & -- & -- & -- & -- & 0.82 \\
\cmidrule(lr){1-8}
0.8 & $E_1$ & 0.17 & 0.52 & 0.59 & 0.69 & 0.94 & 1.00 \\
    & $E_2$ & 0.41 & 0.32 & 0.27 & 0.18 & 0.10 & 0.00 \\
    & $E_3$ & 0.46 & 0.38 & 0.27 & 0.09 & 0.00 & 0.00 \\
    & Runtime (s) & -- & -- & -- & -- & -- & 0.77 \\
\bottomrule
\end{tabular}
\end{table}

Across all settings, recovery improves with the sample size: $E_1$ increases, while $E_2$ and $E_3$ generally decrease. Because the random client subsets do not enforce Assumption~\ref{ass: global cover edge}, global recovery can remain incomplete even at $n=\infty$; for example, the population $E_1$ decreases from $1.00$ to $0.77$ as $p$ increases from $10$ to $40$. Nevertheless, the population client-level errors remain small, with $E_2\leq 0.05$ and $E_3\leq 0.02$ across these values of $p$, confirming that incomplete coverage primarily affects global assembly rather than client-level recovery.

Increasing either the number of clients or their coverage improves global recovery. At $n=\infty$, $E_1$ increases from $0.34$ to $1.00$ as $K$ increases from $5$ to $20$, and from $0.82$ to $1.00$ as $q$ increases from $0.5$ to $0.8$, consistent with Assumption~\ref{ass: global cover edge} being satisfied more frequently. Runtime increases from $0.17$ to $4.23$ seconds as $p$ grows from $10$ to $40$, but only from $0.25$ to $0.49$ seconds as $K$ grows from $5$ to $20$. This agrees with the fact that computation depends more strongly on the client-level dimensions and block search, while its dependence on $K$ is at most linear. Runtime is not necessarily monotone in $q$: increasing $q$ enlarges $p_k$ and $n_k$, but generally reduces the induced block sizes and hence the combinatorial block search.

\end{document}